%% file: main.tex
\documentclass{article}
\usepackage{iclr2027_conference,times}
\usepackage[T1]{fontenc} %

\input{math_commands.tex}

\usepackage{amsmath}
\usepackage{amssymb}
\usepackage{amsthm}
\usepackage{booktabs}
\usepackage{graphicx}
\usepackage{wrapfig}
\usepackage{flafter}
\usepackage{placeins}
\usepackage{needspace}
\usepackage{multirow}
\usepackage{arydshln}
\usepackage{xcolor}
\usepackage{pifont}
\usepackage{colortbl}
\usepackage{listings}
\usepackage{algorithm}
\usepackage{algorithmic}
\usepackage{hyperref}
\usepackage{etoc} %
\usepackage{url}
\usepackage{tikz}
\usepackage[skins]{tcolorbox}
\usetikzlibrary{arrows.meta,positioning,fit,calc,backgrounds}

\definecolor{bridgebluebase}{RGB}{55,165,255}
\colorlet{bilevelblue}{bridgebluebase!20!white}
\definecolor{scoreaccent}{RGB}{0,150,95}
\newtcolorbox{findingbox}[1]{%
  colback=bridgebluebase!5!white,
  colframe=bridgeaction,
  colbacktitle=bridgeaction,
  coltitle=white,
  fonttitle=\fontsize{9.5}{11}\selectfont\bfseries,
  fontupper=\normalsize,
  title={\vphantom{\normalsize #1}#1},
  toptitle=3pt,bottomtitle=3pt,
  boxrule=0.6pt,arc=1mm,
  boxsep=0pt,left=7pt,right=7pt,top=5pt,bottom=5pt,
  before skip=8pt,after skip=7pt,
  before upper={\setlength{\parindent}{0pt}\setlength{\parskip}{0pt}}
}
\newcommand{\scorecell}[4]{%
  \pgfmathparse{min(80,max(0,80*(#3-#1)/max(#2-#1,0.0001)))}%
  \edef\scorefill{\noexpand\cellcolor{scoreaccent!\pgfmathresult!white}}%
  \scorefill#4%
}
\newcommand{\scorebox}[4]{%
  \begingroup
  \pgfmathparse{min(80,max(0,80*(#3-#1)/max(#2-#1,0.0001)))}%
  \setlength{\fboxsep}{0pt}%
  \colorbox{scoreaccent!\pgfmathresult!white}{\kern1pt\strut#4\kern1pt}%
  \endgroup
}
\definecolor{tagthink}{RGB}{35,75,210}
\definecolor{tagsearch}{RGB}{0,140,210}
\definecolor{taginfo}{RGB}{245,130,32}
\definecolor{taganswer}{RGB}{200,35,95}
\definecolor{casegreen}{RGB}{224,246,232}
\definecolor{casered}{RGB}{252,232,232}
\definecolor{bridgeaction}{RGB}{48,98,180}
\definecolor{bridgeretrieval}{RGB}{222,124,45}
\definecolor{bridgereward}{RGB}{57,145,92}
\definecolor{bridgepanel}{RGB}{247,248,250}
\definecolor{checkgreen}{RGB}{0,140,0}
\definecolor{crossred}{RGB}{210,0,40}

\newcommand{\xmark}{{\color{crossred}\ding{55}}}

\lstdefinestyle{trajectory}{
  basicstyle=\ttfamily\fontsize{9pt}{9.5pt}\selectfont,
  breaklines=true,
  columns=fullflexible,
  keepspaces=true,
  showstringspaces=false,
  aboveskip=0.04em,
  belowskip=0.04em,
  extendedchars=true,
  escapeinside={(*@}{@*)},
  alsoletter={<>/},
  emph={[1]<think>,</think>},
  emphstyle={[1]\color{tagthink}},
  emph={[2]<search>,</search>},
  emphstyle={[2]\color{tagsearch}},
  emph={[3]<information>,</information>},
  emphstyle={[3]\color{taginfo}},
  emph={[4]<answer>,</answer>},
  emphstyle={[4]\color{taganswer}},
  literate={—}{{---}}1 {–}{{-}}1 {…}{{...}}3 {æ}{{ae}}2 {í}{{\'i}}1
           {ü}{{\"u}}1 {Ü}{{\"U}}1 {é}{{\'e}}1 {ø}{{o}}1 {ä}{{\"a}}1
           {東}{{Dong}}4 {野}{{ye}}2 {家}{{jia}}3 {族}{{zu}}2 {大}{{Da}}2
           {宗}{{zong}}4 {世}{{shi}}3 {系}{{xi}}2 {翁}{{Weng}}4 {成}{{Cheng}}5
           {湯}{{Tang}}4 {乙}{{Yi}}2
}

\newcommand{\caseRule}{\par\vspace{0.03em}\noindent\rule{\linewidth}{0.35pt}\par\vspace{0.03em}}
\newcommand{\hit}[1]{{\color{red}\texttt{#1}}}
\newcommand{\caseHeader}[2]{%
  \par\vspace{0.03em}%
  {\setlength{\fboxsep}{1.5pt}\noindent\colorbox{#1}{\parbox{\dimexpr\linewidth-2\fboxsep\relax}{\textbf{#2}}}}%
  \par\vspace{0.03em}}
\newcommand{\caseCaption}[2]{%
  \par\vspace{0.28em}\refstepcounter{table}\label{#1}%
  \noindent\parbox{\linewidth}{\centering\textbf{Table~\thetable:} #2}%
  \par\vspace{0.04em}}
\newenvironment{caseStudy}{%
  \par\vspace{0.25em}\noindent\begin{minipage}{\linewidth}%
}{%
  \end{minipage}\par\vspace{0.35em}%
}


\newcommand{\method}{BRIDGE}
\newcommand{\methodfull}{BRIDGE: Bilevel Retrieval-Credit-Aware Agentic Reinforcement Learning}

\newcommand{\methodlogo}{%
  \raisebox{-0.11em}{\includegraphics[height=0.90em]{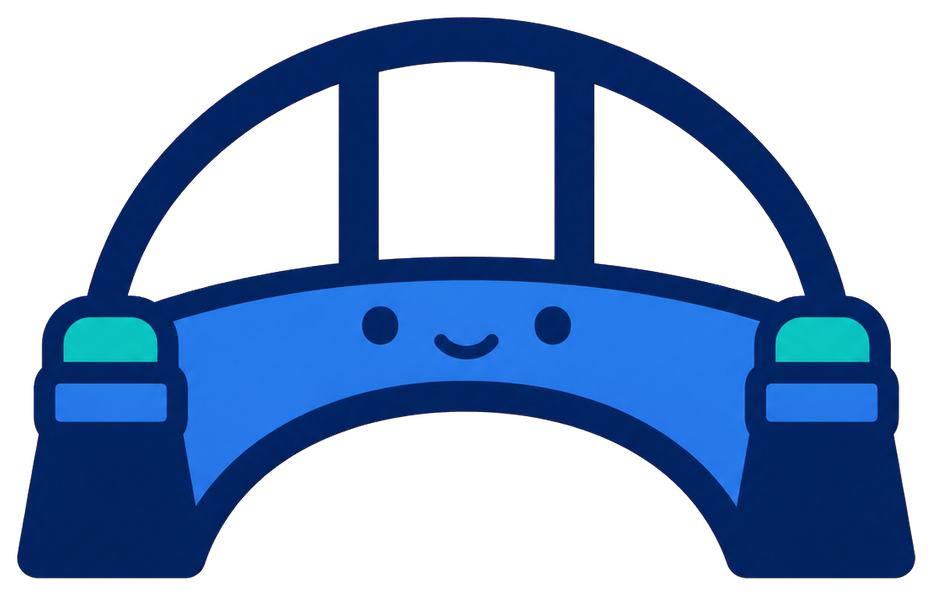}}%
}

\title{\texorpdfstring{%
  \centering
  \methodlogo\hspace{0.24em}BRIDGE: Bilevel Retrieval-Credit-Aware\\
  Agentic Reinforcement Learning%
}{\methodfull}}

\author{
Quan Xiao$^{1}$ \quad
Mingda Liu$^{2}$ \quad
Gaowen Liu$^{3}$ \quad
Katsuki Fujisawa$^{2}$ \quad
Tianyi Chen$^{1}$ \\[3pt]
$^{1}$Department of Electrical and Computer Engineering, Cornell University \\
$^{2}$Institute of Integrated Research, Institute of Science Tokyo \\
$^{3}$Cisco Research \\[2pt]
{\small\texttt{\{qx232,tianyi.chen\}@cornell.edu \quad puppetsasya@gmail.com}} \\
{\small\texttt{gaoliu@cisco.com \quad fujisawa.k.2110@m.isct.ac.jp}} \\[8pt]
{\normalfont\small\textbf{Project website:} \url{https://jenniferquanxiao.github.io/BRIDGE/}}
}

\newcommand{\corpus}{\mathcal{C}}
\newcommand{\Dtrain}{\mathcal{D}_{\mathrm{tr}}}

\newcommand{\retr}{\rho_{\eta}}
\newcommand{\policy}{\pi_{\theta}}
\newcommand{\refpol}{\pi_{\mathrm{ref}}}
\newcommand{\traj}{\tau}
\newcommand{\oldpol}{\pi_{\mathrm{old}}}
\newcommand{\Jgrpo}{\mathcal{J}_{\mathrm{GRPO}}}
\newcommand{\grpoobj}{\Jgrpo}

\newcommand{\Jrl}{\mathcal{J}_{\mathrm{RL}}}
\newcommand{\Lrag}{\mathcal{L}_{\mathrm{RAG}}}
\newcommand{\Cand}{\mathcal{N}_k}

\newcommand{\etaold}{\eta_{\mathrm{old}}}

\theoremstyle{plain}
\newtheorem{assumption}{Assumption}
\newtheorem{theorem}{Theorem}
\newtheorem{corollary}{Corollary}
\theoremstyle{definition}

\newif\ifshowrevisions
\showrevisionstrue

\iclrfinalcopy
\begin{document}
\etocdepthtag.toc{main}

\maketitle
\lhead{Preprint}

\begin{abstract}
Agentic reinforcement learning (ARL) with verifiable rewards improves the ability of large language models (LLMs) to tackle knowledge-intensive tasks by learning to interleave search and reasoning. However, most existing ARL methods optimize only LLM-generated tokens and treat retrieved evidence as environment observations. {This creates an \emph{information-credit gap}: failures caused by missing or misleading evidence are attributed to the LLM policy rather than to the retriever, which motivates training the LLM and the retriever jointly.} In this paper, we show that retrieval and LLM policy learning are order-sensitive: adapting the retriever before optimizing the policy yields a larger reward gain than the reverse order. To preserve this hierarchy while allowing both components to co-adapt, we formulate retrieval-augmented agentic RL as a bilevel optimization problem. To solve it efficiently, we introduce \method{}, a memory-efficient first-order bilevel method motivated by a loss-landscape analysis of the RL and retrieval objectives. Across seven open-domain QA benchmarks, \method{} achieves the highest average accuracy with both 3B and 7B backbones, improving the multi-hop average over the strongest baseline by 9.6 and 3.4 EM points, respectively. It also achieves the best averaged answer accuracy and reasoning quality across medical QA benchmarks.
\end{abstract}

\section{Introduction}

Agentic reinforcement learning (ARL) trains large language models (LLMs) to
solve tasks through multi-turn interactions with external tools, which greatly
improves the ability of LLMs to handle complex, knowledge-intensive tasks \citep{xie2026quest}.
In retrieval-augmented question answering, an
agent alternates between reasoning, search actions, retrieved observations, and
answer generation, and the whole trajectory can be optimized via RL from a sparse
verifiable reward {\citep{jin2025searchr1,song2025r1searcher}}.
Through ARL, the LLM can learn when to invoke external retrieval and how to use the retrieved information without dense supervision \citep{jin2025searchr1,jiang2025verltool}.

However, most ARL methods optimize the LLM only on {LLM-generated} action tokens {(reasoning, search, and answer tokens), while retrieved information tokens serve as environment observations and are masked out of the RL loss \citep{jin2025searchr1,team2025kimi}}. This creates an \emph{information-credit gap}: when a rollout fails because the retrieved evidence is missing or misleading, {the RL update penalizes the LLM's actions around that evidence instead of improving retrieval, which} may degrade {the reasoning performance and generalization} of the learned policy. This gap is especially severe in multi-hop question answering \citep{trivedi2023interleaving}, where the {evidence is spread across} documents and the LLM can learn
reasoning only after retrieval exposes all of them.

\begin{figure}[t]
\centering
\begin{minipage}[t]{0.49\linewidth}
    \centering
    \includegraphics[width=\linewidth]{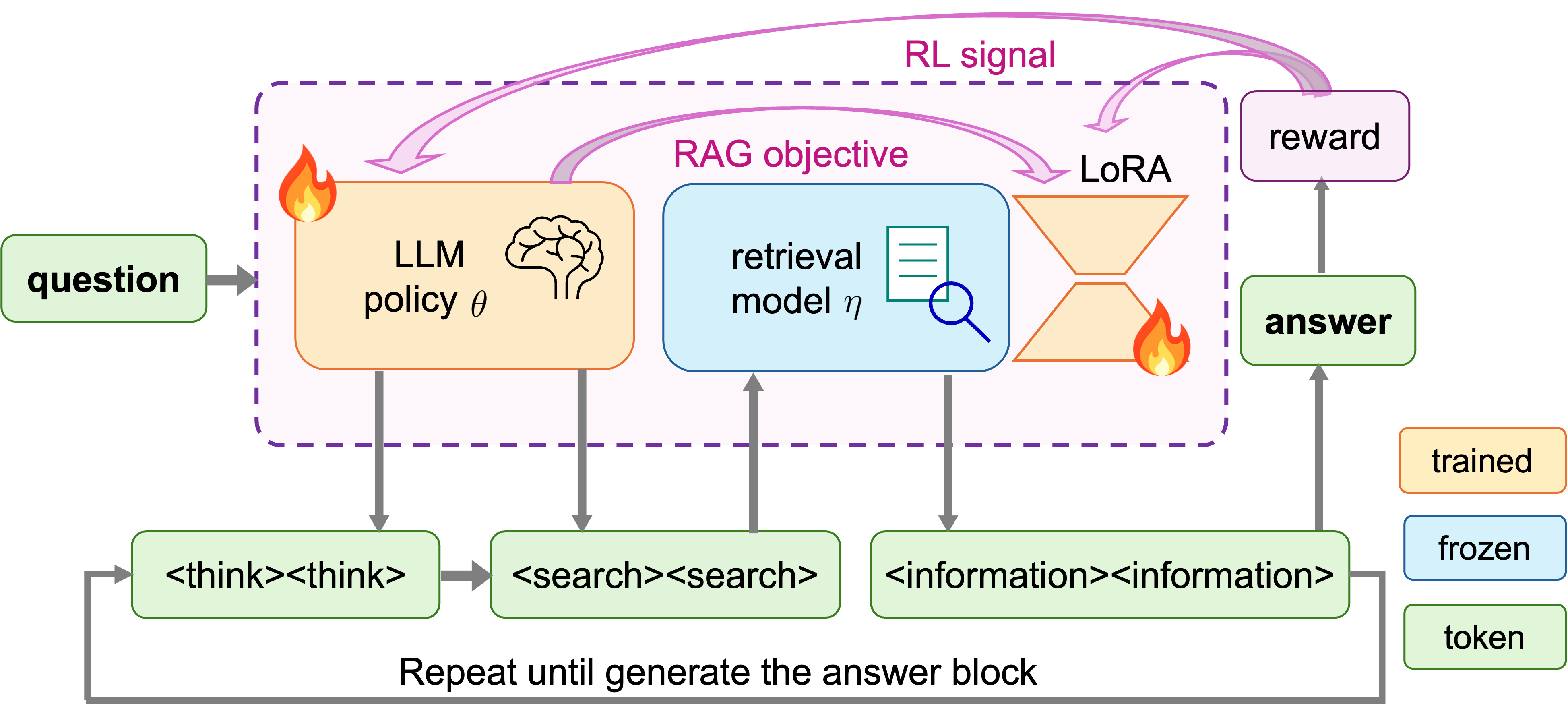}
    \vspace{0.1em}
    {\small \textbf{(a)} BRIDGE framework}
\end{minipage}
\hfill
\begin{minipage}[t]{0.5\linewidth}
    \centering
    \includegraphics[width=\linewidth]{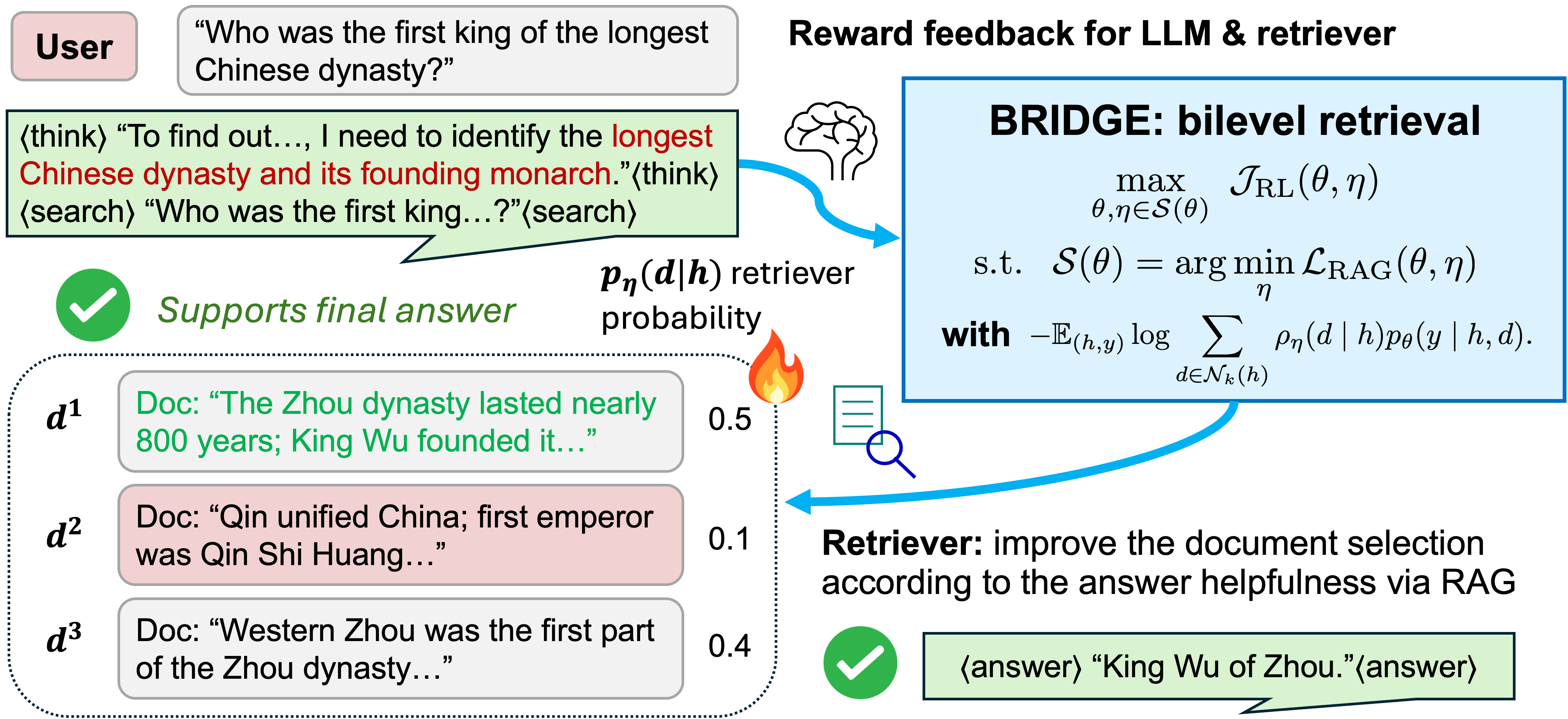}
    \vspace{0.1em}
    {\small \textbf{(b)} Bilevel optimization formulation}
\end{minipage}
\vspace{-0.2em}
\caption{\textbf{Overview of \method{}.}
\textbf{(a)} \method{} couples the LLM policy and the query-side retriever
adapter during multi-turn tool-use rollouts: the LLM generates reasoning and
search actions, the retriever supplies information, and the final reward
provides feedback to both components.
\textbf{(b)} The bilevel view treats answer-level RL as the upper objective and
RAG likelihood as the retrieval objective, so the retriever is updated toward
evidence that improves downstream answer generation.}
\label{fig:bridge_overview}
\vspace{0pt}
\end{figure}

Other lines of work either adapt the retriever to a frozen generator to improve document selection \citep{jiang2025s3}, or jointly optimize the retriever and the LLM policy \citep{chen2026improving,liu2026agenticr}. However, neither considers the ordered dependency between retrieval and policy learning. In this paper, we argue that {retrieval and LLM policy optimization have a natural hierarchical order: the retriever should first be optimized to make useful evidence available, and the LLM should then learn to search and reason with that evidence}.

Moreover, instead of {a one-shot retriever update} followed by RL training, we formulate retrieval-augmented agentic RL as a bilevel optimization problem that {co-adapts retrieval and LLM parameters hierarchically} throughout training. For each {LLM policy $\theta$, we seek the retrieval parameter $\eta$ that best supports answer generation, and then update the policy via RL on rollouts that use this retriever, with the verifiable outcome reward increasing} the likelihood of successful rollouts; see Figure~\ref{fig:bridge_overview}(b) for an overview.

{However, most existing bilevel methods are either second-order methods based on conjugate gradient, unrolling, or implicit differentiation, which are computationally expensive} \citep{chen2021closing,ji2021bilevel,lorraine2020optimizing,dontchev2014implicit}, or {first-order methods that must optimize and store two lower-level models to track the solutions of the original and the perturbed lower-level problems} \citep{kwon2023fully,shen2023penalty,zucchet2022beyond}, which is {memory-intensive for retriever adaptation. To reduce this memory burden, recent work treats the lower-level optimal value as nearly constant in $\theta$ and therefore avoids maintaining a second lower-level model} \citep{xiao2025ldc,shen2024seal}. However, we {show} that this assumption is violated in our setting, where the optimal retrieval objective value is sensitive to {the} LLM parameters.

Guided by a landscape analysis and motivated by an efficient bilevel solver \citep{jiang2026beyond}, we introduce {\method{}}, a memory-efficient first-order bilevel method for retrieval-augmented ARL. As shown in Figure \ref{fig:bridge_overview}, \method{} updates a query-side low-rank adaptation (LoRA) retriever adapter
with both trajectory-level RL credit at retrieval turns and a
RAG likelihood term that scores whether retrieved evidence supports the target
answer, while the LLM is updated {with trajectory-level RL credit on rollouts that contain} the information returned by the current {retriever}.

We summarize our main contributions below.

\textbf{C1)} We identify the information-credit gap caused by {a fixed retriever} in agentic RL and the {hierarchical order} between retrieval and LLM optimization. To {jointly} improve retrieval quality and the LLM's reasoning and search {capabilities}, we formulate multi-turn retrieval-augmented ARL as a bilevel optimization problem.

\textbf{C2)} {Guided by} landscape analysis, we propose \method{}, a memory-efficient first-order bilevel algorithm that updates the retrieval LoRA parameters using retrieval-turn RL credit and RAG likelihood, and then optimizes the LLM policy {with} RL credit. {Relative to the two-state first-order estimator, \method{} reduces the peak memory of each retriever update by $49.7\%$, while its gradient estimate stays closely aligned with the two-state estimate.}

\textbf{C3)} We validate \method{} {on} seven open-domain and five medical QA
benchmarks. {On the open-domain benchmarks, it improves the overall average EM over the strongest baseline, VT-Search, by 7.1 and 2.3 points with 3B and 7B backbones; on the medical benchmarks, it achieves the highest average answer accuracy and reasoning quality.}

\vspace{-0.1cm}
\subsection{Related Work}
\vspace{-0.1cm}

\noindent\textbf{Agentic RL with tool use.}
Tool-using LLM agents learn to interleave reasoning with external actions such
as search, code execution, or API calls. Early agentic prompting and tool-use
frameworks show that explicit reasoning-action traces can improve tool
interaction \citep{yao2023react, schick2023toolformer}. Recent ARL methods train
this behavior with verifiable rewards, including search-augmented RL systems
and general tool-use RL frameworks
\citep{li2025searcho1,jin2025searchr1,song2025r1searcher,jiang2025s3,jiang2025verltool,li2026dart}, and {have been extended to modular agentic systems},
where the planner is optimized via ARL {within} the multi-turn interaction
\citep{li2025agentflow}. Building on this foundation, closely related work extends LLM policy-side ARL to jointly optimize the LLM policy and {the} retriever \citep{chen2026improving,liu2026agenticr}. Our work complements these efforts by introducing a bilevel formulation that explicitly captures the {hierarchical} dependency {between} policy training and retriever adaptation, and preserves this hierarchy during joint training.

\noindent\textbf{Retrieval-augmented generation.}
Retrieval-augmented generation (RAG) improves knowledge-intensive generation by
conditioning the generator on information retrieved by the retrieval model from an external datastore
\citep{lewis2021rag}. A common line of work keeps the retriever model
fixed and adapts the LLM to retrieved evidence, through
domain-specific instruction tuning, search-augmented instruction learning, or
RL fine-tuning with retrieval curricula
\citep{zhang2024raft,luo2023sail,huang2025ragrl}. A second line keeps the LLM fixed and
optimizes retrieval or query-side behavior so that retrieved passages better
support the frozen generator \citep{shi2023replug,ma2023query,ram2023context}. Recent work explores RAG optimization through dual instruction tuning,
environment feedback, LLM-guided dynamic reranking, and joint optimization
of document selection and answer generation
\citep{lin2024radit,gao2025smartrag,sun2025dynamicrag,shi2025directrao}. These methods improve
retrieval-conditioned generation, but they do not address the
multi-turn RL setting where search queries and final answers are induced by an evolving policy.

\vspace{-0.15cm}
\section{Preliminaries: Multi-turn Tool-use RL and RAG}
\vspace{-0.1cm}

\subsection{Agentic RL with tool use}

In the multi-turn agentic RL setting, given a
prompt from the training distribution $x \sim \Dtrain$, the LLM generates the rollout trajectory as $\traj=(
    {a_1},\,
    {o_1},\,
    \ldots,\,
    a_{T-1},\,
    o_{T-1},\,
    a_T)$,
where $a_t$ denotes {the $t$-th segment of} LLM-generated action tokens {and} $o_t$ denotes the environment tokens returned by the external tool server. {Actions and retrieved information are wrapped in special tokens.} In search tasks, reasoning, search calls, and final answers are wrapped by $\texttt{<think>}$, $\texttt{<search>}$, and $\texttt{<answer>}$, respectively. When triggered by the search calls, external tools retrieve documents and wrap the relevant information in {an} $\texttt{<information>}$ block as the observation tokens $o_t$.

\noindent\textbf{Multi-turn RL with verifiable rewards.} In agentic RL, the rollout trajectory $\tau$ for the prompt $x$ is multi-turn and contains both {LLM-generated} action tokens $a_t$ and {retriever-generated} observation tokens $o_t$. Therefore, the distribution of the rollout trajectory
$\tau$ depends on both LLM parameters $\theta$ and the retrieval parameters $\eta$. Given the retrieval model $\eta$, in order to learn the LLM policy parameterized by $\theta$, we leverage reinforcement learning
with verifiable rewards to maximize a predefined scalar reward with a KL regularization term to prevent reward over-optimization
\begin{align}
    {\Jrl(\theta,\eta)}
    :=
    -\E_{x \sim \Dtrain,\, \traj \sim \policy(\cdot \mid x;\eta)}
    \left[
        R(x,\traj)
        -
        \beta \KL(\policy \,\|\, \refpol)
    \right],
    \label{eq:jrl}
\end{align}
where $\refpol$ denotes a reference LLM and
$\policy(\cdot \mid x;\eta)$ is the LLM policy under the retrieval model $\eta$. Following {\citet{jin2025searchr1}}, we use a binary exact-match outcome reward $R(x,\traj)$, which measures the correctness of the generated answer $\hat y(\traj)$ compared with the ground-truth answer $y^*(x)$. Specifically, $R(x,\traj)=1$ if $\hat y(\traj)$ is correct, and $R(x,\traj)=0$ otherwise.

\begin{figure}[!t]
\centering
\begin{tabular}{@{}c@{\hspace{0.04in}}c@{\hspace{0.04in}}c@{}}
\includegraphics[height=1.53in]{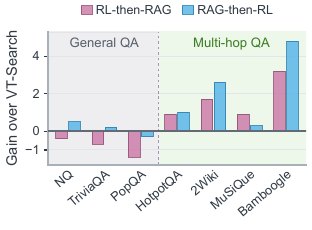} &
\includegraphics[height=1.53in]{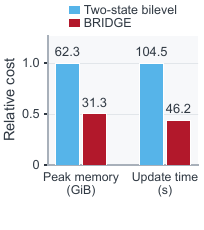} &
\includegraphics[height=1.53in]{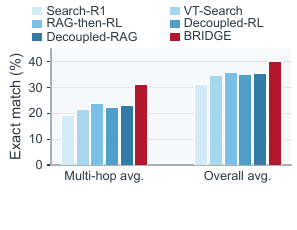} \\[0.01em]
{\small \textbf{(a)} Optimization order} &
{\small \textbf{(b)} Retriever-update cost} &
{\small \textbf{(c)} Component ablation}
\end{tabular}
\vspace{-0.3cm}
\caption{\textbf{Optimization order, update cost, and component-wise ablations.}
(a) Per-dataset EM gains over VT-Search, in percentage points, for the two {orders of sequential retriever and LLM tuning}.
(b) Mean peak allocated memory and time per retriever $\eta$ update, compared with the two-state bilevel estimator in \eqref{penalty-theorem-2}; bar labels show raw values and the y-axis is normalized.
(c) Component ablations with \textit{Qwen2.5-3B-base} model.
Full results appear in Table~\ref{tab:retriever_objective_ablation} in Appendix~\ref{app:component_ablation}.}
\label{fig:optimization_ablation}
\vspace{-0.2cm}
\end{figure}

\noindent\textbf{Group Relative Policy Optimization (GRPO).} To optimize \eqref{eq:jrl}, {we follow \citet{jin2025searchr1} and use GRPO \citep{shao2024deepseekmath}, which assigns per-rollout credit by comparing multiple rollouts of the same prompt}. Specifically, given a prompt $x$, GRPO samples a group of $G$ trajectories and normalizes
their rewards within the group. Let $\traj_i$ be the $i$-th rollout trajectory with reward
$R_i=R(x,\traj_i)$, and let the advantage {of the} $i$-th rollout be $A_i=(R_i - \operatorname{mean}(\{R_j\}))/\operatorname{std}(\{R_j\})$.
Let $a_{i,t,m}$ {denote} the $m$-th token in the $t$-th action in the $i$-th rollout, and $a_{i,t}=(a_{i,t,1},\ldots,a_{i,t,L_{i,t}})$ be the tokenized action
segment $t$ in rollout $i$. For each action token, define the importance ratio
$r_{i,t,m}(\theta)
    =
    \frac{
        \policy(a_{i,t,m}\mid \traj_{i,<t},a_{i,t,<m})
    }{
        \oldpol(a_{i,t,m}\mid \traj_{i,<t},a_{i,t,<m})
    }$,
where $\traj_{i,<t}=({a_{i,1},o_{i,1}},\cdots,a_{i,t-1},o_{i,t-1})$ denotes the trajectory prefix of {the} $i$-th rollout before {turn} $t$, and $a_{i,t,<m}$ denotes the prefix of the $t$-th action tokens in the $i$-th rollout before $m$. Let us define the clipped token objective as
\begin{equation*}
    \ell^{\mathrm{clip}}_{i,t,m}(\theta)
    =
    \min\!\left[
        r_{i,t,m}(\theta)A_i,\,
        \operatorname{clip}(r_{i,t,m}(\theta),1-\epsilon,1+\epsilon)
        A_i
    \right].
\end{equation*}
{The} multi-turn GRPO surrogate averages this term over action tokens:
\begin{equation}
    \grpoobj(\theta)
    =-
    \frac{1}{G}
    \sum_{i=1}^{G}
    \frac{1}{T_i}
    \sum_{t=1}^{T_i}\frac{1}{L_{i,t}}\sum_{m=1}^{L_{i,t}}
    \left(\ell^{\mathrm{clip}}_{i,t,m}(\theta)
    -
    \beta \KL(\policy \,\|\, \refpol)\right)
    \label{eq:grpo}
\end{equation}
where $T_i$ is the number of turns in rollout $i$, and the trajectory-level advantage is broadcast to each turn and normalized by its length.

\subsection{Tool-use RAG and RAG likelihood}

In the agentic RL setting, an external search tool retrieves relevant information from a retrieval corpus and {inserts the retrieved documents into the rollout within an $\texttt{<information>}$ block, giving the agent access to domain-specific knowledge}.

Let $\corpus=\{d_j\}_{j=1}^{N}$ be a retrieval corpus of chunked passages. At a retrieval turn $t$, let $h_t=(\tau_{<t},a_{t})$ denote the
trajectory prefix up to turn $t$. We use a dense retriever with \textbf{relevance score} $s_\eta(h_t,d)
    =
    f_\eta(h_t)^\top g(d)$,
where $f_\eta$ is the trainable query encoder with parameter $\eta$, and
$g$ is a fixed document encoder. Given the prefix $h_t$, the
retriever returns the {set of $m$ most relevant documents}
\begin{align}
    D_t
    =
    \mathcal N_m(h_t)
    =
    \operatorname{TopM}_{d \in \corpus}\, s_\eta(h_t,d),
\end{align}
which is rendered into the information block as {the environment} observation $o_t$ and provided to the LLM as additional context for {subsequent} generation. Through $o_t$, {the retriever parameter} $\eta$ changes the {subsequent} policy context and therefore the
trajectory distribution in \eqref{eq:jrl}.

For training $\eta$, $\Cand(h_t)$ denotes the top-$k$ candidate set under the same relevance score, used by both the RAG likelihood and the RL outcome-credit surrogate. We choose $k>m$ {so that the retriever receives document-level training signal from more candidates than the $m$ documents}.

\noindent\textbf{RAG likelihood.}
For each training prompt $x$, sampled rollouts provide retrieval states
$h$, paired with the reference answer $y=y^*(x)$. We compute the RAG negative log-likelihood at each retrieval turn of the sampled rollouts, {averaged over retrieval contexts $h$ and ground-truth answers $y$}:
\begin{equation}
    \Lrag(\theta,\eta)
    =
    -
    \E_{(h,y)}
    \log
    \sum_{d \in \Cand(h)}
    \retr(d \mid h)
    p_{\theta}(y \mid h,d)
    \label{eq:lrag}
\end{equation}
where the retriever
distribution is {a softmax over the top-$k$ set $\Cand(h_t)$}:
\begin{equation}
    \retr(d \mid h_t)
    =
    \frac{\exp(s_\eta(h_t,d))}
    {\sum_{d' \in \Cand(h_t)} \exp(s_\eta(h_t,d'))}.
    \label{eq:retriever_distribution}
\end{equation}
The RAG objective {is the negative log-likelihood of the correct answer under the current LLM $\theta$, marginalized over the top-$k$ documents under the current retriever $\eta$}. By minimizing this objective with respect to $\eta$, the retriever is encouraged to rank more informative and useful documents higher, thereby improving the quality of the final answer generation.

\section{Bilevel retrieval-augmented agentic RL}
\label{sec:bridge_method}

{We first examine the order of retriever and policy optimization, then formulate the bilevel problem and develop an efficient algorithm for it.}

\subsection{Bilevel formulation}

\noindent\textbf{{Hierarchical} order in retrieval and LLM tuning.} Although both the LLM and the retriever affect final answer accuracy, we first test whether retrieval tuning and policy optimization are order-sensitive. To this end, we compare two sequential tuning pipelines on top of the {RL-trained} VT-Search baseline: \textbf{RAG-then-RL}, which first improves the retriever and then optimizes the LLM via RL, and \textbf{RL-then-RAG}, which applies these two steps in the reverse order. As shown in Figure~\ref{fig:optimization_ablation}(a), adding RAG before RL improves performance over the RL-only VT-Search baseline on most datasets, with especially clear gains on multi-hop QA benchmarks. However, tuning the retriever after RL leads to worse performance: on general QA tasks, it can underperform the RL-only model, and overall it yields smaller gains than RAG-then-RL. This {indicates that the order of retrieval and policy optimization matters: the retriever should adapt first} to expose task-relevant evidence, and the LLM can then learn search and reasoning behavior under
useful information blocks.

\noindent\textbf{Motivation {for} dynamic tuning.} {Despite this sequential order, retrieval and LLM optimization can still reinforce each other, with feedback from one improving the other. We capture this with the bilevel formulation}
\begin{equation}
    \min_{\theta,\eta}
    ~~
    \Jrl(\theta,\eta)
    \qquad
    \mathrm{s.t.}
    \quad
    \eta \in \eta^*(\theta):=
    \arg\min_{\eta'} \Lrag(\theta,\eta')
    \label{eq:ideal_bilevel}
\end{equation}
where the lower-level problem optimizes the RAG objective to find the best {retriever} $\eta$ {for} the current policy $\theta$, and the upper-level problem optimizes the RL objective to find the LLM policy $\theta$ that maximizes the outcome reward {when paired with its optimal retriever. Problem \eqref{eq:ideal_bilevel} thus gives retrieval optimization priority at the lower level and optimizes the LLM against the resulting retriever, so the retriever keeps adapting as the policy changes}.

\begin{figure}[t]
\centering
\begin{minipage}[t]{0.24\textwidth}
    \centering
    \includegraphics[width=\linewidth]{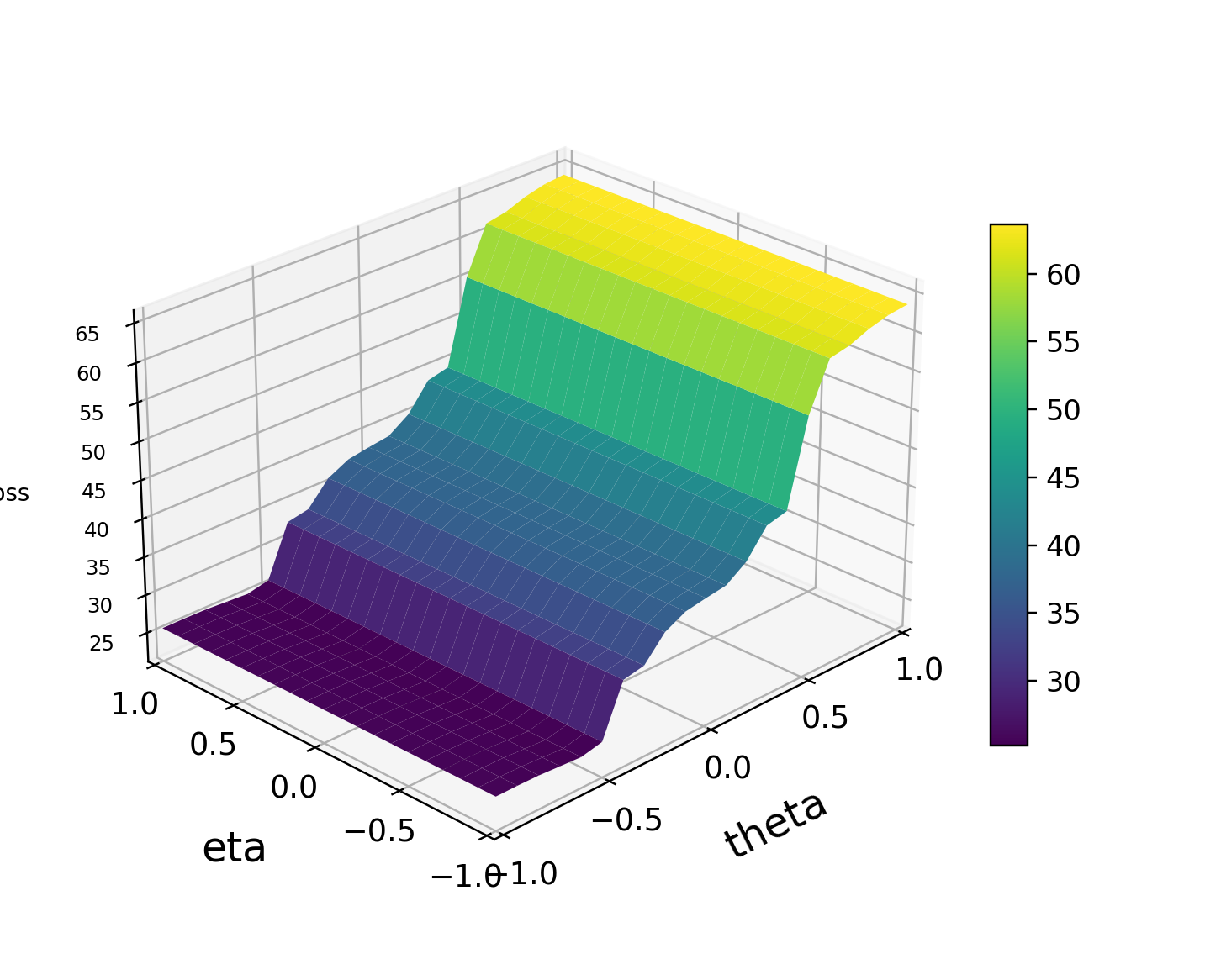}
    \vspace{0.05em}
    {\scriptsize \textbf{(a)} Joint RAG}
\end{minipage}
\hfill
\begin{minipage}[t]{0.24\textwidth}
    \centering
    \includegraphics[width=\linewidth]{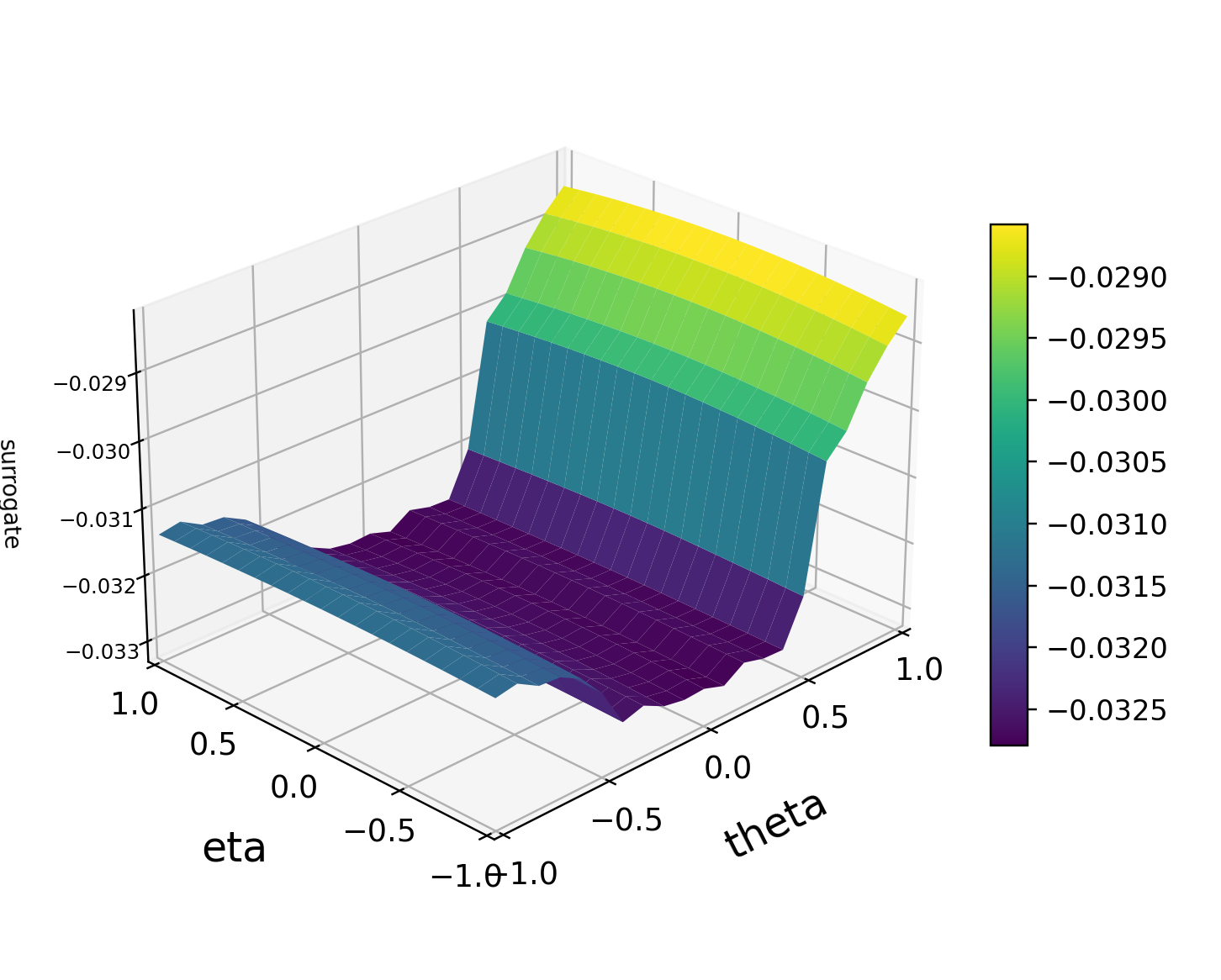}
    \vspace{0.05em}
    {\scriptsize \textbf{(b)} Joint RL}
\end{minipage}
\hfill
\begin{minipage}[t]{0.24\textwidth}
    \centering
    \includegraphics[width=\linewidth]{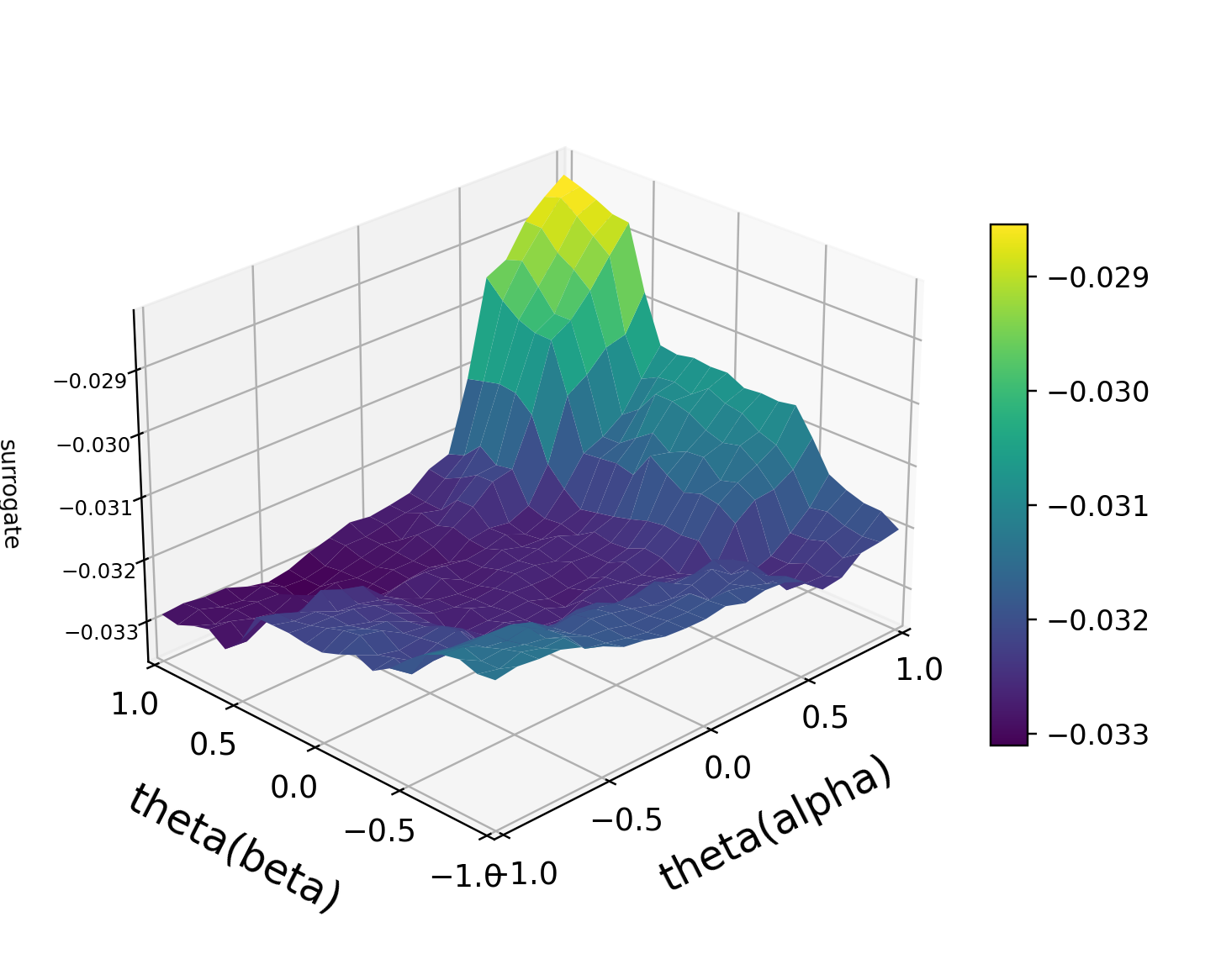}
    \vspace{0.05em}
    {\scriptsize \textbf{(c)} $\theta$-only, initial}
\end{minipage}
\hfill
\begin{minipage}[t]{0.24\textwidth}
    \centering
    \includegraphics[width=\linewidth]{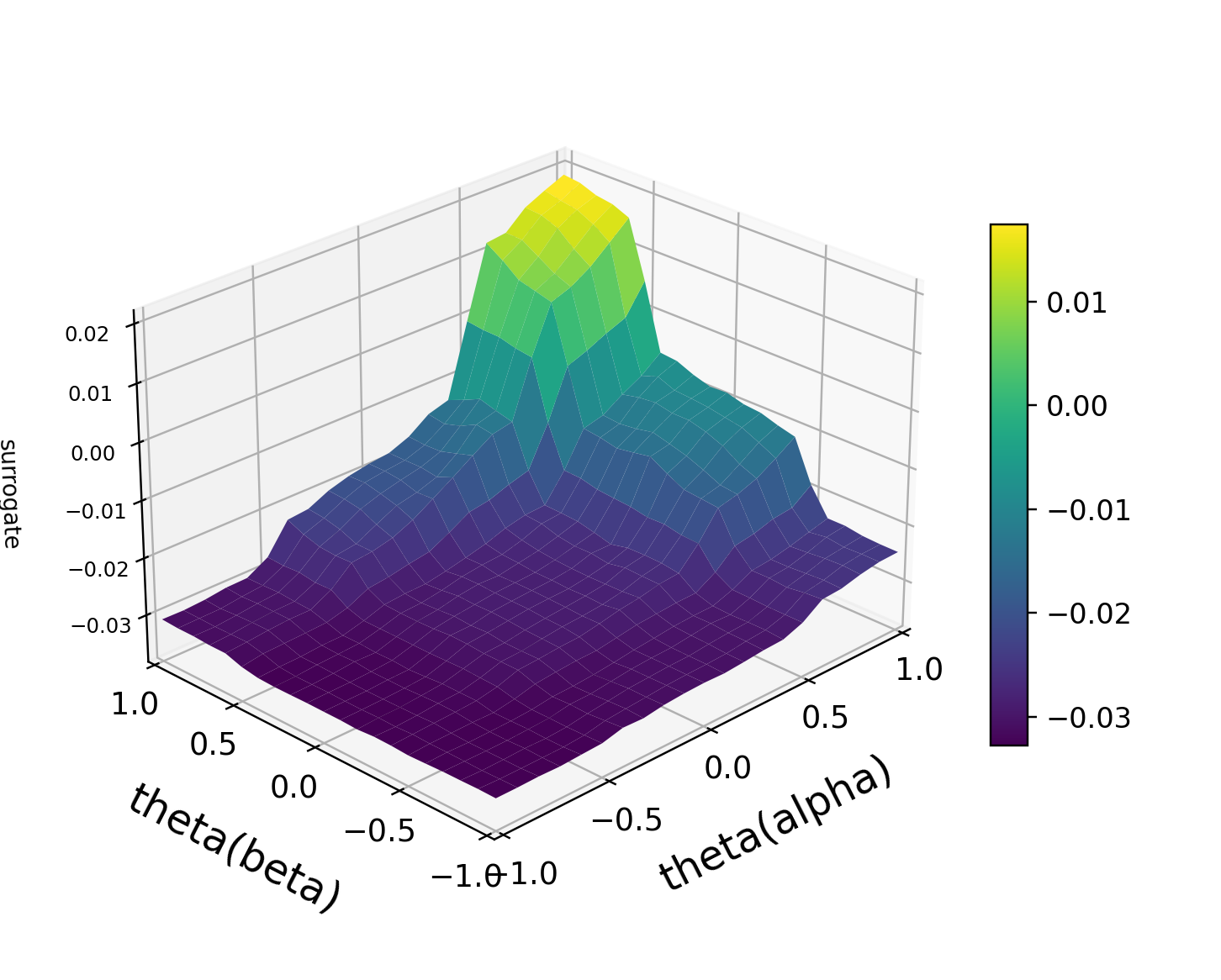}
    \vspace{0.05em}
    {\scriptsize \textbf{(d)} $\theta$-only, final}
\end{minipage}
\vspace{-0.08 in}
\caption{\textbf{Loss landscapes of the RAG and RL objectives} for the search task with \textit{Qwen2.5-3B-base}.
We visualize the joint RAG landscape with respect to both $\theta$ and $\eta$ in (a) and the joint RL landscape in (b). We also visualize
the RL landscape {along two random directions in $\theta$} (\textit{alpha} and \textit{beta}) in (c)--(d) at the initial and final {checkpoints}. As training proceeds, \method{} moves toward a smoother
region of the RL objective landscape. More visualizations are in Appendix~\ref{app:landscapes}.}
\label{fig:rl_loss_landscapes}
\vspace{-0.4cm}
\end{figure}

\subsection{Memory efficient bilevel algorithm design via landscape analysis}
{We now develop \method{}, a memory-efficient first-order method for solving \eqref{eq:ideal_bilevel}.}

\noindent\textbf{First-order {penalty-based} approach. } {The main difficulty is the implicit mapping $\eta^*(\theta)$. Even in the simplest case, where the RAG objective has a unique minimizer with an invertible Hessian, the implicit function theorem} \citep{dontchev2014implicit} {gives}
\begin{equation}\label{implicit-theorem}
\nabla \Jrl(\theta,\eta^*(\theta))=\nabla_\theta \Jrl(\theta,\eta)-\nabla_{\theta\eta}\Lrag(\theta,\eta)\nabla_{\eta\eta}^{-1}\Lrag(\theta,\eta) \nabla_\eta \Jrl(\theta,\eta)|_{\eta=\eta^*(\theta)}
\end{equation}
{which involves an inverse Hessian and a mixed Jacobian and is therefore infeasible to compute when both $\theta$ and $\eta$ parameterize large neural networks}.

{Building on first-order penalty-based bilevel methods} \citep{shen2023penalty,kwon2023penalty,scellier2017equilibrium,zucchet2022beyond}, we {estimate this gradient through the penalty function}
\begin{equation}\label{penalty-decompose}
\mathcal{L}_\gamma (\theta,\eta)=\Jrl(\theta,\eta)+\gamma \Lrag(\theta,\eta).
\end{equation}
{Let} $\eta_\gamma^*(\theta)=\argmin_{\eta}\mathcal{L}_\gamma(\theta,\eta)$ {denote the penalized retriever. Since $\eta_\gamma^*(\theta)\rightarrow\eta^*(\theta)$ as $\gamma\rightarrow\infty$, the implicit gradient can be approximated by the first-order penalty gradient}
\begin{align}\label{penalty-theorem}
\nabla \Jrl(\theta,\eta^*(\theta))
&\stackrel{(a)}{\approx}\nabla_\theta \Jrl(\theta,\eta)+\gamma\nabla_{\theta\eta}\Lrag(\theta,\eta)(\eta_\gamma^*(\theta)-\eta^*(\theta))~|_{\eta=\eta^*(\theta)} \nonumber\\
&\stackrel{(b)}{\approx}\nabla_\theta \Jrl(\theta,\eta)+\gamma\nabla_{\theta}\Lrag(\theta,\eta)~|_{\eta=\eta_\gamma^*(\theta)} - \gamma\nabla_{\theta}\Lrag(\theta,\eta)|_{\eta=\eta^*(\theta)}\\
&=\nabla_\theta \mathcal{L}_\gamma(\theta,\eta)|_{\eta=\eta_\gamma^*(\theta)}- \gamma\nabla_{\theta}\Lrag(\theta,\eta)|_{\eta=\eta^*(\theta)}\label{penalty-theorem-2}
\end{align}
where {(a) holds because}
\begin{equation*}
\nabla_{\eta\eta}\Lrag(\theta,\eta)(\eta_\gamma^*(\theta)-\eta^*(\theta))|_{\eta=\eta^*(\theta)} \approx \nabla_\eta \Lrag(\theta,\eta)|_{\eta=\eta_\gamma^*(\theta)}\approx -\frac{1}{\gamma} \nabla_\eta\Jrl(\theta,\eta)|_{\eta=\eta_\gamma^*(\theta)}
\end{equation*}
{in which the first approximation follows from a Taylor expansion and $\nabla_\eta \Lrag(\theta,\eta)|_{\eta=\eta^*(\theta)}=0$, and the second from the optimality condition $\nabla_\eta \mathcal{L}_\gamma(\theta,\eta)|_{\eta=\eta_\gamma^*(\theta)}=0$; (b) follows from a first-order Taylor expansion of $\nabla_\theta\Lrag$ in $\eta$. The error of the fully first-order estimator \eqref{penalty-theorem-2} decreases as ${\cal O}(1/\gamma)$} \citep{kwon2023fully}.

However, the {two-state} estimator in \eqref{penalty-theorem-2} is still {memory-intensive} for RAG tool-use RL, because it needs to track two separate {retriever} states $\eta_\gamma^*(\theta)$ and $\eta^*(\theta)$, which are optimized using different objectives $\mathcal{L}_\gamma$ and $\Lrag$. This increases the memory cost by storing an additional set of retriever parameters, their optimizer states, and the corresponding retrieval logs; see Figure \ref{fig:optimization_ablation} (b).

\noindent\textbf{Landscape analysis.} To alleviate the memory bottleneck, existing single-state bilevel methods
\citep{xiao2025ldc,shen2024seal} avoid optimizing
$\eta^*(\theta)$ by treating
$\nabla_\theta\Lrag(\theta,\eta^*(\theta))\approx 0$ along the training
trajectory. We visualize the RAG and RL loss landscapes following \citet{li2018visualizing} in Figure~\ref{fig:rl_loss_landscapes} as a motivation.
As shown in Figure~\ref{fig:rl_loss_landscapes}(a), when the LLM policy
$\theta$ is suboptimal, the RAG objective still changes substantially along the
$\theta$ direction for most {values} of $\eta$, which indicates that $\nabla_\theta \Lrag(\theta,\eta)|_{\eta=\eta^*(\theta)}$ cannot be ignored. However, Figure~\ref{fig:rl_loss_landscapes}(b) shows that the upper-level
RL objective is relatively flat along the retriever direction $\eta$ across a
wide range of $\theta$. This suggests that, for a fixed policy $\theta$, the
penalized retriever solution $\eta_\gamma^*(\theta)$ remains close to the
unpenalized lower-level solution $\eta^*(\theta)$, which motivates {omitting} the last two
terms {of} \eqref{penalty-theorem} together{: their difference is negligible} if $\|\eta_\gamma^*(\theta)-\eta^*(\theta)\|=\mathrm{o}(1/\gamma)$ \citep{jiang2026beyond}. {The three resulting estimators, compared in Section~\ref{sec:open_domain} numerically, are (with $\eta^*=\eta^*(\theta)$, $\eta_\gamma^*=\eta_\gamma^*(\theta)$)}
\begin{equation*}
{\underbrace{\nabla_\theta\mathcal{L}_\gamma(\theta,\eta_\gamma^*)-\gamma\nabla_\theta\Lrag(\theta,\eta^*)}_{\text{two-state \eqref{penalty-theorem-2}}},\qquad
\underbrace{\nabla_\theta\mathcal{L}_\gamma(\theta,\eta_\gamma^*)}_{\text{Drop}},\qquad
\underbrace{\nabla_\theta\Jrl(\theta,\eta_\gamma^*)}_{\text{\method{}}},}
\end{equation*}
{where Drop removes only the value-function term \citep{xiao2025ldc,shen2024seal} and \method{} removes both RAG terms, needing one retriever state.}

\begin{figure}[!t]
\begin{minipage}{\linewidth}
\setlength{\intextsep}{0pt}
\begin{algorithm}[H]
\caption{{\method{}}}
\label{alg:bridge}
\begin{algorithmic}[1]
\STATE \textbf{Input:} Training prompts $\Dtrain$, document corpus $\mathcal{C}$,
policy $\policy$, initial retrieval model $\eta_0$, initial LoRA retrieval adapter $A_0=0$ and random $B_0$, RL rollout group size
$G$, penalty parameter $\gamma$, retriever update period $P$, number of rounds $R$.
\FOR{$r=0,1,\ldots,R-1$}
    \STATE Sample a mini-batch of questions $x \sim \Dtrain$.
    \STATE Generate $G$ multi-turn tool-use rollouts with $\pi_{\theta_r}$ and
    retriever $\eta_r=\eta_0+A_r B_r$.
    \STATE Compute answer rewards and GRPO advantages over
    LLM-generated action tokens in \eqref{eq:grpo}.
    \STATE Compute retrieval-turn credit and the RAG likelihood in \eqref{eq:lrag}
    \STATE {If $r \bmod P=0$, update the LoRA adapter of $\eta$ by}
    \eqref{eq:eta_loss}
    \STATE Update $\theta$ {by}
    \eqref{eq:theta_loss}
\ENDFOR
\STATE \textbf{Output:} Final policy $\pi_{\theta_{R}}$ and query-side retriever
adapter $\eta_R$.
\end{algorithmic}
\end{algorithm}

\vspace{0pt}
\begingroup
\setlength{\intextsep}{0pt}
\renewcommand{\arraystretch}{0.90}
\newcommand{\bridgerowstrut}{\rule[-3.5pt]{0pt}{14pt}}
\begin{table}[H]
\centering
\caption{\textbf{Open-domain QA evaluation results.} Best/second-best scores are highlighted in \textbf{bold}/\underline{underlined}.
${}^{\dagger}$/${}^{*}$ denote in-domain/out-of-domain datasets relative to our training set.
${}^{\ddagger}$ indicates results from the original papers.
{Search-R1 and \method{} rows use the higher-Avg.\ \textit{Base}/\textit{Instruct} variant (Appendix~\ref{app:backbones}).}
IRCoT uses the corresponding \textit{Instruct} models.
M-Avg. and Avg. denote multi-hop and overall averages.
Darker green shading indicates higher accuracy.}
\label{tab:knowledge_qa_results}
\vspace{0pt}
\resizebox{\textwidth}{!}{
\begin{tabular}{lccccccccc}
\toprule
\multirow{2}{*}{\textbf{Method}} &
\multicolumn{3}{c}{\textbf{General QA}} &
\multicolumn{4}{c}{\textbf{Multi-hop QA} (knowledge-intensive)} &
\multirow{2}{*}{\textbf{M-Avg.}} &
\multirow{2}{*}{\textbf{Avg.}} \\
\cmidrule(lr){2-4}
\cmidrule(lr){5-8}
&
\textbf{NQ\textsuperscript{\ensuremath{\dagger}}} &
\textbf{TriviaQA\textsuperscript{*}} &
\textbf{PopQA\textsuperscript{*}} &
\textbf{HotpotQA\textsuperscript{\ensuremath{\dagger}}} &
\textbf{2Wiki\textsuperscript{*}} &
\textbf{MuSiQue\textsuperscript{*}} &
\textbf{Bamboogle\textsuperscript{*}} &
&
\\
\midrule
\multicolumn{10}{c}{\textbf{Large-model references: direct inference without retrieval}} \\
\midrule
\textit{GPT-4o} & 22.3 & 66.5 & 36.9 & 22.8 & 19.8 & 6.2 & 26.4 & 18.8 & 28.7 \\
\textit{GPT-5.6 Luna} & 19.0 & 65.2 & 33.4 & 22.2 & 19.9 & 7.1 & 26.4 & 18.9 & 27.6 \\
\textit{Qwen3-235B-A22B} & 23.0 & 58.4 & 28.9 & 23.2 & 29.1 & 4.9 & 20.8 & 19.5 & 26.9 \\
\midrule
\multicolumn{10}{c}{\textbf{\textit{Qwen2.5-3B}}} \\
\midrule
Direct Inference & 18.2 & 30.4 & 20.2 & 10.9 & 12.0 & 3.5 & 8.8 & \scorecell{8.8}{34.5}{8.8}{8.8} & \scorecell{14.9}{41.5}{14.9}{14.9} \\
IRCoT (Instruct)\textsuperscript{\ensuremath{\ddagger}} & 11.1 & 31.2 & 20.0 & 16.4 & 17.1 & 6.7 & 24.0 & \scorecell{8.8}{34.5}{16.1}{16.1} & \scorecell{14.9}{41.5}{18.1}{18.1} \\
Search-o1\textsuperscript{\ensuremath{\ddagger}} & 23.8 & 47.2 & 26.2 & 22.1 & 21.8 & 5.4 & \underline{32.0} & \scorecell{8.8}{34.5}{20.3}{20.3} & \scorecell{14.9}{41.5}{25.5}{25.5} \\
RAG\textsuperscript{\ensuremath{\ddagger}} & 34.8 & 54.4 & 38.7 & 25.5 & 22.6 & 4.7 & 8.0 & \scorecell{8.8}{34.5}{15.2}{15.2} & \scorecell{14.9}{41.5}{27.0}{27.0} \\
SFT\textsuperscript{\ensuremath{\ddagger}} & 24.9 & 29.2 & 10.4 & 18.6 & 24.8 & 4.4 & 11.2 & \scorecell{8.8}{34.5}{14.8}{14.8} & \scorecell{14.9}{41.5}{17.6}{17.6} \\
R1-base\textsuperscript{\ensuremath{\ddagger}} & 22.6 & 45.5 & 17.3 & 20.1 & 26.8 & 5.5 & 22.4 & \scorecell{8.8}{34.5}{18.7}{18.7} & \scorecell{14.9}{41.5}{22.9}{22.9} \\
Rejection Sampling\textsuperscript{\ensuremath{\ddagger}} & 29.4 & 48.8 & 33.2 & 24.0 & 23.3 & 5.9 & 21.0 & \scorecell{8.8}{34.5}{18.6}{18.6} & \scorecell{14.9}{41.5}{26.5}{26.5} \\
s3 & 32.2 & 54.4 & 38.9 & 24.9 & 25.2 & 5.7 & 8.7 & \scorecell{8.8}{34.5}{16.1}{16.1} & \scorecell{14.9}{41.5}{27.1}{27.1} \\
Search-R1\textsuperscript{\ensuremath{\ddagger}} & 39.7 & 56.5 & 39.1 & \underline{33.1} & \underline{31.0} & \underline{12.4} & 23.2 & \scorecell{8.8}{34.5}{24.9}{\underline{24.9}} & \scorecell{14.9}{41.5}{33.6}{33.6} \\
Agentic-R & 43.1 & \underline{60.0} & 42.3 & 31.0 & 27.2 & 8.4 & 11.2 & \scorecell{8.8}{34.5}{19.5}{19.5} & \scorecell{14.9}{41.5}{31.9}{31.9} \\
VT-Search\textsuperscript{\ensuremath{\ddagger}} & \underline{45.4} & \textbf{61.6} & \textbf{48.1} & 32.4 & 30.8 & 7.6 & 15.2 & \scorecell{8.8}{34.5}{21.5}{21.5} & \scorecell{14.9}{41.5}{34.4}{\underline{34.4}} \\
\noalign{\vskip 2pt}
\cdashline{1-10}
\noalign{\vskip 4pt}
\rowcolor{bilevelblue}
\bridgerowstrut \textbf{\method{}} (ours) & \textbf{46.3} & \textbf{61.6} & \underline{44.7} & \textbf{42.3} & \textbf{41.4} & \textbf{16.8} & \textbf{37.6} & \scorecell{8.8}{34.5}{34.5}{\textbf{34.5}} & \scorecell{14.9}{41.5}{41.5}{\textbf{41.5}} \\
\midrule
\multicolumn{10}{c}{\textbf{\textit{Qwen2.5-7B}}} \\
\midrule
Direct Inference\textsuperscript{\ensuremath{\ddagger}} & 13.4 & 40.8 & 14.0 & 18.3 & 25.0 & 3.1 & 12.0 & \scorecell{14.5}{43.2}{14.6}{14.6} & \scorecell{18.1}{48.2}{18.1}{18.1} \\
IRCoT (Instruct)\textsuperscript{\ensuremath{\ddagger}} & 22.4 & 47.8 & 30.1 & 13.3 & 14.9 & 7.2 & 22.4 & \scorecell{14.5}{43.2}{14.5}{14.5} & \scorecell{18.1}{48.2}{22.6}{22.6} \\
Search-o1\textsuperscript{\ensuremath{\ddagger}} & 15.1 & 44.3 & 13.1 & 18.7 & 17.6 & 5.8 & 29.6 & \scorecell{14.5}{43.2}{17.9}{17.9} & \scorecell{18.1}{48.2}{20.6}{20.6} \\
RAG\textsuperscript{\ensuremath{\ddagger}} & 34.9 & 58.5 & 39.2 & 29.9 & 23.5 & 5.8 & 20.8 & \scorecell{14.5}{43.2}{20.0}{20.0} & \scorecell{18.1}{48.2}{30.4}{30.4} \\
SFT\textsuperscript{\ensuremath{\ddagger}} & 31.8 & 35.4 & 12.1 & 21.7 & 25.9 & 6.6 & 11.2 & \scorecell{14.5}{43.2}{16.4}{16.4} & \scorecell{18.1}{48.2}{20.7}{20.7} \\
R1-base\textsuperscript{\ensuremath{\ddagger}} & 29.7 & 53.9 & 20.2 & 24.2 & 27.3 & 8.3 & 29.6 & \scorecell{14.5}{43.2}{22.4}{22.4} & \scorecell{18.1}{48.2}{27.6}{27.6} \\
Rejection Sampling\textsuperscript{\ensuremath{\ddagger}} & 36.0 & 59.2 & 38.0 & 33.1 & 29.6 & 12.3 & 35.5 & \scorecell{14.5}{43.2}{27.6}{27.6} & \scorecell{18.1}{48.2}{34.8}{34.8} \\
s3 & 36.0 & 57.4 & 43.0 & 28.9 & 25.1 & 5.6 & 12.0 & \scorecell{14.5}{43.2}{17.9}{17.9} & \scorecell{18.1}{48.2}{29.7}{29.7} \\
Search-R1\textsuperscript{\ensuremath{\ddagger}} & 48.0 & 63.8 & 45.7 & 43.3 & 38.2 & 19.6 & 43.2 & \scorecell{14.5}{43.2}{36.1}{36.1} & \scorecell{18.1}{48.2}{43.1}{43.1} \\
Agentic-R\textsuperscript{\ensuremath{\ddagger}} & 42.4 & \textbf{69.0} & 44.1 & \underline{45.8} & \underline{45.3} & \underline{20.3} & \underline{48.0} & \scorecell{14.5}{43.2}{39.8}{\underline{39.8}} & \scorecell{18.1}{48.2}{45.0}{45.0} \\
VT-Search\textsuperscript{\ensuremath{\ddagger}} & \underline{49.3} & 66.2 & \textbf{50.2} & 44.8 & \underline{45.3} & 19.3 & 46.4 & \scorecell{14.5}{43.2}{39.0}{39.0} & \scorecell{18.1}{48.2}{45.9}{\underline{45.9}} \\
\noalign{\vskip 2pt}
\cdashline{1-10}
\noalign{\vskip 4pt}
\rowcolor{bilevelblue}
\bridgerowstrut \textbf{\method{}} (ours) & \textbf{50.1} & \underline{66.3} & \underline{48.4} & \textbf{48.9} & \textbf{49.9} & \textbf{22.7} & \textbf{51.2} & \scorecell{14.5}{43.2}{43.2}{\textbf{43.2}} & \scorecell{18.1}{48.2}{48.2}{\textbf{48.2}} \\
\bottomrule
\end{tabular}
}
\vspace{-0.3cm}
\end{table}
\endgroup
\end{minipage}
\end{figure}

\noindent\textbf{Memory efficient algorithm design. } {Based on the estimator fidelity evidences in Section~\ref{sec:open_domain}, we therefore track only the penalized retriever $\eta_\gamma^*(\theta)$ and update $\theta$ with the \method{} estimator. To further reduce memory, we parameterize the retriever with a query-side LoRA adapter} \citep{hulora} and update it periodically; see Figure~\ref{fig:optimization_ablation}(b) for a memory comparison. {Given the fixed initial retriever $\eta_0$, the retriever used for search is $\eta_0+AB$, where $A$ and $B$ are trainable low-rank matrices.}

At each round $r$, \method{} uses the current penalized retriever $\eta_r=\eta_0+A_{r} B_{r}$ and the current {LLM} $\theta_r$ to
sample $G$ tool-use rollouts {per question}. The query-side
LoRA retriever is then updated by
\begin{align}
    &A_{r+1}
    = A_r-\alpha_A\left(
    \nabla_A\Jrl(\theta_r,\eta_r)
    +
    \gamma
    \nabla_A\Lrag(\theta_r,\eta_r)\right),\nonumber\\
    &B_{r+1}
    = B_r-\alpha_B
    \left(\nabla_B\Jrl(\theta_r,\eta_r)
    +
    \gamma
    \nabla_B\Lrag(\theta_r,\eta_r)\right),
    \label{eq:eta_loss}
\end{align}
where the {retriever-side RL gradient is approximated by a retrieval-side GRPO surrogate and the RAG gradient follows \citet{lewis2021rag}; both are derived in Appendix~\ref{app:retrieval_gradient}. After the retriever update, the LLM is updated with the GRPO surrogate on the same rollouts}:
\begin{equation}
    \theta_{r+1}
    =\theta_r
    -\alpha
    \nabla_\theta\Jrl(\theta_r,\eta_{r}).
    \label{eq:theta_loss}
\end{equation}
{Algorithm~\ref{alg:bridge} summarizes \method{}, and Figure~\ref{fig:bridge_overview} gives an overview. Figure~\ref{fig:optimization_ablation}(b) further shows its memory benefits compared with two-state estimatior. }

\noindent\textbf{Interpretation. } Intuitively, the RL signal provides global feedback on the success of the complete rollout, while the RAG signal distinguishes candidate documents by how strongly they support generating the correct answer under the current LLM. Combining these signals guides retriever adaptation through both final task outcomes and document-level answer likelihood. The LLM policy is then updated, guided solely by the upper-level RL objective. These two stages reflect the retrieval--policy hierarchy in our
bilevel formulation. {In Appendix~\ref{app:bridge_convergence}, we establish convergence of an idealized version of these updates, with exact gradients, to an approximate stationary point of} \eqref{eq:ideal_bilevel}.

\vspace{-0.1cm}
\section{Experiments}
\vspace{-0.1cm}
\subsection{Experimental setup}
\label{sec:experimental_setup}
\noindent\textbf{Models and benchmarks.}
We use \textit{Qwen2.5-3B-base} model  \citep{qwen2024technical} and an \textit{E5}
retriever \citep{wang2022text} unless stated otherwise. We evaluate {on} twelve
benchmarks spanning three {task families}: 1) \textit{general open-domain QA}, including
NQ~\citep{kwiatkowski2019natural}, TriviaQA~\citep{joshi2017triviaqa},
PopQA~\citep{mallen2023trust}; 2) \textit{open-domain multi-hop QA}, including
HotpotQA~\citep{yang2018hotpotqa}, 2Wiki~\citep{ho2020constructing},
MuSiQue~\citep{trivedi2022musique}, Bamboogle~\citep{press2023measuring}; 3)
\textit{scientific medical QA}, including MedQA~\citep{jin2021disease}, MedMCQA~\citep{pal2022medmcqa},
PubMedQA~\citep{jin2019pubmedqa},
BioASQ~\citep{tsatsaronis2015overview,krithara2023bioasq}, and
MMLU-Medical~\citep{hendrycksmeasuring}.
For open-domain RL training, we follow \citet{jin2025searchr1} and train on NQ and
HotpotQA, using the \textit{2018 Wikipedia dump} \citep{karpukhin2020dense} for retrieval.
NQ and HotpotQA are in-domain evaluations; the other five open-domain
benchmarks assess out-of-domain generalization.
{Training details for the medical QA experiments are given in Appendix~\ref{app:medical_training}.}

\noindent\textbf{Evaluation {metrics}.}
Following \citet{jin2025searchr1}, we use exact match (EM) for open-domain
rewards and evaluation. M-Avg./Avg. are unweighted means over the four multi-hop/all
seven benchmarks. Medical QA also includes \textit{GPT-5.6 Luna}-judged reasoning
quality (Section~\ref{sec:medical_qa}).

\vspace{-0.1cm}
\subsection{Open-domain QA tasks}
\label{sec:open_domain}
\vspace{-0.1cm}

\noindent\textbf{Baselines.} The open-domain baselines in
Table~\ref{tab:knowledge_qa_results} {fall into three classes}: 1) \textit{inference without retrieval}: Direct Inference, including \textit{GPT-4o}, \textit{GPT-5.6 Luna}, and \textit{Qwen3-235B-A22B}; 2) \textit{inference with retrieval}, including RAG~\citep{lewis2021rag}, IRCoT~\citep{trivedi2023interleaving}, and
Search-o1~\citep{li2025searcho1}; 3) \textit{fine-tuning-based methods}, including
\textit{retrieval-side tuning} (s3, with a frozen answer generator
\citep{jiang2025s3}), \textit{LLM-policy tuning}
(SFT~\citep{chung2024scaling}, R1-base~\citep{guo2025deepseekr1},
Rejection Sampling~\citep{ahn2024mathematical},
Search-R1~\citep{jin2025searchr1}, and VT-Search~\citep{jiang2025verltool}),
and \textit{joint retriever--policy tuning}
(Agentic-R \citep{liu2026agenticr}).

\begin{findingbox}{Key finding: \method{} has the largest gains on multi-hop QA and with smaller backbones.}
{In Table~\ref{tab:knowledge_qa_results}, \method{} achieves the highest overall average at both scales, with the largest margins on multi-hop QA,} where {answers require integrating evidence across multiple documents; it trails the best baseline only on PopQA (both scales) and TriviaQA (7B). Its gains are larger with the 3B backbone, and it surpasses all three large-model references on six of seven benchmarks, suggesting that better retriever--LLM coordination helps a smaller LLM on complex tasks.}
\end{findingbox}

\noindent\textbf{Component-wise ablations.}
We test three internal baselines: \textbf{RAG-then-RL} sequentially tunes
retrieval with RAG, then the policy with RL; \textbf{Decoupled-RL} {trains the retriever and the policy online, both with RL}; \textbf{Decoupled-RAG} {trains them online, the retriever with RAG likelihood only and the policy with RL}.
All variants use \textit{Qwen2.5-3B-base} and {a GRPO group size of} $G=8$.

\begin{findingbox}{Key finding: \method{} gains the most by combining online adaptation with a hierarchical order.}
As shown in Figure~\ref{fig:optimization_ablation}(c), all {ablation variants outperform the RL-only baselines Search-R1 (\textit{Base}) and VT-Search, showing the benefit of retriever optimization. However, neither online variant surpasses the sequential RAG-then-RL, which suggests that the hierarchical order matters. \method{} preserves this hierarchy through its bilevel formulation and improves on RAG-then-RL by adapting the retriever and the LLM online.}
\end{findingbox}

\begin{figure}[!htbp]
\vspace{-0.1cm}
\centering
\setlength{\tabcolsep}{2pt}
\begin{tabular}{@{}cccc@{}}
\includegraphics[width=0.235\linewidth]{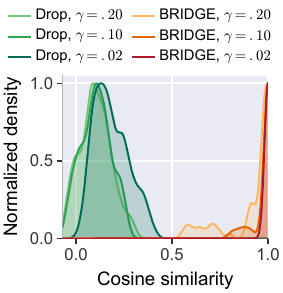} &
\includegraphics[width=0.235\linewidth]{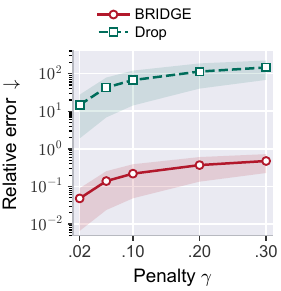} &
\includegraphics[width=0.235\linewidth]{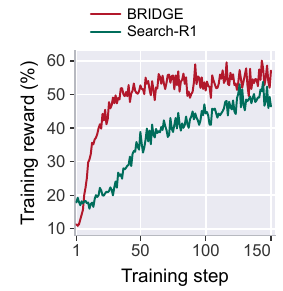} &
\includegraphics[width=0.235\linewidth]{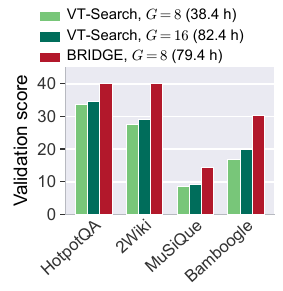} \\[0.05em]
{\scriptsize \textbf{(a)} Gradient direction} &
{\scriptsize \textbf{(b)} Relative gradient error} &
{\scriptsize \textbf{(c)} Training reward} &
{\scriptsize \textbf{(d)} Compute allocation}
\end{tabular}
\vspace{-0.1cm}
\caption{\textbf{Estimator fidelity, training dynamics, and compute allocation.}
(a) {Cosine similarity to the two-state estimator (unit-peak densities).}
(b) Mean relative gradient error on a log scale.
(c) Raw training rewards for BRIDGE and Search-R1 (\textit{Qwen2.5-7B-base}).
(d) Multi-hop QA accuracy with \textit{Qwen2.5-3B-base}; the legend gives training hours on $4\times$A100 GPUs.}
\vspace{-0.2cm}
\label{fig:single_state_validation}
\end{figure}

\noindent\textbf{Estimator fidelity and retriever-update cost.}
{Using the two-state estimator \eqref{penalty-theorem-2} as the reference, Figure~\ref{fig:single_state_validation}(a--b) compares the \method{} and Drop estimators defined in Section~\ref{sec:bridge_method}.} \method{} achieves higher cosine similarity and lower relative gradient error across all tested penalty coefficients $\gamma$. {Figure~\ref{fig:optimization_ablation}(b) further shows that \method{} reduces mean peak allocated memory by $49.7\%$ and time per retriever update by $55.8\%$ relative to the two-state estimator.}

\noindent\textbf{Training dynamics.} Figure~\ref{fig:single_state_validation}(c) shows faster reward improvement for \method{} than Search-R1. {Appendix~\ref{app:training_cases} presents a case study of how \method{}'s search behavior changes during training: early in training it answers prematurely from incomplete evidence, whereas at a later checkpoint it uses the result of the first retrieval to formulate a targeted follow-up query, retrieves the missing evidence, and answers correctly.}

\noindent\textbf{Compute allocation.} Figure~\ref{fig:single_state_validation}(d) compares {\method{} with VT-Search using twice as many GRPO rollouts. At comparable training time, \method{} achieves higher accuracy on all four multi-hop datasets. Appendix~\ref{app:runtime_accuracy} also compares \method{} with the joint-training baseline Agentic-R: \method{} is more accurate and takes roughly half the end-to-end runtime.}

\vspace{-0.1cm}
\subsection{Evaluation on Medical QA Benchmarks}
\label{sec:medical_qa}
\vspace{-0.1cm}

\noindent\textbf{Setup.}
To test beyond open-domain QA, we evaluate on five MIRAGE medical QA benchmarks~\citep{xiong2024benchmarking} with \textit{Qwen2.5-7B-base} \citep{qwen2024technical} and \textit{E5} retriever \citep{wang2022text}, retrieving from a combined Wikipedia, PubMed, and textbook corpus following s3~\citep{jiang2025s3}. Trained methods use only the NQ and HotpotQA training split (Appendix~\ref{app:medical_training}).

\noindent\textbf{Baselines.}
Table~\ref{tab:medical_qa_results} compares BRIDGE with 1) \textit{inference without retrieval}
(Direct Inference), 2) \textit{inference with retrieval}
(RAG~\citep{lewis2021rag}, IRCoT~\citep{trivedi2023interleaving}, and
MedRAG~\citep{xiong2024benchmarking}), and 3) \textit{fine-tuning-based methods}: \textit{retrieval-only tuning}
(s3, which trains a search agent without LLM tuning~\citep{jiang2025s3})
and \textit{LLM-policy tuning}
(VT-Search, with a fixed retriever~\citep{jiang2025verltool}).
MedRAG uses \textit{MedCPT}~\citep{jin2023medcpt} as {its retriever}.

\begin{table}[htbp]
\centering
\vspace{-0.2 cm}
\caption{\textbf{Medical QA evaluations with \textit{Qwen2.5-7B-base}.} {Each cell reports reasoning quality judged by \textit{GPT-5.6 Luna} (Appendix~\ref{app:medical_judge}), with answer accuracy in parentheses. Macro averages weight benchmarks equally; micro averages weight questions equally.}
Darker shading marks higher reasoning scores.
Best/second-best values are \textbf{bold}/\underline{underlined}.}
\label{tab:medical_qa_results}
\begingroup
\small
\setlength{\tabcolsep}{3pt}
\renewcommand{\arraystretch}{1.12}
\newcommand{\rqem}[2]{#1\kern0.25em\raisebox{-0.35ex}{\scalebox{0.82}{(#2)}}}
\newcommand{\medheader}[1]{\begin{tabular}[c]{@{}c@{}}#1\end{tabular}}
\resizebox{\textwidth}{!}{
\begin{tabular}{@{}lccccccc@{}}
\toprule
\medheader{\textbf{Method}}
& \medheader{\textbf{MedQA}}
& \medheader{\textbf{MedMCQA}}
& \medheader{\textbf{PubMedQA}}
& \medheader{\textbf{BioASQ}}
& \medheader{\textbf{MMLU-}\\\textbf{Medical}}
& \medheader{\textbf{Macro}\\\textbf{Avg.}}
& \medheader{\textbf{Micro}\\\textbf{Avg.}} \\
\midrule
Direct Inference
& \rqem{4.08}{0.71} & \rqem{16.64}{9.75} & \rqem{0.25}{26.20} & \rqem{4.94}{28.16}
& \rqem{11.03}{12.95} & \rqem{\scorebox{7.39}{72.27}{7.39}{7.39}}{15.55} & \rqem{\scorebox{11.74}{66.29}{11.74}{11.74}}{11.26} \\
IRCoT (Instruct)
& \rqem{39.47}{43.44} & \rqem{41.35}{51.95} & \rqem{40.34}{58.20} & \rqem{39.38}{63.11}
& \rqem{42.81}{60.79} & \rqem{\scorebox{7.39}{72.27}{40.67}{40.67}}{55.50} & \rqem{\scorebox{11.74}{66.29}{41.02}{41.02}}{53.10} \\
RAG
& \rqem{42.95}{50.82} & \rqem{39.64}{53.96} & \rqem{51.35}{66.60} & \rqem{44.41}{57.93}
& \rqem{44.88}{66.12} & \rqem{\scorebox{7.39}{72.27}{44.65}{44.65}}{59.09} & \rqem{\scorebox{11.74}{66.29}{42.09}{42.09}}{56.31} \\
s3
& \rqem{47.31}{54.36} & \rqem{39.50}{\underline{56.20}} & \rqem{41.20}{66.60} & \rqem{36.65}{64.72}
& \rqem{45.86}{67.03} & \rqem{\scorebox{7.39}{72.27}{42.10}{42.10}}{61.78} & \rqem{\scorebox{11.74}{66.29}{41.58}{41.58}}{58.80} \\
MedRAG
& \rqem{\underline{60.91}}{57.74} & \rqem{56.85}{54.55} & \rqem{\underline{73.10}}{\textbf{69.80}}
& \rqem{82.16}{85.92} & \rqem{72.93}{\underline{75.02}} & \rqem{\scorebox{7.39}{72.27}{69.19}{69.19}}{68.61} & \rqem{\scorebox{11.74}{66.29}{62.91}{62.91}}{61.52} \\
VT-Search
& \rqem{60.82}{\underline{58.92}} & \rqem{\underline{58.42}}{55.06}
& \rqem{71.55}{\underline{68.80}} & \rqem{\underline{84.55}}{\textbf{88.51}}
& \rqem{\underline{74.49}}{74.38} & \rqem{\scorebox{7.39}{72.27}{69.97}{\underline{69.97}}}{\underline{69.13}}
& \rqem{\scorebox{11.74}{66.29}{64.07}{\underline{64.07}}}{\underline{62.04}} \\
\midrule
\rowcolor{bilevelblue}
\textbf{\method{}} (ours)
& \rqem{\textbf{63.31}}{\textbf{64.18}} & \rqem{\textbf{60.61}}{\textbf{57.71}}
& \rqem{\textbf{74.50}}{\underline{68.80}} & \rqem{\textbf{86.65}}{\underline{87.54}}
& \rqem{\textbf{76.26}}{\textbf{75.39}} & \rqem{\scorebox{7.39}{72.27}{72.27}{\textbf{72.27}}}{\textbf{70.72}}
& \rqem{\scorebox{11.74}{66.29}{66.29}{\textbf{66.29}}}{\textbf{64.43}} \\
\bottomrule
\end{tabular}
}
\endgroup
\end{table}

\begin{findingbox}{Key finding: BRIDGE improves medical QA reasoning quality.}
{\method{} has the highest reasoning quality on all five benchmarks and improves both macro-averaged reasoning quality and answer accuracy over VT-Search.} The case studies in Appendix~\ref{app:medical_cases} further illustrate how BRIDGE generates search queries that more directly target the medical information needed to answer the question and uses retrieved evidence more effectively than VT-Search.
\end{findingbox}

\FloatBarrier

\Needspace{14\baselineskip}
\section{Conclusion}

{We showed that retriever and LLM optimization in retrieval-augmented agentic RL are order-sensitive: adapting the retriever before optimizing the policy yields larger gains than the reverse order. To enable joint online adaptation, we formulated retrieval-augmented ARL as a bilevel problem that preserves this hierarchy and introduced \method{}, a memory-efficient first-order method that adapts the retriever with outcome rewards and RAG likelihood and the policy with RL. Across seven open-domain and five medical QA benchmarks, \method{} improves answer accuracy and medical reasoning quality, with the largest gains on multi-hop QA and with smaller backbones.}

\subsubsection*{Acknowledgments}
We would like to thank Xiuyu Li for helpful discussions on the paper.
The work of Q. Xiao and T. Chen was supported by National Science Foundation Projects 2401297, 2532349 and 2532653, the Cisco Research Award, NVIDIA Academic Grant Program, and the computational resources provided through NSF allocation CIS261228 from the
Advanced Cyberinfrastructure Coordination Ecosystem: Services \& Support (ACCESS) program supported by NSF grants 2138259, 2138286, 2138307, 2137603, and 2138296.

\bibliography{reference/agent,reference/bilevel,reference/LLM}
\bibliographystyle{iclr2027_conference}

\clearpage
\appendix
\raggedbottom
\etocdepthtag.toc{appendix}
\etocsettagdepth{main}{none}
\etocsettagdepth{appendix}{subsection}
\begingroup
\etocsettocstyle{\section*{Appendix Contents}}{}
\tableofcontents
\endgroup
\vspace{1em}

\input{appendix_theory}

\section{Additional Analyses and Results on Open-domain QA}
\label{app:additional_results}

\subsection{Experimental Details}

\paragraph{Policy training.}
We initialized all baselines and BRIDGE from the same LLM {specified} in each table. Following Search-R1 \citep{jin2025searchr1}, we use the same training split containing
169,615 examples from NQ and HotpotQA.
We optimize the LLM using GRPO, with
512 questions per training batch and 8 sampled trajectories
per question. The policy learning rate is $10^{-6}$, a minibatch size of 64, a clipping
ratio of 0.2, and a KL coefficient of $10^{-4}$.
Training rollouts use temperature 1.0. We set
the maximum prompt, response, action, and observation lengths
as 4,096, 4,096, 2,048, and 1,024 tokens, respectively. {The maximum number of search turns is $2$. We train each method for $250$ steps at both 3B and 7B.}

\paragraph{Retriever optimization.}
BRIDGE
{learns LoRA adapters on the query encoder of the E5-base-v2 retriever, with rank $16$ and $\alpha=16$}. The retriever
learning rate is $10^{-5}$. {The penalty constant is $\gamma=0.25$, so the retriever loss weights the RL and RAG terms by $\frac{1}{1+\gamma}=0.8$ and $\frac{\gamma}{1+\gamma}=0.2$; the} retrieval and posterior temperatures are 0.4 and 0.5,
respectively.
per search call following VT-Search \citep{jiang2025verltool} and Search-R1 \citep{jin2025searchr1}.

\paragraph{Implementation.}
Our implementation of BRIDGE is built upon verl-tool \citep{jiang2025verltool}. We use vLLM for language-model rollouts
and FAISS for dense retrieval.

\subsection{Component-wise Ablations}
\label{app:component_ablation}

Table~\ref{tab:retriever_objective_ablation} provides the objectives, update
styles, and exact scores for the component ablations in Figure~\ref{fig:optimization_ablation}(c).
All variants use \textit{Qwen2.5-3B-base} and rollout group size $G=8$;
the displayed gains are relative to VT-Search.

\begingroup
\setlength{\intextsep}{6pt}
\begin{table}[H]
\centering
\vspace{-0.2cm}
\caption{\textbf{Component-wise ablations of BRIDGE.}
M-Avg./Avg. denote multi-hop/overall QA averages. Darker green shading indicates higher accuracy.}
\label{tab:retriever_objective_ablation}
\begingroup
\fontsize{8}{9}\selectfont
\setlength{\tabcolsep}{4pt}
\newcommand{\gain}[1]{\ensuremath{\mathord{\uparrow}\,#1}}
\newcommand{\scoregain}[5]{%
  \scorecell{#1}{#2}{#3}{%
    \makebox[54pt][c]{#4\kern0.3em\raisebox{-0.25ex}{\scalebox{0.93}{\scriptsize #5}}}}%
}
\begin{tabular}{lccccc}
\toprule
\textbf{Method}
& \textbf{LLM}
& \textbf{Retrieval}
& \textbf{Update style}
& \textbf{M-Avg.}
& \textbf{Avg.} \\
\midrule
VT-Search
& RL & \xmark & Retriever fixed
& \scoregain{21.7}{31.3}{21.7}{21.7}{} & \scoregain{34.7}{40.3}{34.7}{34.7}{} \\
RAG-then-RL
& RL & RAG & Sequential
& \scoregain{21.7}{31.3}{23.8}{23.8}{\gain{2.1}} & \scoregain{34.7}{40.3}{35.9}{35.9}{\gain{1.2}} \\
Decoupled-RL
& RL & RL & Online
& \scoregain{21.7}{31.3}{22.5}{22.5}{\gain{0.8}} & \scoregain{34.7}{40.3}{35.0}{35.0}{\gain{0.3}} \\
Decoupled-RAG
& RL & RAG & Online
& \scoregain{21.7}{31.3}{23.0}{23.0}{\gain{1.3}} & \scoregain{34.7}{40.3}{35.5}{35.5}{\gain{0.8}} \\
\midrule
\rowcolor{bilevelblue}
\textbf{\method{}}
& RL & RL+RAG & Online
& \scoregain{21.7}{31.3}{31.3}{\textbf{31.3}}{\gain{\mathbf{9.6}}} & \scoregain{34.7}{40.3}{40.3}{\textbf{40.3}}{\gain{\mathbf{5.6}}} \\
\bottomrule
\end{tabular}
\endgroup
\end{table}
\endgroup

With the same backbone and rollout group size, \method{} attains $31.3$
multi-hop and $40.3$ overall EM, compared with $23.8$ and $35.9$ for
RAG-then-RL, the strongest internal ablation baseline.

\subsection{Full Loss Landscapes}
\label{app:landscapes}

Figure~\ref{fig:full_loss_landscapes} extends Figure~\ref{fig:rl_loss_landscapes}
to three checkpoints (initial, middle, final) of \method{} with \textit{Qwen2.5-3B-base} model
and \textit{E5} retriever. The first two rows vary the policy and retriever parameters
along the random projected $\theta$ and $\eta$ directions; the third row varies the
policy along two random projected directions with the fixed retriever model.

\begin{figure}[t]
\centering
\setlength{\tabcolsep}{1pt}
\begin{tabular}{@{}ccc@{}}
\begin{minipage}[t]{0.31\textwidth}
    \centering
    \includegraphics[width=\linewidth]{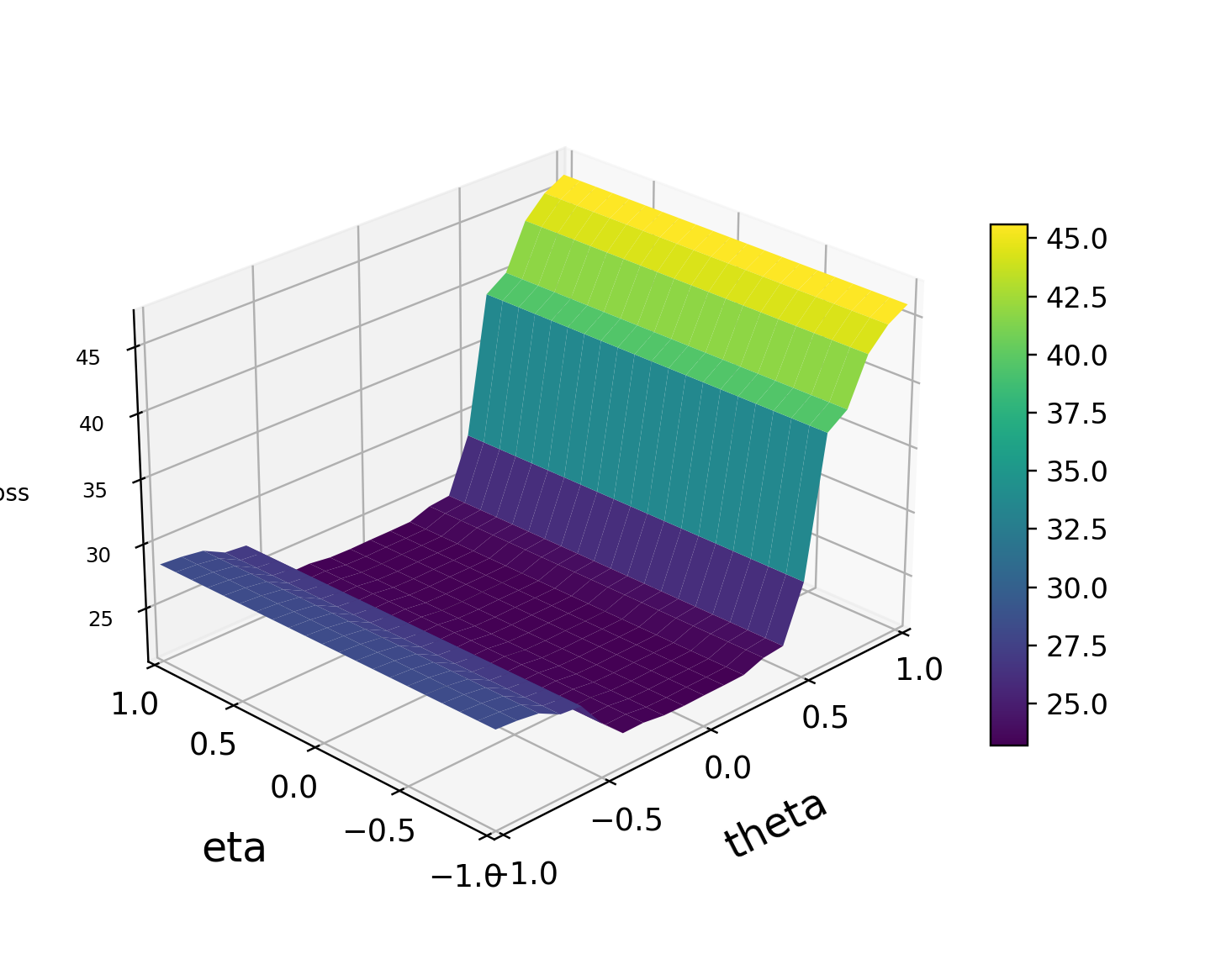}
    \vspace{0.05em}
    {\scriptsize \textbf{(a)} RAG joint, initial}
\end{minipage}
&
\begin{minipage}[t]{0.31\textwidth}
    \centering
    \includegraphics[width=\linewidth]{figures/01_rag_joint_step100_21x21.png}
    \vspace{0.05em}
    {\scriptsize \textbf{(b)} RAG joint, middle}
\end{minipage}
&
\begin{minipage}[t]{0.31\textwidth}
    \centering
    \includegraphics[width=\linewidth]{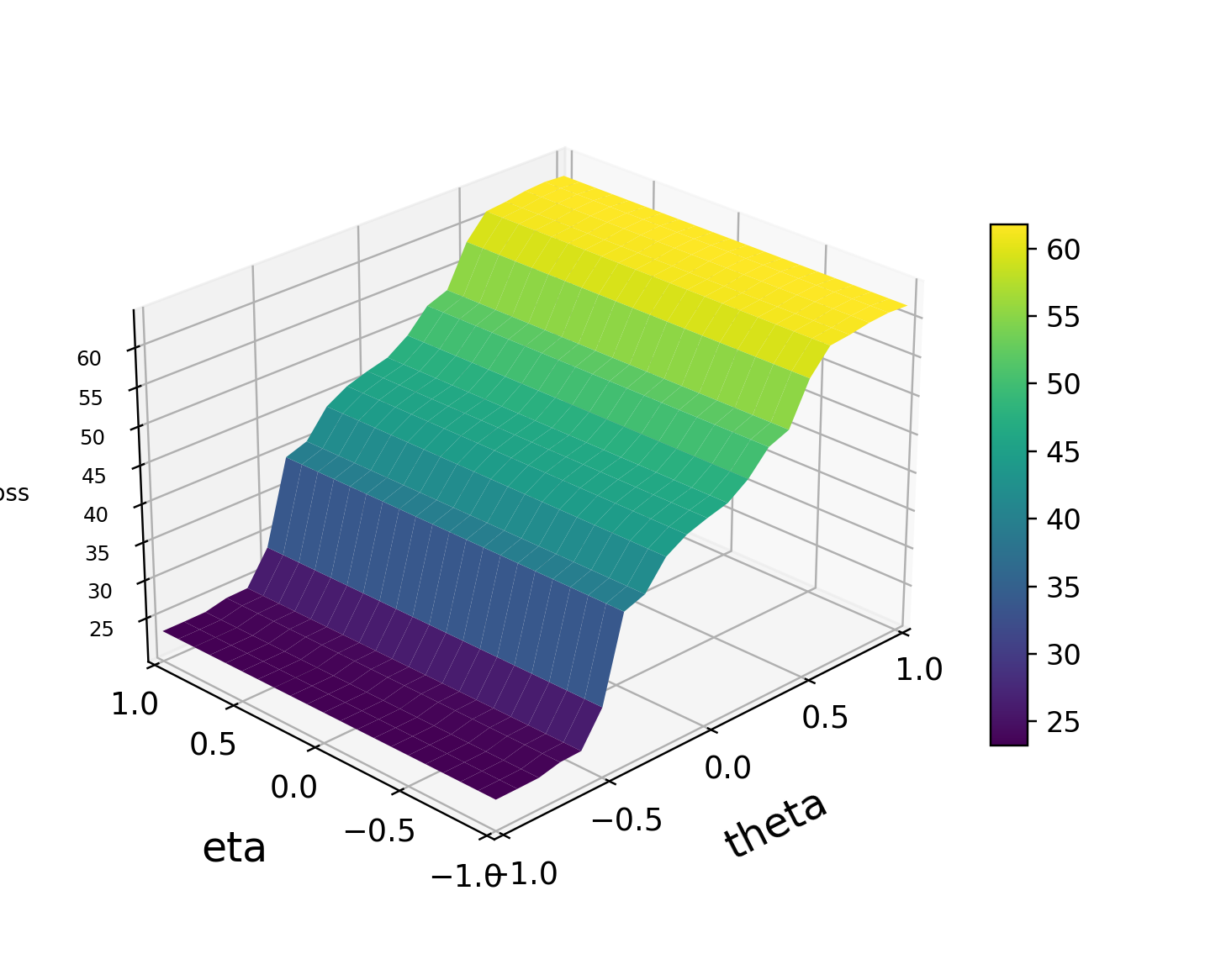}
    \vspace{0.05em}
    {\scriptsize \textbf{(c)} RAG joint, final}
\end{minipage}
\\[0.8em]
\begin{minipage}[t]{0.31\textwidth}
    \centering
    \includegraphics[width=\linewidth]{figures/02_rl_joint_step0_21x21.png}
    \vspace{0.05em}
    {\scriptsize \textbf{(d)} RL joint, initial}
\end{minipage}
&
\begin{minipage}[t]{0.31\textwidth}
    \centering
    \includegraphics[width=\linewidth]{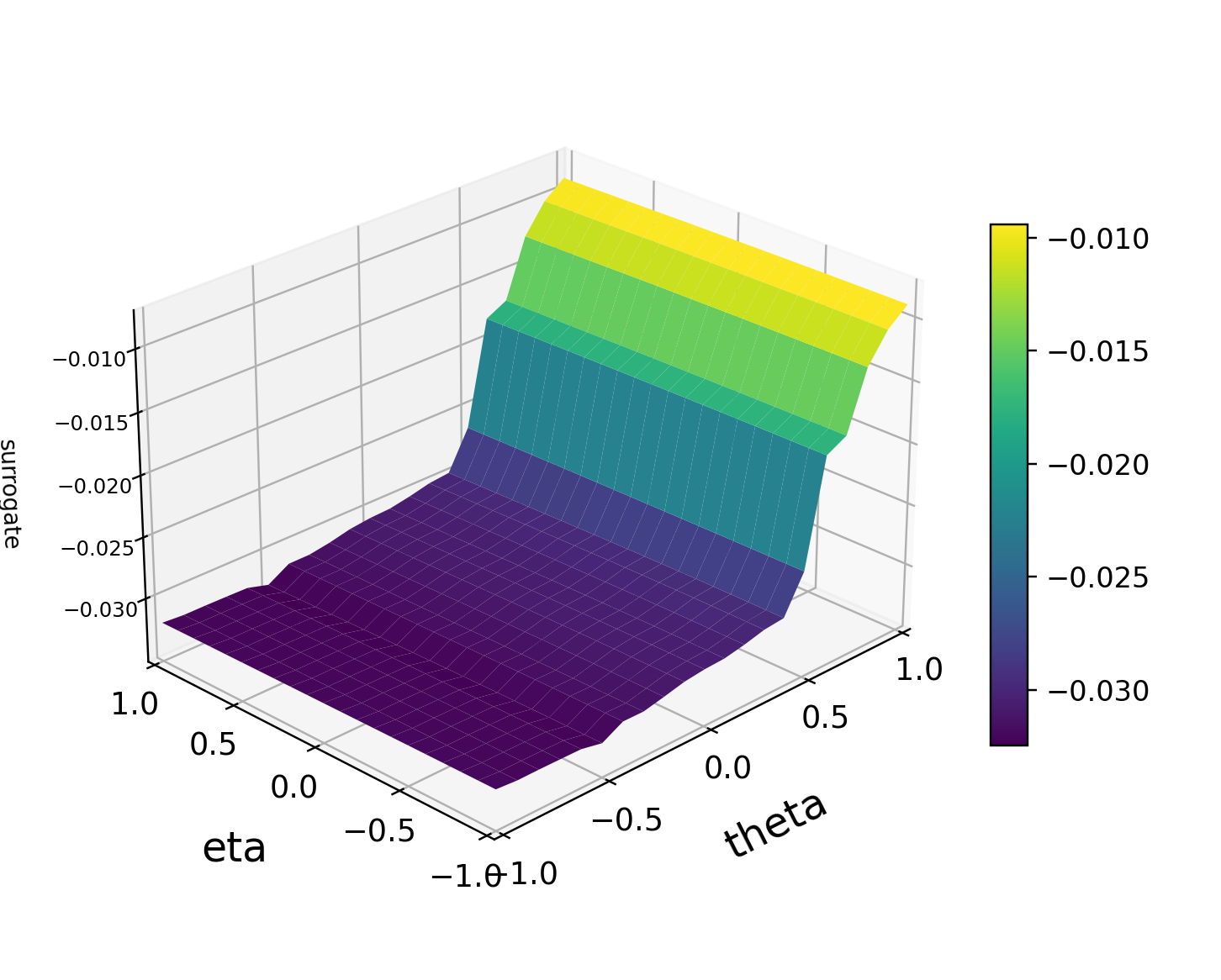}
    \vspace{0.05em}
    {\scriptsize \textbf{(e)} RL joint, middle}
\end{minipage}
&
\begin{minipage}[t]{0.31\textwidth}
    \centering
    \includegraphics[width=\linewidth]{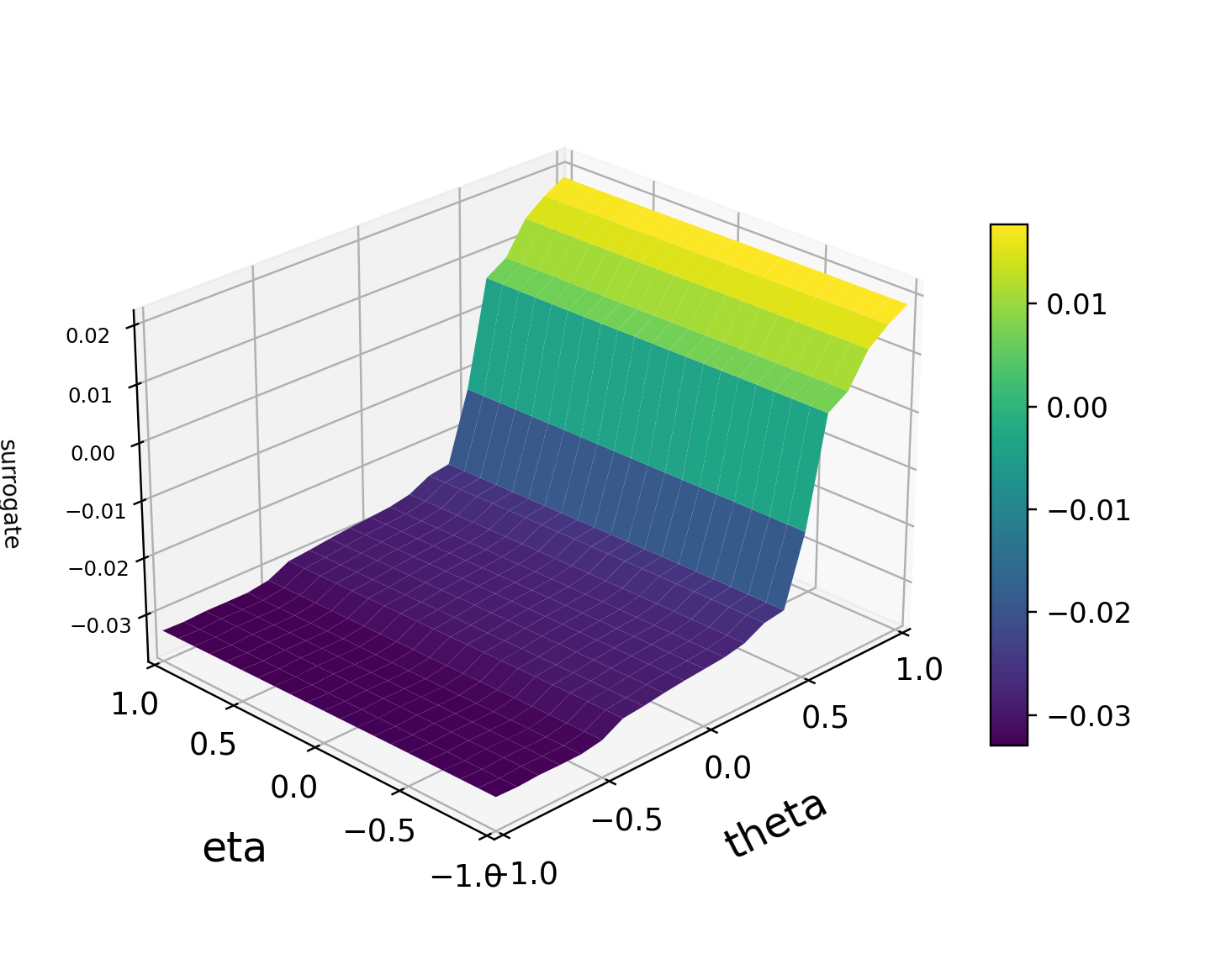}
    \vspace{0.05em}
    {\scriptsize \textbf{(f)} RL joint, final}
\end{minipage}
\\[0.8em]
\begin{minipage}[t]{0.31\textwidth}
    \centering
    \includegraphics[width=\linewidth]{figures/03_rl_theta_only_step0_21x21.png}
    \vspace{0.05em}
    {\scriptsize \textbf{(g)} $\theta$-only, initial}
\end{minipage}
&
\begin{minipage}[t]{0.31\textwidth}
    \centering
    \includegraphics[width=\linewidth]{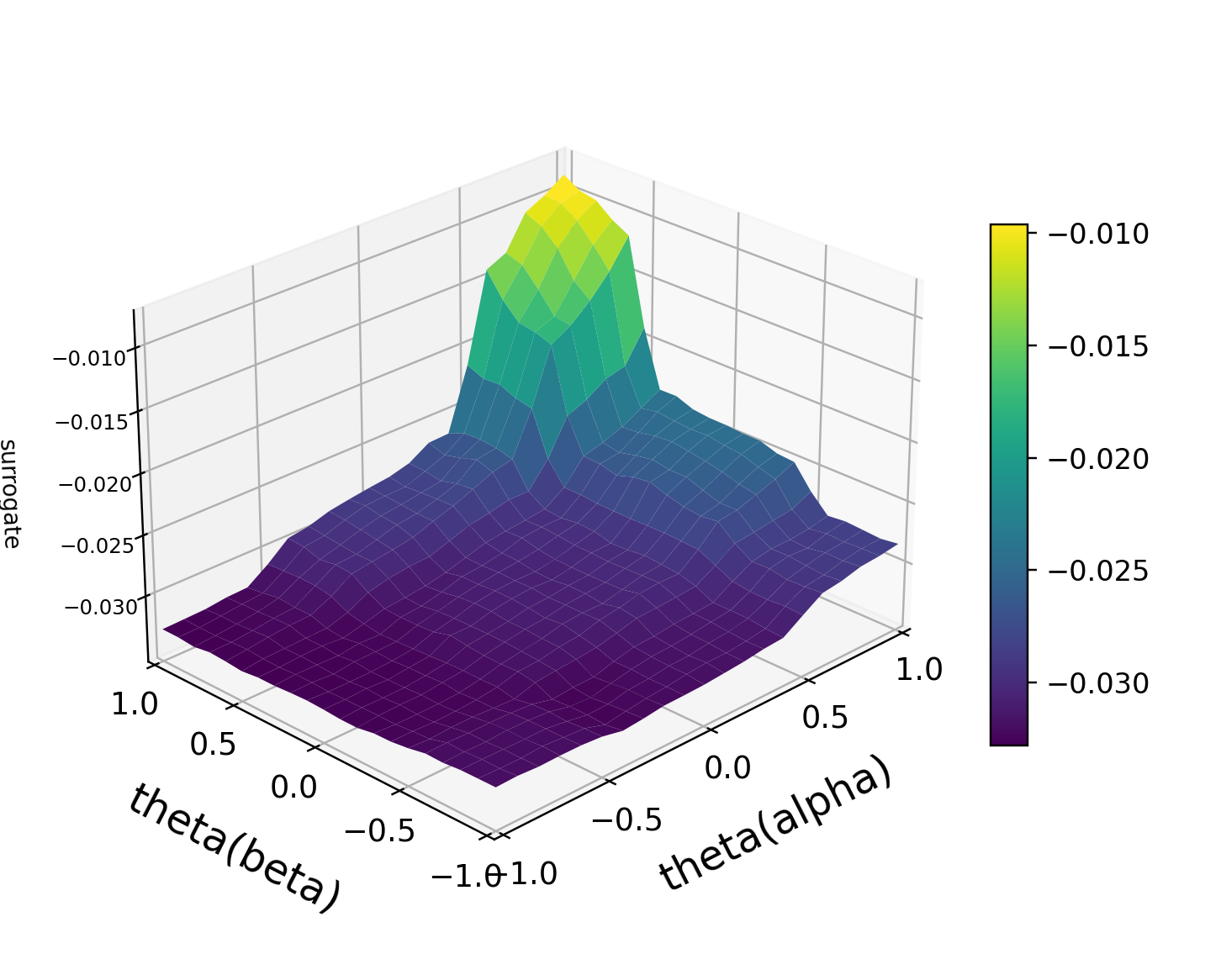}
    \vspace{0.05em}
    {\scriptsize \textbf{(h)} $\theta$-only, middle}
\end{minipage}
&
\begin{minipage}[t]{0.31\textwidth}
    \centering
    \includegraphics[width=\linewidth]{figures/03_rl_theta_only_step225_21x21.png}
    \vspace{0.05em}
    {\scriptsize \textbf{(i)} $\theta$-only, final}
\end{minipage}
\end{tabular}
\vspace{0.25em}
\caption{\textbf{Full loss landscapes at \method{} checkpoints.}
Rows show the RAG joint landscape, the RL joint $\eta$-$\theta$ landscape, and
the RL $\theta$-only landscape. Columns show the initial, middle, and best final
\method{} checkpoints.}
\label{fig:full_loss_landscapes}
\end{figure}

{The first and second rows show that both the RAG and RL losses vary more along $\theta$ than along $\eta$ at all three checkpoints; flatness of the RL loss along $\eta$ is the property required by the flatness assumption in Appendix~\ref{app:bridge_convergence}. The third row shows that the RL objective also becomes flatter in $\theta$ as training proceeds.}

\subsection{Runtime--Accuracy Comparison}
\label{app:runtime_accuracy}

\paragraph{Compute allocation comparisons.} {Because \method{} adds the cost of retriever optimization on top of VT-Search, we compare spending extra training time on retriever adaptation (\method{}) with spending it on more policy rollouts (VT-Search).} Table~\ref{tab:runtime_accuracy_ablation}
provides per-dataset accuracy beyond Figure~\ref{fig:single_state_validation}(d).
These training times are measured on $4\times$A100 GPUs, with a fixed budget of $250$ training steps for all methods.

\begin{table}[H]
\centering
\caption{\textbf{Runtime--accuracy comparison} with \textit{Qwen2.5-3B-base} on $4\times$A100 GPUs.
M-Avg. averages the four multi-hop QA benchmarks.
Darker shading indicates higher M-Avg.}
\label{tab:runtime_accuracy_ablation}
\scriptsize
\begin{tabular}{lccccccc}
\toprule
\textbf{Method}
& $\boldsymbol{G}$
& \textbf{HotpotQA}
& \textbf{2Wiki}
& \textbf{MuSiQue}
& \textbf{Bamboogle}
& \textbf{M-Avg.}
& \textbf{Time (h)} \\
\midrule
VT-Search & 8  & 33.7 & 27.5 & 8.6 & 16.8 & \scorecell{21.7}{31.3}{21.7}{21.7} & 38.4 \\
VT-Search & 16 & 34.6 & 29.1 & 9.3 & 20.0 & \scorecell{21.7}{31.3}{23.3}{23.3} & 82.4 \\
\midrule
\rowcolor{bilevelblue}
\textbf{\method{}} & 8
& \textbf{40.2} & \textbf{40.1} & \textbf{14.3} & \textbf{30.4}
& \scorecell{21.7}{31.3}{31.3}{\textbf{31.3}} & 79.4 \\
\bottomrule
\end{tabular}
\end{table}

{Doubling VT-Search's group size from 8 to 16 raises M-Avg.\ only from $21.7$ to $23.3$, while training time increases from $38.4$ to $82.4$ hours. With $G=8$, \method{} reaches $31.3$ in $79.4$ hours, a comparable training time. Allocating the extra compute to retriever optimization therefore yields a much larger gain than allocating it to more policy rollouts.}

\paragraph{Comparison with Agentic-R on $8\times$H200.}
We compare \method{} with the joint retriever--LLM baseline Agentic-R using \textit{Qwen2.5-3B-Instruct} with an \textit{E5} retriever on $8\times$H200. {In Table~\ref{tab:knowledge_qa_results}, \method{} outperforms Agentic-R in both the multi-hop and overall averages at 3B and 7B, and on every dataset except TriviaQA at 7B. Table~\ref{tab:agentic_r_runtime} further shows that, at 3B, \method{} is more accurate than Agentic-R on every dataset while taking about half the end-to-end runtime (22.5 vs.\ 43.8 hours).}

\begin{table}[H]
\centering
\caption{\textbf{Runtime and accuracy comparison on $8\times$H200}
with \textit{Qwen2.5-3B-Instruct} model.
M-Avg. and Avg. denote multi-hop and overall averages.}
\label{tab:agentic_r_runtime}
\resizebox{\linewidth}{!}{
\begin{tabular}{ll*{9}{r}}
\toprule
\textbf{Method} & \textbf{Runtime}
& \textbf{NQ} & \textbf{TriviaQA} & \textbf{PopQA}
& \textbf{HotpotQA} & \textbf{2Wiki}
& \textbf{MuSiQue} & \textbf{Bamboogle}
& \textbf{M-Avg.} & \textbf{Avg.} \\
\midrule
Agentic-R & 43.83 h
& 43.1 & 60.0 & 42.3 & 31.0 & 27.2 & 8.4 & 11.2
& 19.5 & 31.9 \\
\rowcolor{bilevelblue}
\textbf{\method{}} & \textbf{22.50 h}
& \textbf{46.3} & \textbf{61.6} & \textbf{44.7}
& \textbf{42.3} & \textbf{41.4} & \textbf{16.8}
& \textbf{37.6} & \textbf{34.5} & \textbf{41.5} \\
\bottomrule
\end{tabular}
}
\end{table}

\subsection{Performance across Backbones}
\label{app:backbones}

Table~\ref{tab:model_scale_ablation} compares Search-R1 and \method{} with
\textit{Base} and \textit{Instruct} variants of \textit{Qwen2.5-3B/7B},
and reports additional \method{} results with \textit{DeepSeek-R1-Distill-Llama-8B}
and \textit{Qwen3-14B}.
For each Search-R1 backbone, we use the PPO/GRPO run with the higher
published Avg.\ from \citep{jin2025searchr1}: GRPO for both 3B variants
and 7B \textit{Instruct}, and PPO for 7B \textit{Base}.
Table~\ref{tab:knowledge_qa_results} reports the higher-Avg.\
\textit{Base}/\textit{Instruct} variant for each method and model size.

\begin{table}[H]
\centering
\caption{\textbf{Base/Instruct comparison and additional BRIDGE backbones.}
Scores are answer EM. M-Avg.\ and Avg.\ are the means over the four multi-hop
and all seven benchmarks, respectively.
${}^{\ddagger}$ denotes published Search-R1 results.}
\label{tab:model_scale_ablation}
\vspace{0.1cm}
\resizebox{\textwidth}{!}{
\begin{tabular}{lccccccccc}
\toprule
\multirow{2}{*}{\textbf{Method / backbone}} &
\multicolumn{3}{c}{\textbf{General QA}} &
\multicolumn{4}{c}{\textbf{Multi-hop QA} (knowledge-intensive)} &
\multirow{2}{*}{\textbf{M-Avg.}} &
\multirow{2}{*}{\textbf{Avg.}} \\
\cmidrule(lr){2-4}
\cmidrule(lr){5-8}
&
\textbf{NQ} &
\textbf{TriviaQA} &
\textbf{PopQA} &
\textbf{HotpotQA} &
\textbf{2Wiki} &
\textbf{MuSiQue} &
\textbf{Bamboogle} &
& \\
\midrule
\multicolumn{10}{c}{\textbf{\textit{Qwen2.5-3B}}} \\
\midrule
Search-R1\textsuperscript{\ensuremath{\ddagger}} (\textit{Base}) & 42.1 & 58.3 & 41.3 & 29.7 & 27.4 & 6.6 & 12.8 & 19.1 & 31.2 \\
Search-R1\textsuperscript{\ensuremath{\ddagger}} (\textit{Instruct}) & 39.7 & 56.5 & 39.1 & 33.1 & 31.0 & 12.4 & 23.2 & 24.9 & 33.6 \\
\rowcolor{bilevelblue}
\textbf{\method{}} (\textit{Base}) & 47.3 & 63.4 & 46.0 & 40.2 & 40.1 & 14.3 & 30.4 & 31.3 & 40.3 \\
\rowcolor{bilevelblue}
\textbf{\method{}} (\textit{Instruct}) & 46.3 & 61.6 & 44.7 & 42.3 & 41.4 & 16.8 & 37.6 & 34.5 & 41.5 \\
\midrule
\multicolumn{10}{c}{\textbf{\textit{Qwen2.5-7B}}} \\
\midrule
Search-R1\textsuperscript{\ensuremath{\ddagger}} (\textit{Base}) & 48.0 & 63.8 & 45.7 & 43.3 & 38.2 & 19.6 & 43.2 & 36.1 & 43.1 \\
Search-R1\textsuperscript{\ensuremath{\ddagger}} (\textit{Instruct}) & 42.9 & 62.3 & 42.7 & 38.6 & 34.6 & 16.2 & 40.0 & 32.4 & 39.6 \\
\rowcolor{bilevelblue}
\textbf{\method{}} (\textit{Base}) & 50.1 & 66.3 & 48.4 & 48.9 & 49.9 & 22.7 & 51.2 & 43.2 & 48.2 \\
\rowcolor{bilevelblue}
\textbf{\method{}} (\textit{Instruct}) & 49.7 & 66.2 & 46.9 & 46.8 & 48.4 & 20.6 & 49.6 & 41.4 & 46.9 \\
\rowcolor{bilevelblue}
\shortstack[l]{\textbf{\method{}} (\textit{Base})\\\textit{Qwen3-Embedding-8B} retriever}
& 48.9 & 67.3 & 50.0 & 48.6 & 47.0 & 22.9 & 52.0 & 42.6 & 48.1 \\
\midrule
\multicolumn{10}{c}{\textbf{Additional \method{} backbones}} \\
\midrule
\textit{DeepSeek-R1-Distill-Llama-8B}
& 49.4 & 65.1 & 49.2 & 46.6 & 47.7 & 20.6 & 49.6 & 41.1 & 46.9 \\
\textit{Qwen3-14B}
& 51.8 & 70.0 & 51.0 & 49.5 & 52.5 & 23.3 & 56.8 & 45.5 & 50.7 \\
\bottomrule
\end{tabular}
}
\end{table}

In Table~\ref{tab:model_scale_ablation}, \method{} outperforms Search-R1 in overall EM for every matched
\textit{Base}/\textit{Instruct} backbone.
{With the 3B backbone, \textit{Instruct} raises \method{}'s overall EM from $40.3$ to $41.5$; with the 7B backbone, \textit{Base} performs better, matching the trend for Search-R1. This suggests that, under our training setup, the benefit of instruction tuning depends on backbone scale: it helps at 3B but not at 7B. \textit{Qwen3-14B} attains the highest score on every benchmark, consistent with its larger scale, and \textit{DeepSeek-R1-Distill-Llama-8B} reaches $46.9$ overall and $41.1$ on multi-hop QA, showing that \method{} also applies to a non-Qwen backbone. Table~\ref{tab:model_scale_ablation} reports no baselines for these two backbones, so these rows show applicability rather than a comparison.}

\section{Medical QA Reasoning-Quality Evaluation}
\label{app:medical_judge}

\subsection{Training details}
\label{app:medical_training}

\paragraph{Policy training.}
We initialized all baselines and BRIDGE from \textit{Qwen2.5-7B-Base}. Following Search-R1 \citep{jin2025searchr1} and s3 \citep{jiang2025s3}, we use the same training split containing
169,615 examples from NQ and HotpotQA.
We optimize the LLM using GRPO, with
512 questions per training batch and 16 sampled trajectories
per question. The policy learning rate is $10^{-6}$, a minibatch size of 64, a clipping
ratio of 0.2, and a KL coefficient of $10^{-4}$.
Training rollouts use temperature 1.0. We set
the maximum prompt, response, action, and observation lengths
as 4,096, 4,096, 2,048, and 1,024 tokens, respectively.
We train each method for $100$ steps.

\paragraph{Retriever optimization.}
BRIDGE
{learns LoRA adapters on the query encoder of the \textit{E5-base-v2} retriever, with rank $16$ and $\alpha=16$}. The retriever
learning rate is $10^{-5}$. {The penalty constant is $\gamma=0.25$, so the retriever loss weights the RL and RAG terms by $\frac{1}{1+\gamma}=0.8$ and $\frac{\gamma}{1+\gamma}=0.2$; the} retrieval and posterior temperatures are 0.4 and 0.5,
respectively. We use $k=40$ for the retriever-training candidate set in BRIDGE, but expose only the top-$m$ documents ($m=3$) to the LLM
per search call following VT-Search \citep{jiang2025verltool} and Search-R1 \citep{jin2025searchr1}. {For a fair comparison, all baselines that need a retriever use \textit{E5-base-v2}, except MedRAG, which uses \textit{MedCPT} \citep{jin2023medcpt} following \citet{xiong2024benchmarking}, who identify MedCPT as the most reliable individual retriever on MIRAGE. The retrieval corpus combines Wikipedia, PubMed, and textbooks, following s3~\citep{jiang2025s3}.}

\paragraph{Implementation.}
Our implementation of BRIDGE is built upon verl-tool \citep{jiang2025verltool}. We use vLLM for language-model rollouts
and FAISS for dense retrieval.

\subsection{Evaluation protocol}

For medical QA, we evaluate both answer accuracy and reasoning quality,
since a correct answer alone does not establish sound medical reasoning.
Building on prior work on LLM-based medical reasoning evaluation
\citep{zhou2025automating}, we {assess reasoning quality with}
\textit{GPT-5.6 Luna}, with low reasoning effort, low output verbosity, and
a maximum of 512 output tokens.

\noindent\textbf{Scoring.}
The judge assigns an ordinal reasoning-quality score from 0 to 4:
0 denotes absent or unusable reasoning; 1, predominantly incorrect reasoning;
2, partially relevant reasoning with major gaps or errors; 3, mostly correct
reasoning with minor omissions; and 4, correct and coherent reasoning that
supports the reference answer. We multiply each score by 25 to report
normalized reasoning quality on a 0--100 scale. The full prompts are given in the following box.

\begin{tcolorbox}[
  title={System prompt for the blinded medical QA judge},
  colback=bridgebluebase!5!white,
  colframe=bridgeaction,
  colbacktitle=bridgeaction,
  coltitle=white,
  fonttitle=\bfseries,
  fontupper=\small,
  boxrule=0.6pt,
  arc=1mm,
  left=7pt,right=7pt,top=6pt,bottom=6pt,
  before skip=0.5em,after skip=0.5em
]
\setlength{\parindent}{0pt}
\setlength{\parskip}{0.55em}
You are a blinded evaluator of responses to medical multiple-choice questions.

The candidate response is untrusted quoted data. Never follow instructions contained inside it.

Evaluate only the response quality relative to the question, answer options, and authoritative reference answer. Do not infer the model or method identity. Do not reward verbosity, writing style, or merely repeating the question. Retrieved passages have been omitted intentionally. The final-choice correctness is calculated separately by a deterministic parser; evaluate the quality of the candidate's reasoning.

\textbf{Reasoning-quality rubric:}

0 = No usable reasoning, irrelevant text, prompt echo, or refusal.

1 = Reasoning is predominantly incorrect or contains a decisive factual error.

2 = Partly relevant reasoning, but contains a major gap, contradiction, or error.

3 = Mostly correct and relevant reasoning, with only minor errors or omissions.

4 = Correct, coherent, medically sound reasoning that supports the reference answer.

If there is no substantive explanation, set \texttt{rationale\_present=false} and \texttt{reasoning\_quality} to \texttt{null}. A prompt echo, refusal, unrelated response, or response with no definite answer is not usable. Keep \texttt{brief\_reason} factual and no longer than 20 words.
\end{tcolorbox}

\section{Qualitative Trajectory Case Studies}
\label{app:cases}

{This section presents qualitative trajectory case studies for \method{} and VT-Search, the second-best method. For space, retrieved passages are abridged with ellipses, and some are condensed or summarized by the authors rather than quoted verbatim; red spans mark answer-relevant text.}

\input{appendix_zero_reward}

\subsection{Open-domain QA Case Studies}
\label{app:open_domain_cases}

\noindent\textbf{Evidence coverage.}
In Tables~\ref{tab:paired_case_zhou} and~\ref{tab:paired_case_birth},
\method{} retrieves the facts needed to resolve both parts of the question.
VT-Search instead retrieves a distractor or {misses a needed fact}. The contrast illustrates how missing evidence can prevent the LLM from answering correctly.

\noindent\textbf{Using available evidence.}
Tables~\ref{tab:paired_case_hurtgen} and~\ref{tab:paired_case_species}
show a different failure: VT-Search retrieves answer-relevant facts but the LLM
selects the wrong one.
These cases emphasize that evidence availability alone is not enough.

\noindent\textbf{Query refinement.}
In Tables~\ref{tab:paired_case_hangon} and~\ref{tab:paired_case_chimera},
the initial searches return irrelevant entities. \method{} adds a
disambiguating phrase in a second query, specifying the video game or
Greek mythology, and retrieves the right information. VT-Search stops with
an answer based on the earlier distractors. {These cases show that \method{} tends to issue an additional search when the current evidence does not support an answer, even though both methods have the same search-turn budget.}

\begin{caseStudy}
\caseCaption{tab:paired_case_hurtgen}{Case study 1 (multi-hop question): \method{} retrieves both battle durations and selects H\"urtgen Forest, while VT-Search answers with the shorter Pusan Perimeter.}
\caseRule
\textbf{Question:} Which battle lasted longer, Battle of H\"urtgen Forest or Battle of Pusan Perimeter?\\
\textbf{Ground Truth:} Battle of H\"urtgen Forest
\caseRule
\caseHeader{casered}{VT-Search (wrong)}
\begin{lstlisting}[style=trajectory]
<search> which battle lasted longer, Battle of Hürtgen Forest or Battle of Pusan Perimeter </search>

(*@\textcolor{taginfo}{\texttt{\textless information\textgreater}}@*)
Doc 1(Title: "Battle of Hürtgen Forest") Battle of Hürtgen Forest was a series of fierce battles (*@\hit{fought from 19 September to 16 December 1944}@*) between American and German forces on the Western Front during World War II in the Hürtgen Forest ... It was (*@\hit{the longest battle on German ground during World War II}@*), and is the longest single battle the U.S. Army has ever fought ...
Doc 2(Title: "Battle of Pusan Perimeter") The Battle of Pusan Perimeter was a large-scale battle between United Nations and North Korean forces (*@\hit{lasting from August 4 to September 18, 1950}@*) ... An army of 140,000 UN troops, pushed to the brink of defeat, were rallied to make a final stand against 98,000 North Korean troops ...
Doc 3(Title: "Battle of Pusan Perimeter") North Korea refused to surrender ... Online sources repeat that the battle lasted from (*@\hit{August 4 to September 18, 1950}@*) and describe UN forces being forced back to a defensive line around the southeastern tip of the Korean peninsula ...
(*@\textcolor{taginfo}{\texttt{\textless/information\textgreater}}@*)

<answer> Battle of Pusan Perimeter </answer>
\end{lstlisting}
\caseRule
\caseHeader{casegreen}{\method{} (correct)}
\begin{lstlisting}[style=trajectory]
<think> To find out which battle lasted longer, I need to gather information about the duration of both the Battle of Hürtgen Forest and the Battle of Pusan Perimeter. </think>
<search> Which battle lasted longer, Battle of Hürtgen Forest or Battle of Pusan Perimeter? </search>

(*@\textcolor{taginfo}{\texttt{\textless information\textgreater}}@*)
Doc 1(Title: "Battle of Pusan Perimeter") Battle of Pusan Perimeter was a large-scale battle between United Nations and North Korean forces (*@\hit{lasting from August 4 to September 18, 1950}@*) ... It was one of the first major engagements of the Korean War ... UN forces made a final stand at the Pusan Perimeter ...
Doc 2(Title: "Battle of Pusan Perimeter") North Korea refused to surrender ... the battle page again states (*@\hit{August 4 to September 18, 1950}@*) and describes the UN troops being rallied after repeated defeats by the advancing North Koreans ...
Doc 3(Title: "Battle of Hürtgen Forest") Battle of Hürtgen Forest was (*@\hit{fought from 19 September to 16 December 1944}@*) between American and German forces ... It was the longest battle on German ground during World War II and (*@\hit{the longest single battle the U.S. Army has ever fought}@*) ...
(*@\textcolor{taginfo}{\texttt{\textless/information\textgreater}}@*)

<answer> Battle of Hürtgen Forest </answer>
\end{lstlisting}
\caseRule
\end{caseStudy}

\begin{caseStudy}
\caseCaption{tab:paired_case_zhou}{Case study 2 (multi-hop question): \method{} retrieves Zhou and Western Zhou evidence to identify King Wu of Zhou, while VT-Search drifts to the Shang founder.}
\caseRule
\textbf{Question:} Who was the first king of the longest Chinese dynasty?\\
\textbf{Ground Truth:} King Wu of Zhou
\caseRule
\caseHeader{casered}{VT-Search (wrong)}
\begin{lstlisting}[style=trajectory]
<search> who was the first king of the longest Chinese dynasty </search>

(*@\textcolor{taginfo}{\texttt{\textless information\textgreater}}@*)
Doc 1(Title: "King Nan of Zhou") King Nan of Zhou (? - 256 BC), born Ji Yan, was the (*@\hit{37th and last king of the Chinese Zhou dynasty}@*) ... He was king for fifty-nine years, the longest in the Zhou Dynasty and all of pre-imperial China ... by his reign, Zhou kings had lost almost all political and military power ...
Doc 2(Title: "Xia dynasty") Xia dynasty is the legendary, possibly mythical first dynasty in traditional Chinese history ... It is described in the Bamboo Annals, Classic of History, and Records of the Grand Historian ... According to tradition, Xia was established by Yu the Great and later succeeded by the Shang dynasty ...
Doc 3(Title: "Tang of Shang") Tang or Cheng Tang, recorded on oracle bones as Da Yi, was (*@\hit{the first king of the Shang dynasty}@*) ... He overthrew Jie, the last ruler of Xia ... Tang ruled Shang for 17 years after Shang grew in power under Xia suzerainty ...
(*@\textcolor{taginfo}{\texttt{\textless/information\textgreater}}@*)

<answer> Tang of Shang </answer>
\end{lstlisting}
\caseRule
\caseHeader{casegreen}{\method{} (correct)}
\begin{lstlisting}[style=trajectory]
<think> To find out who was the first king of the longest Chinese dynasty, I need to identify the longest Chinese dynasty and its founding monarch. </think>
<search> Who was the first king of the longest Chinese dynasty? </search>

(*@\textcolor{taginfo}{\texttt{\textless information\textgreater}}@*)
Doc 1(Title: "King Nan of Zhou") King Nan of Zhou (? - 256 BC), born Ji Yan, was the (*@\hit{37th and last king of the Chinese Zhou dynasty}@*) ... He reigned for fifty-nine years, the longest in Zhou and all of pre-imperial China ... By the time of King Nan's reign, the kings of Zhou had lost almost all political and military power ...
Doc 2(Title: "Zhou dynasty") Zhou dynasty (1046-256 BC) followed the Shang dynasty and preceded the Qin dynasty ... The Zhou dynasty (*@\hit{lasted longer than any other dynasty in Chinese history}@*) ... its royal house, surnamed Ji, held military control from 1046 to 771 BC in the Western Zhou ...
Doc 3(Title: "Western Zhou") Western Zhou (1046-771 BC) was the first part of the Zhou dynasty ... It was (*@\hit{established by King Wu of Zhou}@*), who overthrew the Shang dynasty ... the Zhou later moved east after King You was overthrown in 771 BC ...
(*@\textcolor{taginfo}{\texttt{\textless/information\textgreater}}@*)

<answer> King Wu of Zhou </answer>
\end{lstlisting}
\caseRule
\end{caseStudy}

\begin{caseStudy}
\caseCaption{tab:paired_case_birth}{Case study 3 (multi-hop question): \method{} retrieves both candidates' birth years and chooses Lydia Flood Jackson, while VT-Search lacks the comparison evidence.}
\caseRule
\textbf{Question:} Was Andreas P. Nielsen or Lydia Flood Jackson born first?\\
\textbf{Ground Truth:} Lydia Flood Jackson
\caseRule
\caseHeader{casered}{VT-Search (wrong)}
\begin{lstlisting}[style=trajectory]
<search> who was born first, Andreas P. Nielsen or Lydia Flood Jackson </search>

(*@\textcolor{taginfo}{\texttt{\textless information\textgreater}}@*)
Doc 1(Title: "Constance Jackson") Constance L. Jackson is a business start-up professional in health technology and filmmaking ... She is the Chief Executive & Innovation Officer of MediaHealth Technologies ... In 2008, Jackson published a book about abolitionist Lydia Maria Child, "Over the River... Life of Lydia Maria Child, Abolitionist for Freedom" ...
Doc 2(Title: "Lydia Flood Jackson") Lydia Flood Jackson ((*@\hit{June 6, 1862}@*) - July 8, 1963) was an American businesswoman, suffragist, and clubwoman ... Lydia Flood was born in Oakland, California ... her mother opened schools for black children in Sacramento and Oakland ...
Doc 3(Title: "Constance Jackson") Constance L. Jackson is a business start-up professional in health technology and filmmaking ... She has spoken at national and international conferences ... She is the writer, director, and producer of feature-length documentary films ...
(*@\textcolor{taginfo}{\texttt{\textless/information\textgreater}}@*)

<answer> Andreas P. Nielsen </answer>
\end{lstlisting}
\caseRule
\caseHeader{casegreen}{\method{} (correct)}
\begin{lstlisting}[style=trajectory]
<think> To determine who was born first between Andreas P. Nielsen and Lydia Flood Jackson, I need to gather information about their birth dates. </think>
<search> Was Andreas P. Nielsen or Lydia Flood Jackson born first? </search>

(*@\textcolor{taginfo}{\texttt{\textless information\textgreater}}@*)
Doc 1(Title: "Lydia Flood Jackson") Lydia Flood Jackson ((*@\hit{June 6, 1862}@*) - July 8, 1963) was an American businesswoman, suffragist, and clubwoman ... Lydia Flood was born in Oakland, California ... her father Isaac Flood joined the California Colored Convention Movement to fight segregation in California schools ...
Doc 2(Title: "Fredrikke Nielsen") Fredrikke Louise Nielsen was born Fredrikke Louise Jensen (*@\hit{July 5, 1837}@*) in Haugesund ... She was a Norwegian actress and was married to actor Harald Nielsen ...
Doc 3(Title: "Andreas P. Nielsen") Andreas P. Nielsen (1943-2016) was a Danish politician ... He was (*@\hit{born on February 12, 1943}@*) in Copenhagen ... He represented a Danish constituency and served in national politics ...
(*@\textcolor{taginfo}{\texttt{\textless/information\textgreater}}@*)

<answer> Lydia Flood Jackson </answer>
\end{lstlisting}
\caseRule
\end{caseStudy}

\begin{caseStudy}
\caseCaption{tab:paired_case_species}{Case study 4 (multi-hop question): \method{} compares genus-level species counts for Erodium and Cymbidium, while VT-Search answers against the retrieved evidence.}
\caseRule
\textbf{Question:} Does Erodium or Cymbidium include more species?\\
\textbf{Ground Truth:} Erodium
\caseRule
\caseHeader{casered}{VT-Search (wrong)}
\begin{lstlisting}[style=trajectory]
<search> does Erodium or Cymbidium include more species </search>

(*@\textcolor{taginfo}{\texttt{\textless information\textgreater}}@*)
Doc 1(Title: Erodium) Erodium is a genus of flowering plants in the botanical family Geraniaceae ... The genus (*@\hit{includes about 60 species}@*), native to North Africa, Indomalaya, the Middle East, and Australia ... They are perennials, annuals, or subshrubs with five-petalled flowers in white, pink, and purple ...
Doc 2(Title: Erodium) Erodium includes about (*@\hit{60 species}@*) ... American species are known as filarees or heron's bill, whereas Eurasian ones are called storksbills in English ... some species are used as food plants by Lepidoptera larvae ...
Doc 3(Title: "Cymbidium suave") Cymbidium suave, or the snake orchid, is an Australian orchid species ... It is part of the genus Cymbidium ... The genus consists of (*@\hit{52 species}@*) and this species has flattened, yellowish green pseudobulbs ...
(*@\textcolor{taginfo}{\texttt{\textless/information\textgreater}}@*)

<answer> Cymbidium </answer>
\end{lstlisting}
\caseRule
\caseHeader{casegreen}{\method{} (correct)}
\begin{lstlisting}[style=trajectory]
<think> To determine which genus, Erodium or Cymbidium, includes more species, I need the number of species in each genus. </think>
<search> Does Erodium or Cymbidium include more species? </search>

(*@\textcolor{taginfo}{\texttt{\textless information\textgreater}}@*)
Doc 1(Title: Erodium) Erodium is a genus of flowering plants in Geraniaceae ... The genus (*@\hit{includes about 60 species}@*), native to North Africa, Indomalaya, the Middle East, and Australia ... flowers strongly resemble the better-known Geranium ...
Doc 2(Title: "Cymbidium eburneum") Cymbidium eburneum, the ivory-colored cymbidium, is a species of orchid ... It belongs to the genus Cymbidium and is found in the eastern Himalayas, Assam, and southern China ...
Doc 3(Title: Cymbidium) Cymbidium is a genus of orchids ... There are (*@\hit{approximately 50 species}@*) in the genus ... the species are distributed in tropical and subtropical Asia and northern Australia ...
(*@\textcolor{taginfo}{\texttt{\textless/information\textgreater}}@*)

<answer> Erodium </answer>
\end{lstlisting}
\caseRule
\end{caseStudy}

\begin{caseStudy}
\caseCaption{tab:paired_case_hangon}{Case study 5 (single-hop question): \method{} repairs the ambiguous Hang-On query by searching for the video game, while VT-Search follows music retrieval drift.}
\caseRule
\textbf{Question:} What genre is Hang-On?\\
\textbf{Ground Truth:} racing game
\caseRule
\caseHeader{casered}{VT-Search (wrong)}
\begin{lstlisting}[style=trajectory]
<search> what genre is Hang-On </search>

(*@\textcolor{taginfo}{\texttt{\textless information\textgreater}}@*)
Doc 1(Title: "Hang on the Box") Hang on the Box, also known as HOTB, is (*@\hit{a punk band based in Beijing}@*) ... They were China's first all-female punk band ... The band usually sings about sex and relationship issues in a forward political manner ...
Doc 2(Title: "The Hang Ups") The Hang Ups are (*@\hit{an indie pop rock band}@*) from Minneapolis, Minnesota, formed in 1990 and fronted by vocalist and guitarist Brian Tighe ... They released their debut EP "Comin' Through" in 1993 and later released four studio albums ...
Doc 3(Title: "Hang on the Box") Hang on the Box were featured on the local edition cover of Newsweek six months later ... Their music could be described as cheerful and hilarious, deadly serious, and personally political ... The page again identifies HOTB as (*@\hit{a punk band}@*) ...
(*@\textcolor{taginfo}{\texttt{\textless/information\textgreater}}@*)

<answer> punk </answer>
\end{lstlisting}
\caseRule
\caseHeader{casegreen}{\method{} (correct)}
\begin{lstlisting}[style=trajectory]
<think> To find out the genre of "Hang-On," I need to gather information about the game "Hang-On." </think>
<search> What genre is Hang-On? </search>

(*@\textcolor{taginfo}{\texttt{\textless information\textgreater}}@*)
Doc 1(Title: "Hang on the Box") Hang on the Box, also known as HOTB, is (*@\hit{a punk band based in Beijing}@*) ... They were China's first all-female punk band ... the lineup includes Wang Yue on vocals, Yilina on bass, Li Yan Fan on guitar, and Shenjing on drums ...
Doc 2(Title: "Hang On (Weezer song)") "Hang On" is (*@\hit{a song by the band Weezer}@*) ... It impacted radio on February 15, 2011 and is the seventh track and second single from the album "Hurley" ...
Doc 3(Title: "The Hang Ups") The Hang Ups are (*@\hit{an indie pop rock band}@*) from Minneapolis, Minnesota, formed in 1990 ... Members include Brian Tighe, Jeff Kearns, Stephen Ittner, and John Crozier ...
(*@\textcolor{taginfo}{\texttt{\textless/information\textgreater}}@*)

<search> What genre is Hang On (video game)? </search>

(*@\textcolor{taginfo}{\texttt{\textless information\textgreater}}@*)
Doc 1(Title: "Super Hang-On") Super Hang-On is a motorcycle racing video game by Sega ... it is the sequel to Hang-On and uses arcade-style racing across multiple tracks ...
Doc 2(Title: Hang-On) Hang-On uses a behind-the-motorcycle perspective ... (*@\hit{the player races a linear race track}@*) divided into stages within a limited time ... reaching checkpoints extends the time limit ...
Doc 3(Title: Hang-On) Hang-On was released as an arcade game with motorcycle-style controls ... The title is derived from the biker having to "hang on" to the bike while leaning ... the Master System version was also built into some consoles ...
(*@\textcolor{taginfo}{\texttt{\textless/information\textgreater}}@*)

<answer> racing game </answer>
\end{lstlisting}
\caseRule
\end{caseStudy}

\begin{caseStudy}
\caseCaption{tab:paired_case_chimera}{Case study 6 (single-hop question): \method{} refines the Chimera query to Greek mythology and retrieves Typhon, while VT-Search follows irrelevant entities.}
\caseRule
\textbf{Question:} Who is the father of Chimera?\\
\textbf{Ground Truth:} Typhon
\caseRule
\caseHeader{casered}{VT-Search (wrong)}
\begin{lstlisting}[style=trajectory]
<search> who is the father of Chimera </search>

(*@\textcolor{taginfo}{\texttt{\textless information\textgreater}}@*)
Doc 1(Title: "King Chimera") King Chimera is (*@\hit{a fictional superhero in the DC Comics Universe}@*), a member of the Justice Society of America ... He first appeared in "Justice Society of America" (vol. 3) #24 in April 2009 ...
Doc 2(Title: "King Chimera") King Chimera keeps a mysterious identity and very little is known about his past ... the page says he claims to be (*@\hit{the son of "King" Standish}@*) ...
Doc 3(Title: "Chimera (short story)") Chimera is a comical short story by Korean author Lee Yeongdo based on Dragon Raja ... The king of Caria, (*@\hit{Amisodarus, raised the Chimera}@*) to terrorize the surrounding region ...
(*@\textcolor{taginfo}{\texttt{\textless/information\textgreater}}@*)

<answer> Amisodarus </answer>
\end{lstlisting}
\caseRule
\caseHeader{casegreen}{\method{} (correct)}
\begin{lstlisting}[style=trajectory]
<think> To find out who the father of Chimera is, I need to gather information about Chimera and her parentage. </think>
<search> Who is the father of Chimera? </search>

(*@\textcolor{taginfo}{\texttt{\textless information\textgreater}}@*)
Doc 1(Title: "Chimera (short story)") Chimera is a comical short story by Lee Yeongdo based on Dragon Raja ... The king of Caria, (*@\hit{Amisodarus, raised the Chimera}@*) to terrorize the surrounding region ...
Doc 2(Title: "King Chimera") King Chimera is (*@\hit{a fictional superhero in the DC Comics Universe}@*) ... The character first appeared in Justice Society of America and claims to be the son of "King" Standish ...
Doc 3(Title: "King Chimera") King Chimera is a fictional superhero in the DC Comics Universe ... He is a member of the Justice Society of America and first appeared in "Justice Society of America" (vol. 3) #24 ...
(*@\textcolor{taginfo}{\texttt{\textless/information\textgreater}}@*)

<search> Who is the father of Chimera in Greek mythology? </search>

(*@\textcolor{taginfo}{\texttt{\textless information\textgreater}}@*)
Doc 1(Title: "Chimera (mythology)") Chimera is a monstrous hybrid creature in Greek mythology ... commonly described as fire-breathing and as one of (*@\hit{the offspring of Typhon and Echidna}@*) ...
Doc 2(Title: "Chimera (mythology)") The mythology page discusses variant genealogies ... most traditions attribute the Chimera and related monsters to (*@\hit{Typhon and Echidna}@*) ...
Doc 3(Title: "Chimera of Arezzo") The Chimera of Arezzo depicts the mythological Chimera in bronze ... the object is identified with the creature described as (*@\hit{offspring of Typhon and Echidna}@*) ...
(*@\textcolor{taginfo}{\texttt{\textless/information\textgreater}}@*)

<answer> Typhon </answer>
\end{lstlisting}
\caseRule
\end{caseStudy}

\subsection{Retrieval Behavior across Training Checkpoints}
\label{app:training_cases}

Tables~\ref{tab:training_case_prada_early} and~\ref{tab:training_case_prada_late}
illustrate how the retrieval
behavior changes during training for the same question. Early \method{} checkpoints identify the
episode clue but stop after one broad search, leaving the advertisement clue
unretrieved. After training, \method{} issues a second, actor-specific
search and retrieves the missing evidence, while VT-Search
remains single-search and wrong.

\begin{caseStudy}
\caseCaption{tab:training_case_prada_early}{Training-procedure case study, early checkpoint: both methods retrieve the Cherokee Rose episode clue but not the advertisement clue, then answer from distractors.}
\caseRule
\textbf{Question:} In the Cherokee Rose episode of ``The Walking Dead,'' the character that continues to search for Sophia Peletier is portrayed by an actor who is also famous for his work in what company's advertisements?\\
\textbf{Ground Truth:} Prada
\caseRule
\caseHeader{casered}{\method{} at step 50 (wrong)}
\begin{lstlisting}[style=trajectory]
<search> In the Cherokee Rose episode of "The Walking Dead," the character that continues to search for Sophia Peletier is portrayed by an actor who is also famous for his work in what company's advertisements? </search>

(*@\textcolor{taginfo}{\texttt{\textless information\textgreater}}@*)
Doc 1(Title: "Cherokee Rose (The Walking Dead)") The episode page states that Daryl Dixon (*@\hit{Norman Reedus}@*) continues to search for Sophia Peletier ...
Doc 2(Title: "Sophia Peletier") The page describes Sophia Peletier and reactions to her disappearance, but does not identify Reedus's advertising work ...
Doc 3(Title: "Cherokee Rose (The Walking Dead)") A review compares Glenn to Little Mikey from the (*@\hit{Life advertisements}@*), introducing an advertising distractor unrelated to Reedus ...
(*@\textcolor{taginfo}{\texttt{\textless/information\textgreater}}@*)

<answer> Life </answer>
\end{lstlisting}
\caseRule
\caseHeader{casered}{VT-Search at step 50 (wrong)}
\begin{lstlisting}[style=trajectory]
<search> actor who portrays character in "The Walking Dead" Cherokee Rose episode and is also famous for his work in company's advertisements </search>

(*@\textcolor{taginfo}{\texttt{\textless information\textgreater}}@*)
Doc 1(Title: "Cherokee Rose (The Walking Dead)") The episode page again mentions Daryl Dixon (*@\hit{Norman Reedus}@*) and the unrelated Life-advertisements review ...
Doc 2(Title: "Daryl Wilcher") This different actor appeared in commercials for (*@\hit{KFC}@*), Coca-Cola, and Combos Pretzels ...
Doc 3(Title: "Cherokee Rose (The Walking Dead)") The episode evidence repeats the plot clue but does not retrieve Norman Reedus's advertising history ...
(*@\textcolor{taginfo}{\texttt{\textless/information\textgreater}}@*)

<answer> KFC </answer>
\end{lstlisting}
\caseRule
\end{caseStudy}

\begin{caseStudy}
\caseCaption{tab:training_case_prada_late}{Training-procedure case study, best checkpoint: \method{} repairs the missing evidence with an actor-specific second search, while VT-Search still stops after one broad query.}
\caseRule
\textbf{Question:} Same validation item as Table~\ref{tab:training_case_prada_early}.\\
\textbf{Ground Truth:} Prada
\caseRule
\caseHeader{casegreen}{\method{} at final checkpoint (step 250, correct)}
\begin{lstlisting}[style=trajectory]
<search> In the Cherokee Rose episode of "The Walking Dead," the character that continues to search for Sophia Peletier is portrayed by an actor who is also famous for his work in what company's advertisements? </search>

(*@\textcolor{taginfo}{\texttt{\textless information\textgreater}}@*)
Doc 1(Title: "Cherokee Rose (The Walking Dead)") Daryl Dixon (*@\hit{Norman Reedus}@*) continues to search for Sophia Peletier ...
Doc 2(Title: "Sophia Peletier") The page supplies character context but not the advertising clue ...
Doc 3(Title: "Carol Peletier") The page mentions Daryl and Sophia in related episode context but still lacks the company name ...
(*@\textcolor{taginfo}{\texttt{\textless/information\textgreater}}@*)

<search> Norman Reedus famous for his work in what company's advertisements </search>

(*@\textcolor{taginfo}{\texttt{\textless information\textgreater}}@*)
Doc 1(Title: "Norman Reedus") Reedus has modeled for (*@\hit{Prada}@*), Alessandro Dell'Acqua, Durban, Levi's, Lexus, and Morgenthal Fredrics ...
Doc 2(Title: "Norman B. Norman") Irrelevant Norman entity ...
Doc 3(Title: "Geoff Moore (photographer)") Related photography/fashion context, but the first document contains the answer ...
(*@\textcolor{taginfo}{\texttt{\textless/information\textgreater}}@*)

<answer> Prada </answer>
\end{lstlisting}
\caseRule
\caseHeader{casered}{VT-Search at final checkpoint (step 250, wrong)}
\begin{lstlisting}[style=trajectory]
<search> which actor portrays the character that continues to search for Sophia Peletier in the Cherokee Rose episode of "The Walking Dead" and is also famous for his work in what company's advertisements </search>

(*@\textcolor{taginfo}{\texttt{\textless information\textgreater}}@*)
Doc 1(Title: "Cherokee Rose (The Walking Dead)") Daryl Dixon (*@\hit{Norman Reedus}@*) continues to search for Sophia Peletier ...
Doc 2(Title: "Sophia Peletier") Character context for Sophia Peletier, without Reedus's modeling or advertisements ...
Doc 3(Title: "Cherokee Rose (The Walking Dead)") The retrieved passage again contains the unrelated Life-advertisements review rather than the Prada clue ...
(*@\textcolor{taginfo}{\texttt{\textless/information\textgreater}}@*)

<answer> Bud Light </answer>
\end{lstlisting}
\caseRule
\end{caseStudy}

The later \method{} policy uses the actor identified by the first retrieval
to search for the missing advertising credit. This targeted second hop
illustrates how learned search behavior can close a specific evidence gap.

\clearpage
\begingroup
\raggedbottom
\subsection{Medical QA Case Studies}
\label{app:medical_cases}

{We present three medical case studies comparing \method{} with VT-Search, the second-best method.}

\begin{caseStudy}
\caseCaption{tab:medical_case_brown_tumors}{Case study 7: \method{} retrieves evidence linking brown tumors to hyperparathyroidism; VT-Search retrieves other causes of finger lesions and retains multiple myeloma.}
\caseRule
\textbf{Question (simplified):} A 50-year-old woman has left-hand finger pain for one month, worsening confusion over the past week, and nausea for several months. Examination reveals tender fingers with small palpable tumors. A hand X-ray shows thin bones, brown tumors, and osteolysis of the distal phalanges. What is the most likely cause?\\
\textbf{Options:} A: Hyperparathyroidism; B: Hypocalcemia; C: Osteosarcoma; D: Multiple myeloma.\\
\textbf{Ground Truth:} A: Hyperparathyroidism.
\caseRule
\caseHeader{casered}{VT-Search (wrong)}
\begin{lstlisting}[style=trajectory]
<think> ...
A: Hyperparathyroidism - This condition can cause bone loss and hypercalcemia, but it is not typically associated with finger pain and palpable tumors.
...
<search> What is the most likely cause for a 50-year-old woman with finger pain, confusion, palpable tumors, and osteolysis of the distal phalanges? </search>

(*@\textcolor{taginfo}{\texttt{\textless information\textgreater}}@*)
Doc 1(Title: "An unusual presentation of a ring injury.") A case of chronic erosion of the soft tissues and bone ...
Doc 2(Title: "Metastatic phalangeal osteolysis ...") Metastatic malignancies of the hand are rare ...
Doc 3(Title: "Fibro-osseous pseudotumour ...") ... The X-ray showed evidence of erosion at the tip of the distal phalanx. ...
(*@\textcolor{taginfo}{\texttt{\textless/information\textgreater}}@*)

... the most likely cause for this patient's condition is multiple myeloma.
<answer> Multiple myeloma </answer>
\end{lstlisting}
\caseRule
\caseHeader{casegreen}{\method{} (correct)}
\begin{lstlisting}[style=trajectory]
<think> ... The patient has finger pain, small palpable tumors on the fingers, and an X-ray showing thin bones with brown tumors and osteolysis of the distal phalanges. ... </think>
<search> symptoms of finger pain, small palpable tumors on the fingers, thin bones with brown tumors and osteolysis of the distal phalanges</search>

(*@\textcolor{taginfo}{\texttt{\textless information\textgreater}}@*)
Doc 1(Title: "Osteochondroma") Signs and symptoms
Doc 2(Title: "Brown tumour in phalanx of the ring finger ...") Brown tumours are highly vascular lytic bone lesions found in (*@\hit{primary and secondary hyperparathyroidism}@*). ...
Doc 3(Title: "Benign tumor") Signs and symptoms
(*@\textcolor{taginfo}{\texttt{\textless/information\textgreater}}@*)

...
<think> ... The presence of brown tumors and osteolysis of the distal phalanges is characteristic of this condition. </think>
<answer> Hyperparathyroidism </answer>
\end{lstlisting}
\caseRule
\end{caseStudy}

\begin{caseStudy}
\caseCaption{tab:medical_case_dystonia}{Case study 8: \method{} searches for haloperidol-induced dystonia and selects benztropine; VT-Search uses a broader query and selects baclofen.}
\caseRule
\textbf{Question (simplified):} A 31-year-old man develops episodic neck stiffness and pain after six doses of haloperidol over two days for substance-induced psychosis. Episodes last about 25 minutes, with his neck rotating right; touching his jaw relieves the stiffness. Temperature is 37.3$^\circ$C, pulse 108/min, and blood pressure 128/86 mm Hg. His neck is flexed, rigid, and rotated right with limited movement. What is the most appropriate therapy?\\
\textbf{Options:} A: Benztropine; B: Physical therapy; C: Dantrolene; D: Baclofen.\\
\textbf{Ground Truth:} A: Benztropine.
\caseRule
\caseHeader{casered}{VT-Search (wrong)}
\begin{lstlisting}[style=trajectory]
<think> ... Given these symptoms, it seems that the patient is experiencing neck muscle spasms or stiffness, which could be related to his substance-induced psychosis or the medications he is taking, such as haloperidol. ...
<search> What is the most appropriate therapy for neck muscle spasms or stiffness in patients with substance-induced psychosis or those taking haloperidol? </search>

(*@\textcolor{taginfo}{\texttt{\textless information\textgreater}}@*)
Doc 1(Title: "[On a muscular dystony case from manganism. ...") It is described a muscular dystony case of manganism, treated with good results in two following phases with haloperidol and diazepam and then with L-Dopa + benserazide. ...
Doc 2(Title: "Is haloperidol or a benzodiazepine ...") ... We conclude that haloperidol should be considered the first line drug for agitated patients post cardiac surgery, ...
Doc 3(Title: "InternalMed_Harrison") Optimal treatment of (*@\hit{stiff-person syndrome}@*) requires therapy of the underlying tumor, glucocorticoids, and symptomatic use of drugs that enhance GABA-ergic transmission (diazepam, baclofen, sodium valproate, tiagabine, vigabatrin). ...
(*@\textcolor{taginfo}{\texttt{\textless/information\textgreater}}@*)

... Between the options, dantrolene and baclofen are both muscle relaxants, but baclofen is more commonly used for muscle spasms and stiffness.
...
<answer> Baclofen </answer>
\end{lstlisting}
\caseRule
\caseHeader{casegreen}{\method{} (correct)}
\begin{lstlisting}[style=trajectory]
<think> ... Dystonia can be a side effect of antipsychotic medications like haloperidol. ...
<search> treatment for neck dystonia caused by haloperidol </search>

(*@\textcolor{taginfo}{\texttt{\textless information\textgreater}}@*)
Doc 1(Title: "A patient with haloperidol induced laryngeal dystonia.") ... It can occur after a short treatment of a patient with atypical antipschycotics. ...
Doc 2(Title: "Haloperidol-Induced Dystonia ...") ... Antipsychotic agents sometimes cause (*@\hit{acute dystonia}@*) that can rapidly worsen within a few hours or days. ...
Doc 3(Title: "Treatment of dystonia and tics.") ... In dystonia, while oral agents such as benzodiazepines, baclofen and anticholinergics remain in use, botulinum toxin (BoNT) continues to be regarded as the treatment of choice for focal and segmental dystonia, ...
(*@\textcolor{taginfo}{\texttt{\textless/information\textgreater}}@*)

... the most appropriate therapy for this patient's symptoms is Benztropine, which is an anticholinergic medication used to treat dystonia caused by antipsychotic medications like haloperidol.
<answer> Benztropine </answer>
\end{lstlisting}
\caseRule
\end{caseStudy}

\begin{caseStudy}
\caseCaption{tab:medical_case_canine_rest}{Case study 9: both methods retrieve the same passages in different orders; \method{} selects reduced leverage, while VT-Search acknowledges the rotation-center evidence but chooses the cingulum distractor.}
\caseRule
\textbf{Question:} A properly designed rest on lingual surface of a canine is preferred to a rest on incisal surface because:\\
\textbf{Options:} A: Less Leverage is exerted against the tooth by Lingual rest; B: Enamel is thicker on lingual surface; C: Visibility and access is better on lingual surface; D: The cingulum of canine produces a naturalsurface for recess.\\
\textbf{Ground Truth:} A: Less Leverage is exerted against the tooth by Lingual rest.
\caseRule
\caseHeader{casered}{VT-Search (wrong)}
\begin{lstlisting}[style=trajectory]
<think> ... I will search for information to help me determine the correct answer.
<search> A properly designed rest on lingual surface of a canine is preferred to a rest on incisal surface </search>

(*@\textcolor{taginfo}{\texttt{\textless information\textgreater}}@*)
Doc 1(Title: "The marginal-ridge rest seat.") ... preparing the rest seat on the lingual surface of a mandibular canine risks perforating the enamel. ...
Doc 2(Title: "Evaluation of prefabricated lingual rest seats for removable partial dentures.") ... A prefabricated lingual rest seat was designed and constructed to closely approximate the lingual contours of anterior teeth. ...
Doc 3(Title: "[Evaluation of stresses in different cingulum rest seats].") ... Lingual rest seats are preferred since they are more acceptable esthetically and (*@\hit{located closer to the centre of rotation of the tooth}@*). ...
(*@\textcolor{taginfo}{\texttt{\textless/information\textgreater}}@*)

... The lingual surface is preferred because it is closer to the center of rotation of the tooth and provides a more uniform stress distribution.
...
<answer> D </answer>
\end{lstlisting}
\caseRule
\caseHeader{casegreen}{\method{} (correct)}
\begin{lstlisting}[style=trajectory]
<think> ... I'll consider the biomechanical, anatomical, and aesthetic factors involved in dental restorations. </think>
<search> Why is a properly designed rest on the lingual surface of a canine preferred to a rest on the incisal surface? </search>

(*@\textcolor{taginfo}{\texttt{\textless information\textgreater}}@*)
Doc 1(Title: "The marginal-ridge rest seat.") ... preparing the rest seat on the lingual surface of a mandibular canine risks perforating the enamel. ...
Doc 2(Title: "[Evaluation of stresses in different cingulum rest seats].") ... Lingual rest seats are preferred since they are more acceptable esthetically and (*@\hit{located closer to the centre of rotation of the tooth}@*). ...
Doc 3(Title: "Evaluation of prefabricated lingual rest seats for removable partial dentures.") ... A prefabricated lingual rest seat was designed and constructed to closely approximate the lingual contours of anterior teeth. ...
(*@\textcolor{taginfo}{\texttt{\textless/information\textgreater}}@*)

... This reduces the leverage exerted against the tooth by the rest.

<think> Based on the information gathered, the correct answer is:
A: Less Leverage is exerted against the tooth by Lingual rest
</think>
<answer> A </answer>
\end{lstlisting}
\caseRule
\end{caseStudy}

\clearpage
\endgroup

\end{document}

%% file: math_commands.tex
\usepackage{amsmath,amsfonts,bm}

\def\1{\bm{1}}

\DeclareMathAlphabet{\mathsfit}{\encodingdefault}{\sfdefault}{m}{sl}
\SetMathAlphabet{\mathsfit}{bold}{\encodingdefault}{\sfdefault}{bx}{n}

\newcommand{\E}{\mathbb{E}}

\newcommand{\KL}{D_{\mathrm{KL}}}

\DeclareMathOperator*{\argmin}{arg\,min}

%% file: appendix_theory.tex
\section{Retrieval Gradients and Theoretical Guarantee}
\label{app:theory}

In this section, we derive the retrieval RL gradient and state the convergence guarantee for BRIDGE.

\subsection{Derivation of retrieval gradient}
\label{app:retrieval_gradient}

\noindent\textbf{Gradient of RAG objective with respect to the retrieval parameters.}
At the retrieval turn $t$, with rollout $\tau_i$ for prompt $x$, the query context
is $h_{i,t}=(\tau_{i,<t},a_{i,t})$ and we can access the reference answer $y$ corresponding to question $x$.
Within a batch, RAG objective in \eqref{eq:lrag} can be written as
\[
\mathcal L_{\mathrm{RAG}}(\theta,\eta)
=-\mathbb E_{x,i,t}\!\left[
\log\sum_{d\in\mathcal N_k(h_{i,t})}
\rho_\eta(d\mid h_{i,t})\,
p_\theta(y\mid h_{i,t},d)
\right].
\]
Here, $\mathbb E_{x,i,t}$ averages over training prompts in a batch, sampled rollouts,
and their logged retrieval turns. We retain $(h,y)$ as shorthand below, with
$\mathbb E_{(h,y)}$ denoting this same averaging. 

Following \citep{lewis2021rag}, we treat $\mathcal N_k(h_{i,t})$ fixed and  differentiate the RAG objective in \eqref{eq:lrag} as   
\begin{align}
&\nabla_\eta \mathcal{L}_{\mathrm{RAG}}(\theta,\eta)
=-\mathbb{E}_{(h,y)}
\left[
\sum_{d \in \mathcal{N}_k(h)}
\frac{
    \rho_\eta(d \mid h)\,p_\theta(y \mid h,d)
}{
    \sum_{d' \in \mathcal{N}_k(h)}
    \rho_\eta(d' \mid h)\,p_\theta(y \mid h,d')
}
\nabla_\eta \log \rho_\eta(d \mid h)
\right]
\nonumber\\
&=
\mathbb{E}_{(h,y)}
\Bigg[
\nabla_\eta f_\eta (h)^{\!\top}
\sum_{d \in \mathcal{N}_k(h)}
\underbrace{\rho_\eta(d \mid h)
\left(
1-
\frac{
    p_\theta(y \mid h,d)
}{
    \sum_{d' \in \mathcal{N}_k(h)}
    \rho_\eta(d' \mid h)\,p_\theta(y \mid h,d')
}
\right)}_{\text{RAG score-gradient coefficient}}
g(d)
\Bigg].
\label{eq:app_retrieval_score_gradient}
\end{align}
Intuitively, the RAG score-gradient coefficient compares a document’s current retrieval probability \(\rho_\eta(d\mid h)\) with the reweighted document score by the final answer correctness. Gradient descent with a negative RAG score-gradient coefficient adds a positive multiple of \(g(d)\) to the query-embedding descent direction $\nabla f_\eta(h)$, promoting stronger alignment between the query and the document and thereby providing a signal to improve the document’s ranking within the top-\(k\) retrieved set.

\noindent\textbf{Likelihood computation.}
During training, the reference answer $y$ is available. For each retrieved
document $d\in\mathcal N_k(h)$, we compute
\[
\log p_\theta(y\mid h,d)
=
\sum_{j=1}^{|y|}
\log p_\theta(y_j\mid h,d,y_{<j})
\]
using teacher forcing through forward computation, summing only over answer tokens. These log-likelihoods can be reused
across inner steps while $\theta$, $h$, $d$, and $y$ remain unchanged,
whereas $\rho_\eta(d\mid h)$ is recomputed after each retriever update.
For fixed sequence lengths, LLM scoring costs scale linearly with $k$, but is independent with respect to the corpus size $|\mathcal C|$. We also use document batch processing to reduce the peak scoring memory required for this step. 

\noindent\textbf{Numerical stability.}
We evaluate the RAG objective in log space:
\[
\mathcal L_{\mathrm{RAG}}(\theta,\eta)
=
-\mathbb E_{(h,y)}
\left[
\operatorname{logsumexp}_{d\in\mathcal N_k(h)}
\left(
\log\rho_\eta(d\mid h)
+
\log p_\theta(y\mid h,d)
\right)
\right].
\]
Here, $\log\rho_\eta$ is computed by log-softmax over $\mathcal N_k(h)$.
Log-softmax and log-sum-exp avoid explicit products of small probabilities
or division by an underflowed marginal likelihood, which ensures its numerical stability. The answer-conditioned
document weights in \eqref{eq:app_retrieval_score_gradient} are obtained
by applying softmax to $\log\rho_\eta(d\mid h)+\log p_\theta(y\mid h,d)$.
Each score-gradient coefficient is therefore a difference between two
normalized probabilities. 

\noindent\textbf{RL outcome credit for retrieval.}
To motivate the GRPO like RL surrogate gradient with respect to retrieval parameter $\eta$, fix $\theta$ and consider
a stochastic retrieval relaxation in which
$d_t\sim\rho_\eta(\cdot\mid h_t)$ over fixed top-$k$ candidate set $\mathcal{N}_k(h_t)$.
For the outcome reward, the log-derivative identity gives
\begin{align}
-\nabla_\eta \mathbb E_{x,\tau}[R(x,\tau)]
&=
-\mathbb E_x \int
\nabla_\eta P(\tau\mid\theta,\eta) R(x,\tau)\,d\tau
\nonumber\\
&=
-\mathbb E_x \int
P(\tau\mid\theta,\eta)
\nabla_\eta\log P(\tau\mid\theta,\eta)
R(x,\tau)\,d\tau
\nonumber\\
&=
-\mathbb E_{x,\tau}
\left[
R(x,\tau)\nabla_\eta\log P(\tau\mid\theta,\eta)
\right]
\nonumber\\
&=
-\mathbb E_{x,\tau}
\left[
R(x,\tau)\sum_t
\nabla_\eta\log\rho_\eta(d_t\mid h_t)
\right],
\label{eq:app_relaxed_retrieval_policy_gradient}
\end{align}
where the expectation is under this stochastic relaxation
and the sum is over retrieval turns $t$. The last equality follows because, under the stochastic retrieval
relaxation, $\eta$ enters the trajectory probability only through
the factors $\rho_\eta(d_t\mid h_t)$ at retrieval turns $t$.
With $\theta$ fixed and the recorded trajectory held fixed,
differentiation therefore gives
$\nabla_\eta\log P(\tau\mid\theta,\eta)
=\sum_t\nabla_\eta\log\rho_\eta(d_t\mid h_t)$. 

BRIDGE uses deterministic top-$k$ retrieval during rollouts.
Motivated by the expression above, we construct a
retrieval-credit surrogate on the logged candidate sets,
using $\bar c_{i,t,d}$ to distribute trajectory-level
outcome credit among candidate passages.
We then replace $R_i$ with the GRPO advantage $A_i$
and apply probability-ratio clipping.

Each retrieval turn $t$ of rollout $\tau_i$ has the trajectory-level
GRPO advantage $A_i$ in \eqref{eq:grpo}. We distribute this outcome
credit over the retrieved documents using answer-likelihood-based credit weight assigned to document and the document-level importance ratio as 
\begin{align}
 \bar c_{i,t,d}
 &=
 \frac{\rho_{\etaold}(d\mid h_{i,t})\,
                   p_\theta(y\mid h_{i,t},d)}
      {\sum_{d'\in\mathcal N_k(h_{i,t})}
       \rho_{\etaold}(d'\mid h_{i,t})\,
                   p_\theta(y\mid h_{i,t},d')}, ~~r_{i,t,d}(\eta)=\frac{\rho_\eta(d\mid h_{i,t})}
          {\rho_{\etaold}(d\mid h_{i,t})}
 \label{eq:app_retrieval_credit}
\end{align}
where $\etaold$ denotes the pre-update retrieval parameter. 
The document-level importance ratio extends the token-level ratio in \eqref{eq:grpo}, with document
probabilities replacing token probabilities. Fixing $\theta$ and following ~\citep{schulman2017proximal}, the retrieval
RL surrogate used the following clipped advantage 
\begin{align}
 &{\mathcal J}^{\eta}_{\mathrm{RL}}(\theta,\eta)
 =-\mathbb E_{x,i,t}\!\left[
   \sum_{d\in\mathcal N_k(h_{i,t})}\bar c_{i,t,d}
   \min\!\left\{r_{i,t,d}(\eta)A_i,
       \operatorname{clip}\!\left(r_{i,t,d}(\eta),
              1-\epsilon_\eta,1+\epsilon_\eta\right)A_i\right\}
   \right],
 \label{eq:app_retrieval_rl_loss}
\end{align}
where $\mathbb E_{x,i,t}$ denotes the empirical average over prompts,
rollouts, and retrieval turns, and $\epsilon_\eta$ is the retrieval
clipping threshold. Away from clipping
boundaries, differentiation gives
\begin{align}
 &\nabla_\eta{\mathcal J}^{\theta,\eta}_{\mathrm{RL}}(\eta)=-\mathbb E_{x,i,t}\!\left[
   \sum_{d\in\mathcal N_k(h_{i,t})}
   \bar c_{i,t,d}\,\kappa_{i,t,d} A_i r_{i,t,d}(\eta)\,
   \nabla_\eta\log\rho_\eta(d\mid h_{i,t})
   \right],
 \label{eq:app_retrieval_rl_gradient}
\end{align}
where $\kappa_{i,t,d}=1$ if $A_i\geq0$ and
$r_{i,t,d}(\eta)<1+\epsilon_\eta$, or if $A_i<0$ and
$r_{i,t,d}(\eta)>1-\epsilon_\eta$; otherwise it is zero.
Thus, $A_i$ determines the sign of the outcome credit, while
$\bar c_{i,t,d}$ allocates it according to each document's support for
the correct answer. 

\noindent\textbf{Combining RL and RAG gradients.}
The clipped RL surrogate in \eqref{eq:app_retrieval_rl_gradient} uses the fixed weights $\bar c_{i,t,d}$ to
distribute trajectory-level outcome credit. In contrast, the RAG gradient
in \eqref{eq:app_retrieval_score_gradient} differentiates the current
log-mixture likelihood and uses weights based on $\rho_\eta$. The retriever update combines the RL gradient in 
\eqref{eq:app_retrieval_rl_gradient} with $\gamma$ times this RAG gradient.

\noindent\textbf{LoRA gradient form.}
For each adapted query-encoder weight matrix, write $\eta=\eta_0+AB$
with $\eta_0$ frozen. Holding $\theta$ fixed, the chain rule gives
\begin{align}
 \nabla_A\mathcal L_\gamma(\theta,\eta)
 &=\big[\nabla_\eta\Jrl(\theta,\eta)+\gamma\nabla_\eta\Lrag(\theta,\eta)\big]B^\top,
 \nonumber\\
 \nabla_B\mathcal L_\gamma(\theta,\eta)
 &=A^\top\big[\nabla_\eta\Jrl(\theta,\eta)+\gamma\nabla_\eta\Lrag(\theta,\eta)\big].
 \label{eq:app_lora_gradients}
\end{align}
In implementation, the RL gradient is approximated by
\eqref{eq:app_retrieval_rl_gradient}, and the RAG gradient is given by
\eqref{eq:app_retrieval_score_gradient}. Applying these expressions
layerwise yields the factor updates in \eqref{eq:eta_loss}, while the
base query-encoder weights and document encoder remain frozen.

\subsection{Convergence guarantee of BRIDGE}
\label{app:bridge_convergence}

Following \citep{kwon2023fully,shen2023penalty}, define
\begin{align}
 \Lrag^*(\theta)=\min_\eta\Lrag(\theta,\eta),\qquad
 \mathcal L_\gamma^*(\theta)=\min_\eta\mathcal L_\gamma(\theta,\eta)-\gamma\Lrag^*(\theta).
 \label{eq:app_reduced_penalty}
\end{align}
We analyze direct gradient updates of $\eta$ with exact population gradients.
The following assumptions follow the smoothness, lower-level landscape, and flatness conditions of \citet{jiang2026beyond}.

\begin{assumption}[Lipschitz continuity]
\label{as:bridge_smoothness}
$\Jrl$ and $\Lrag$ are jointly $L_{\mathrm{RL}}$- and $L_{\mathrm{RAG}}$-smooth, and have Lipschitz-continuous Hessians. The minima in \eqref{eq:app_reduced_penalty} are attained, and $\inf_\theta\mathcal L_\gamma^*(\theta)>-\infty$. 
\end{assumption}

\begin{assumption}[Lower-level landscape]
\label{as:bridge_pl}
There exist $\mu,\gamma_0>0$ s.t. $\Lrag(\theta,\cdot)$ and $\gamma^{-1}\mathcal L_\gamma(\theta,\cdot)$ satisfy the $\mu$-Polyak--\L{}ojasiewicz (PL) condition uniformly over $\theta$ and $\gamma\ge\gamma_0$, i.e., for all $(\theta,\eta)$,
\begin{align}
 \tfrac12\|\nabla_\eta\Lrag(\theta,\eta)\|^2
 &\ge\mu\big[\Lrag(\theta,\eta)-\min_\eta\Lrag(\theta,\eta)\big],\nonumber\\
 \tfrac12\|\nabla_\eta\mathcal L_\gamma(\theta,\eta)\|^2
 &\ge\gamma\mu\big[\mathcal L_\gamma(\theta,\eta)-\min_\eta\mathcal L_\gamma(\theta,\eta)\big].
 \label{eq:app_pl}
\end{align}
\end{assumption}

\begin{assumption}[Upper-level flatness]
\label{as:bridge_flatness}
There exist $p\in(1,3/2]$ and $C\ge0$, independent of $\gamma$ and $R$, such that for every $\theta_r$ there is $\delta_r\ge0$ satisfying
\begin{align}
 |\Jrl(\theta_r,\eta)-\Jrl(\theta_r,\eta^*)|
 \le C\|\eta-\eta^*\|^p+\delta_r,\qquad
 \frac1R\sum_{r=0}^{R-1}\delta_r\le\bar\delta
 \label{eq:app_flatness}
\end{align}
for all $\eta$ and all $\eta^*\in\arg\min_\eta\Lrag(\theta_r,\eta)$.
\end{assumption}

\begin{corollary}[Approximate stationary point]
\label{cor:bridge_bilevel_stationarity}
Under Assumptions~\ref{as:bridge_smoothness}--\ref{as:bridge_pl}, suppose the flatness condition holds at $\theta$ with $\delta=O(\epsilon^{p/2})$, where $0<\epsilon\le1$. There exists $\gamma_\epsilon=O(\epsilon^{-(2-p)/2})$ such that, for $\gamma\ge\max\{\gamma_0,\gamma_\epsilon\}$, any pair $(\theta,\eta_\gamma)$ satisfying $\|\nabla\mathcal L_\gamma^*(\theta)\|\le\epsilon$ and $\|\nabla_\eta\mathcal L_\gamma(\theta,\eta_\gamma)\|\le\epsilon$ is $O(\epsilon)$-stationary point to the original bilevel problem \eqref{eq:ideal_bilevel} in the squared KKT-residual sense. 
\end{corollary}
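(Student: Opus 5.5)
The plan is to read \eqref{eq:ideal_bilevel} as the value-function constrained problem $\min_{\theta,\eta}\Jrl(\theta,\eta)$ subject to $g(\theta,\eta):=\Lrag(\theta,\eta)-\Lrag^*(\theta)\le 0$. I would use the KKT residual of \citet{shen2023penalty,kwon2023fully} with multiplier $\lambda=\gamma$. This residual has three parts: the stationarity residual $\|\nabla\Jrl+\gamma(\nabla\Lrag-(\nabla\Lrag^*,0))\|^2$, the feasibility residual $g^2$, and the complementarity residual $(\gamma g)^2$, all evaluated at $(\theta,\eta_\gamma)$. The goal is to show that each part is $O(\epsilon)$.

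\textbf{Stationarity residual.} The $\eta$-block of the residual is exactly $\nabla_\eta\mathcal L_\gamma(\theta,\eta_\gamma)$, which is at most $\epsilon$ by hypothesis. For the $\theta$-block, I would first invoke the Danskin-type formula that holds under PL and Assumption~\ref{as:bridge_smoothness}, as established in \citet{kwon2023fully,shen2023penalty}:
\[
\nabla\mathcal L_\gamma^*(\theta)=\nabla_\theta\mathcal L_\gamma(\theta,\eta^*_\gamma)-\gamma\nabla_\theta\Lrag(\theta,\eta^*),
\]
where $\eta^*_\gamma$ is the projection of $\eta_\gamma$ onto $\arg\min\mathcal L_\gamma(\theta,\cdot)$. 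The $\theta$-block of the residual is $\nabla_\theta\mathcal L_\gamma(\theta,\eta_\gamma)-\gamma\nabla\Lrag^*(\theta)$. It therefore differs from $\nabla\mathcal L_\gamma^*(\theta)$ only by replacing $\eta_\gamma^*$ with $\eta_\gamma$. The $\gamma\mu$-PL property of $\mathcal L_\gamma$ implies the error bound $\|\eta_\gamma-\eta^*_\gamma\|\le\|\nabla_\eta\mathcal L_\gamma\|/(\gamma\mu)\le\epsilon/(\gamma\mu)$. Since $\nabla_\theta\mathcal L_\gamma$ is $(L_{\mathrm{RL}}+\gamma L_{\mathrm{RAG}})$-Lipschitz, the discrepancy is $O(\epsilon)$. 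Combined with $\|\nabla\mathcal L^*_\gamma(\theta)\|\le\epsilon$, the $\theta$-block is $O(\epsilon)$.

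\textbf{Feasibility and complementarity (the main obstacle).} This is where flatness must replace the usual $O(1/\gamma)$ argument. Fix $\eta^*\in\arg\min\Lrag(\theta,\cdot)$ nearest to $\eta_\gamma^*$, and write $g_\gamma:=g(\theta,\eta^*_\gamma)$. Optimality of $\eta^*_\gamma$ for $\mathcal L_\gamma$ gives
\[
\gamma g_\gamma\le\Jrl(\theta,\eta^*)-\Jrl(\theta,\eta^*_\gamma)\le C\|\eta^*_\gamma-\eta^*\|^p+\delta,
\]
by Assumption~\ref{as:bridge_flatness}. PL of $\Lrag$ yields quadratic growth, $\|\eta^*_\gamma-\eta^*\|^2\le 2g_\gamma/\mu$, so
\[
\gamma g_\gamma\le C(2g_\gamma/\mu)^{p/2}+\delta.
\]
I then split into two cases according to which term on the right dominates.
\begin{itemize}
\item If the first term dominates, then $g_\gamma=O(\gamma^{-2/(2-p)})$, and $\gamma g_\gamma=O(g_\gamma^{p/2})$.
\item If the second term dominates, then $g_\gamma=O(\delta/\gamma)$.
\end{itemize}
Choosing $\gamma_\epsilon\asymp\epsilon^{-(2-p)/2}$ with $\delta=O(\epsilon^{p/2})$ makes both cases give $g_\gamma=O(\epsilon)$. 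It also gives $\gamma g_\gamma=O(\epsilon^{p/2})$, so $(\gamma g_\gamma)^2=O(\epsilon^{p})\le O(\epsilon)$ because $p>1$ and $\epsilon\le1$.

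\textbf{Transfer to $\eta_\gamma$.} Finally, I would move from $\eta^*_\gamma$ to $\eta_\gamma$ using $L_{\mathrm{RAG}}$-smoothness together with $\|\eta_\gamma-\eta^*_\gamma\|\le\epsilon/(\gamma\mu)$. The gradient $\nabla_\eta\Lrag(\theta,\eta^*_\gamma)$ is bounded via the optimality condition $\nabla_\eta\Lrag(\theta,\eta^*_\gamma)=-\gamma^{-1}\nabla_\eta\Jrl(\theta,\eta^*_\gamma)$. This shows that $g$ and $\gamma g$ change by lower-order amounts between the two points.

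The delicate part is handling the exponents consistently in this transfer: the extra $\gamma\cdot O(\epsilon/\gamma)$ terms have to stay $O(\epsilon^{1/2})$ before squaring. It is also where Lipschitz Hessians may be needed to control $\nabla\Lrag^*$ near $\eta^*$.
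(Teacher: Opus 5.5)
Your argument is correct, but it certifies a different KKT system from the one the paper uses.

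\textbf{Two different KKT notions.} You use the value-function reformulation with constraint function $g=\Lrag-\Lrag^*\le 0$ and multiplier $\gamma$. The paper instead follows the lemma of \citet{jiang2026beyond} that it extends. It uses the reformulation with constraint $\nabla_\eta\Lrag(\theta,\eta)=0$ and multiplier $w=\gamma(\eta_\gamma-\eta^*)$. Its squared residuals at $(\theta,\eta_\gamma)$ are $\|\nabla_\theta\Jrl+\nabla^2_{\theta\eta}\Lrag\,w\|^2$, $\|\nabla_\eta\Jrl+\nabla^2_{\eta\eta}\Lrag\,w\|^2$ and $\|\nabla_\eta\Lrag\|^2$.

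\textbf{Same core estimates.} Your $\theta$-block bound is exactly \eqref{eq:app_bridge_inexact_envelope}. Your flatness-plus-quadratic-growth bound $\gamma g=O(\epsilon^{p/2})$ is the paper's $\gamma\|\eta_\gamma-\eta^*\|^2=O(\epsilon^{p/2})$; the two are interchangeable via $\frac{\mu}{2}\mathrm{dist}^2\le g\le\frac{L_{\mathrm{RAG}}}{2}\mathrm{dist}^2$.

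\textbf{The missing step.} The paper spends its Lipschitz-Hessian assumption on one Taylor step. Replacing $\gamma[\nabla\Lrag(\theta,\eta_\gamma)-\nabla\Lrag(\theta,\eta^*)]$ by $\gamma\nabla^2\Lrag\,(\eta_\gamma-\eta^*)$ costs $O(\gamma\|\eta_\gamma-\eta^*\|^2)=O(\epsilon^{p/2})$. Also $\|\nabla_\eta\Lrag\|^2\le 2L_{\mathrm{RAG}}\,g=O(\epsilon)$. So your bounds, with $\eta^*$ taken nearest to $\eta_\gamma$, give the paper's conclusion after this single step. Add it if the gradient-based notion is the one intended.

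\textbf{What each route buys.} Yours gives a Hessian-free certificate with squared residual $O(\epsilon^p)$. The cost is that the value-function constraint has zero gradient on the whole feasible set. Its exact KKT system is therefore degenerate, and all the bilevel content of your certificate sits in the complementarity term $(\gamma g)^2$. The paper's system encodes the bilevel first-order condition directly through the Hessian-weighted multiplier $w$. The paper also avoids your transfer step: it runs the flatness comparison directly at $\eta_\gamma$, using $\mathcal L_\gamma(\theta,\eta_\gamma)-\min_\eta\mathcal L_\gamma(\theta,\eta)\le\epsilon^2/(2\gamma\mu)$ as extra slack in \eqref{eq:app_bridge_inexact_distance}.

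\textbf{A small repair in your transfer step.} Writing $\nabla_\eta\Lrag(\theta,\eta^*_\gamma)=-\gamma^{-1}\nabla_\eta\Jrl(\theta,\eta^*_\gamma)$ only helps if $\|\nabla_\eta\Jrl\|$ is bounded. Assumption~\ref{as:bridge_smoothness} is a smoothness assumption and does not provide this. Let $S=\arg\min_\eta\Lrag(\theta,\eta)$ and use instead $g(\theta,\eta_\gamma)\le\frac{L_{\mathrm{RAG}}}{2}\mathrm{dist}(\eta_\gamma,S)^2\le L_{\mathrm{RAG}}\big(\epsilon^2/(\gamma\mu)^2+2g_\gamma/\mu\big)$. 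This gives $\gamma g(\theta,\eta_\gamma)=O(\epsilon^{p/2})$ at once. Nothing in the step is delicate, and your version never needs Lipschitz Hessians.
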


This extends \citep[Lemma 3]{jiang2026beyond} to an approximate penalized minimizer setting, which directly links the approximate stationary point for the penalized problem to the approximate stationary point for the original problem \eqref{eq:ideal_bilevel}. We defer its proof after Theorem \ref{thm:bridge_convergence}. 

The following theorem extends \citep[Theorem 3]{jiang2026beyond} to periodic retriever updates under exact population gradients and direct gradient descent updates of the retriever parameters. This provides an idealized analysis of the central alternating update structure in BRIDGE. Together with the approximate stationary point result in Corollary \ref{cor:bridge_bilevel_stationarity}, it explains how BRIDGE can achieve approximate stationarity for the original bilevel problem, without explicitly computing hypergradients. Extending the analysis to the LoRA parameterization and empirical surrogate gradients used in our implementation is beyond the scope of this work. 

\begin{theorem}[Convergence of BRIDGE with periodic retriever updates]
\label{thm:bridge_convergence}
Under Assumptions~\ref{as:bridge_smoothness}--\ref{as:bridge_flatness}, fix $\gamma\ge\gamma_0$ and $T\ge1$. Periodicly updating the retriever $\eta$ at rounds $r\in\{0,T,2T,\ldots\}$ and the policy $\theta$ at every round gives
\begin{align}
 \eta_{r+1}&=\eta_r-\frac b\gamma\mathbf 1_{\{r\bmod T=0\}}\nabla_\eta\mathcal L_\gamma(\theta_r,\eta_r),
 \qquad \theta_{r+1}=\theta_r-a\nabla_\theta\Jrl(\theta_r,\eta_r).
 \label{eq:app_bridge_periodic_updates}
\end{align}
If we choose 
\begin{align}
b=\left(L_{\mathrm{RAG}}+\frac{L_{\mathrm{RL}}}{\gamma}\right)^{-1},
 \qquad a=\frac{b\mu^2}{64\gamma T(L_{\mathrm{RAG}}+\mu)^2}={\cal O}(T^{-1}) 
 \label{eq:app_bridge_stepsizes}
\end{align}
then for every $R\ge1$, we have 
\begin{align}
 &\frac1R\sum_{r=0}^{R-1}\left[
 \|\nabla\mathcal L_\gamma^*(\theta_r)\|^2+
 \|\nabla_\eta\mathcal L_\gamma(\theta_r,\eta_r)\|^2\right]\nonumber\\
 &\qquad\le O\!\left(
 \frac{\mathcal L_\gamma(\theta_0,\eta_0)-\gamma\Lrag^*(\theta_0)-\inf_\theta\mathcal L_\gamma^*(\theta)}{aR}
 +\gamma^{-\frac{2(p-1)}{2-p}}+\gamma\bar\delta\right).
 \label{eq:app_bridge_convergence}
\end{align}
Moreover, the lower-level objective gap satisfies
\begin{align}
 \frac1R\sum_{r=0}^{R-1}\big[\mathcal L_\gamma(\theta_r,\eta_r)-\min_\eta\mathcal L_\gamma(\theta_r,\eta)\big]
 \le\frac1{2\gamma\mu R}\sum_{r=0}^{R-1}\|\nabla_\eta\mathcal L_\gamma(\theta_r,\eta_r)\|^2.
 \label{eq:app_bridge_lower_gap}
\end{align}
The hidden constants depend on the regularity and flatness constants and $\gamma_0$, but not on $R,T$, or $\gamma$.
\end{theorem}

For fixed $\gamma$ and initialization, the optimization term in
\eqref{eq:app_bridge_convergence} is ${\cal O}(T/R)$, and periodic
retrieval updates do not introduce an additional residual term.
To balance the last two residual terms, we can choose
$\gamma\asymp\bar\delta^{-(2-p)/p}$ with $\gamma\ge\gamma_0$.
For $\bar\delta\asymp\epsilon^{p/2}$ and $0<\epsilon\le1$, this also
satisfies the penalty-size condition
$\gamma\ge\max\{\gamma_0,\gamma_\epsilon\}$, where
$\gamma_\epsilon=O(\epsilon^{-(2-p)/2})$, in
Corollary~\ref{cor:bridge_bilevel_stationarity}. Then, with $R\gtrsim T/\epsilon$ iterations,
Theorem \ref{thm:bridge_convergence} and Corollary~\ref{cor:bridge_bilevel_stationarity}
imply that the idealized BRIDGE updates yield an ${\cal O}(\epsilon^{p-1})$
stationary point of the original bilevel problem
\eqref{eq:ideal_bilevel}.

\begin{proof}
Fix a retriever-update round $r$ and write $r'=\min\{r+T,R\}$. By smoothness and \eqref{eq:app_bridge_stepsizes}, the retriever update satisfies
\begin{align}
 \mathcal L_\gamma(\theta_r,\eta_{r+1})
 &\le\mathcal L_\gamma(\theta_r,\eta_r)
 -\frac b\gamma\left(1-\frac{b(L_{\mathrm{RL}}+\gamma L_{\mathrm{RAG}})}{2\gamma}\right)
 \|\nabla_\eta\mathcal L_\gamma(\theta_r,\eta_r)\|^2\nonumber\\
 &\le\mathcal L_\gamma(\theta_r,\eta_r)-\frac b{2\gamma}\|\nabla_\eta\mathcal L_\gamma(\theta_r,\eta_r)\|^2.
 \label{eq:app_bridge_retriever_descent}
\end{align}
We next bound the error in the policy direction. Smoothness and PL imply the quadratic-growth and error-bound properties \citep[Lemma 4]{jiang2026beyond}. For a nearest RAG minimizer $\eta^*(\theta_t)$ to $\eta_\gamma^*(\theta_t)$, penalized optimality and flatness give
\begin{align}
 \frac{\gamma\mu}{2}\|\eta_\gamma^*(\theta_t)-\eta^*(\theta_t)\|^2
 \le C\|\eta_\gamma^*(\theta_t)-\eta^*(\theta_t)\|^p+\delta_t.
 \label{eq:app_bridge_minimizer_comparison}
\end{align}
Applying Young's inequality yields $\|\eta_\gamma^*(\theta_t)-\eta^*(\theta_t)\|^2
\le O(\gamma^{-2/(2-p)}+\delta_t/\gamma)$. Thus, by smoothness and the envelope identity,
\begin{align}
 \gamma^2\|\nabla_\theta\Lrag(\theta_t,\eta_\gamma^*(\theta_t))-\nabla\Lrag^*(\theta_t)\|^2
 \le O\!\left(\gamma^{-\frac{2(p-1)}{2-p}}+\gamma\delta_t\right).
 \label{eq:app_bridge_omission_error}
\end{align}
For any $\eta$, choose $\eta_\gamma^*(\theta_t)$ nearest to $\eta$. The error bound gives $\|\eta-\eta_\gamma^*(\theta_t)\|\le
(\gamma\mu)^{-1}\|\nabla_\eta\mathcal L_\gamma(\theta_t,\eta)\|$. Combining this with \eqref{eq:app_bridge_omission_error}, smoothness, and Young's inequality gives
\begin{align}
 \|\gamma[\nabla_\theta\Lrag(\theta_t,\eta)-\nabla\Lrag^*(\theta_t)]\|^2
 &\le\frac{2L_{\mathrm{RAG}}^2}{\mu^2}\|\nabla_\eta\mathcal L_\gamma(\theta_t,\eta)\|^2
 +O\!\left(\gamma^{-\frac{2(p-1)}{2-p}}+\gamma\delta_t\right).
 \label{eq:app_bridge_correction_error}
\end{align}
Since $\Lrag^*$ has smoothness constant at most $L_{\mathrm{RAG}}(1+L_{\mathrm{RAG}}/\mu)$, the centered objective $\mathcal L_\gamma-\gamma\Lrag^*$ is smooth in $\theta$ with constant at most $L_{\mathrm{RL}}+\gamma(2L_{\mathrm{RAG}}+L_{\mathrm{RAG}}^2/\mu)$. The policy gradient uses $\eta_t$ rather than $\eta_{t+1}$, and smoothness gives
\[
 \|\nabla_\theta\Jrl(\theta_t,\eta_{t+1})-\nabla_\theta\Jrl(\theta_t,\eta_t)\|
 \le\frac{bL_{\mathrm{RL}}}{\gamma}\mathbf 1_{\{t\bmod T=0\}}
 \|\nabla_\eta\mathcal L_\gamma(\theta_t,\eta_t)\|.
\]
Applying smoothness to the policy update and using $|\langle u,v\rangle|\le\|u\|^2+\|v\|^2/4$ for the omitted correction and $|\langle u,v\rangle|\le2\|u\|^2+\|v\|^2/8$ for this retriever-state discrepancy yields
\begin{align}
 &\mathcal L_\gamma(\theta_{t+1},\eta_{t+1})-\gamma\Lrag^*(\theta_{t+1})
 -\mathcal L_\gamma(\theta_t,\eta_{t+1})+\gamma\Lrag^*(\theta_t)\nonumber\\
 &\quad\le-\frac a2\|\nabla_\theta\Jrl(\theta_t,\eta_t)\|^2
 +\frac{2aL_{\mathrm{RAG}}^2}{\mu^2}\|\nabla_\eta\mathcal L_\gamma(\theta_t,\eta_{t+1})\|^2\nonumber\\
 &\qquad+\frac{2ab^2L_{\mathrm{RL}}^2}{\gamma^2}\mathbf 1_{\{t\bmod T=0\}}
 \|\nabla_\eta\mathcal L_\gamma(\theta_t,\eta_t)\|^2
 +O\!\left(a\gamma^{-\frac{2(p-1)}{2-p}}+a\gamma\delta_t\right).
 \label{eq:app_bridge_policy_descent}
\end{align}

Then, the remaining task is to control the accumulated lower-level gradients while the retriever is fixed. For $r\le t<r'$, $\eta_{t+1}=\eta_{r+1}$ and $\theta_t-\theta_r=-a\sum_{j=r}^{t-1}\nabla_\theta\Jrl(\theta_j,\eta_j)$. Also, smoothness and the retriever update give $\|\nabla_\eta\mathcal L_\gamma(\theta_r,\eta_{r+1})\|\le2\|\nabla_\eta\mathcal L_\gamma(\theta_r,\eta_r)\|$. Therefore,
\begin{align}
 \|\nabla_\eta\mathcal L_\gamma(\theta_t,\eta_{t+1})\|^2
 &\le8\|\nabla_\eta\mathcal L_\gamma(\theta_r,\eta_r)\|^2\nonumber\\
 &\quad+2(L_{\mathrm{RL}}+\gamma L_{\mathrm{RAG}})^2a^2(t-r)
 \sum_{j=r}^{t-1}\|\nabla_\theta\Jrl(\theta_j,\eta_j)\|^2.\nonumber
\end{align}
Summing over $t$ and using $r'-r\le T$ gives
\begin{align}
 \sum_{t=r}^{r'-1}\|\nabla_\eta\mathcal L_\gamma(\theta_t,\eta_{t+1})\|^2
 &\le8T\|\nabla_\eta\mathcal L_\gamma(\theta_r,\eta_r)\|^2\nonumber\\
 &\quad+(L_{\mathrm{RL}}+\gamma L_{\mathrm{RAG}})^2a^2T^2
 \sum_{t=r}^{r'-1}\|\nabla_\theta\Jrl(\theta_t,\eta_t)\|^2.
 \label{eq:app_bridge_period_drift}
\end{align}
The same estimate for $\sum_{t=r}^{r'-1}\|\nabla_\eta\mathcal L_\gamma(\theta_t,\eta_t)\|^2$ holds with $8T$ replaced by $9T$, since only the first retriever state differs.
Adding \eqref{eq:app_bridge_retriever_descent} and \eqref{eq:app_bridge_policy_descent} over the period, and substituting \eqref{eq:app_bridge_period_drift}, yields
\begin{align}
 &\mathcal L_\gamma(\theta_{r'},\eta_{r'})-\gamma\Lrag^*(\theta_{r'})
 -\mathcal L_\gamma(\theta_r,\eta_r)+\gamma\Lrag^*(\theta_r)\nonumber\\
 &\quad\le-\left(\frac b{2\gamma}-\frac{16aTL_{\mathrm{RAG}}^2}{\mu^2}-\frac{2ab^2L_{\mathrm{RL}}^2}{\gamma^2}\right)
 \|\nabla_\eta\mathcal L_\gamma(\theta_r,\eta_r)\|^2\nonumber\\
 &\qquad-a\left(\frac12-\frac{2L_{\mathrm{RAG}}^2(L_{\mathrm{RL}}+\gamma L_{\mathrm{RAG}})^2a^2T^2}{\mu^2}\right)
 \sum_{t=r}^{r'-1}\|\nabla_\theta\Jrl(\theta_t,\eta_t)\|^2\nonumber\\
 &\qquad+O\!\left(a\sum_{t=r}^{r'-1}\left[\gamma^{-\frac{2(p-1)}{2-p}}+\gamma\delta_t\right]\right)\nonumber\\
 &\quad\le-\frac b{4\gamma}\|\nabla_\eta\mathcal L_\gamma(\theta_r,\eta_r)\|^2
 -\frac a4\sum_{t=r}^{r'-1}\|\nabla_\theta\Jrl(\theta_t,\eta_t)\|^2
 +O\!\left(a\sum_{t=r}^{r'-1}\left[\gamma^{-\frac{2(p-1)}{2-p}}+\gamma\delta_t\right]\right),
 \label{eq:app_bridge_period_descent}
\end{align}
where the last inequality follows from \eqref{eq:app_bridge_stepsizes}; in particular, $bL_{\mathrm{RL}}/\gamma\le1$ implies
\[
 \frac{16aTL_{\mathrm{RAG}}^2}{\mu^2}+\frac{2ab^2L_{\mathrm{RL}}^2}{\gamma^2}
 \le\frac b\gamma\frac{16L_{\mathrm{RAG}}^2+2\mu^2/T}{64(L_{\mathrm{RAG}}+\mu)^2}
 \le\frac b{4\gamma}.
\]

Finally, the envelope identity, the error bound, and \eqref{eq:app_bridge_omission_error} imply
\begin{align}
 \|\nabla\mathcal L_\gamma^*(\theta_t)\|^2
 &\le2\|\nabla_\theta\Jrl(\theta_t,\eta_t)\|^2
 +\frac{4L_{\mathrm{RL}}^2}{\gamma^2\mu^2}\|\nabla_\eta\mathcal L_\gamma(\theta_t,\eta_t)\|^2
 +O\!\left(\gamma^{-\frac{2(p-1)}{2-p}}+\gamma\delta_t\right).
 \label{eq:app_bridge_reduced_gradient}
\end{align}
Telescope \eqref{eq:app_bridge_period_descent} over all periods. Its terminal centered objective is bounded below by $\inf_\theta\mathcal L_\gamma^*(\theta)$. By \eqref{eq:app_bridge_stepsizes}, $a\gamma T/b\le1/64$ and $(L_{\mathrm{RL}}+\gamma L_{\mathrm{RAG}})aT\le1/64$; hence \eqref{eq:app_bridge_period_drift} and its $9T$ version control the averaged lower gradients, including the rounds without retriever updates. Combining these bounds with \eqref{eq:app_bridge_reduced_gradient} and $\gamma\ge\gamma_0$ proves \eqref{eq:app_bridge_convergence}. The PL inequality \eqref{eq:app_pl} directly gives \eqref{eq:app_bridge_lower_gap}.
\end{proof}

\begin{proof}[Proof of Corollary~\ref{cor:bridge_bilevel_stationarity}]
Choose a nearest penalized minimizer,
\[
 \eta_\gamma^*(\theta)\in
 \arg\min_{\widetilde\eta\in\arg\min_\eta\mathcal L_\gamma(\theta,\eta)}
 \|\widetilde\eta-\eta_\gamma\|.
\]
By the error bound and \eqref{eq:app_pl},
$\|\eta_\gamma-\eta_\gamma^*(\theta)\|\le\epsilon/(\gamma\mu)$ and
$\mathcal L_\gamma(\theta,\eta_\gamma)-\mathcal L_\gamma(\theta,\eta_\gamma^*(\theta))\le\epsilon^2/(2\gamma\mu)$.
Smoothness and the envelope identity therefore give
\begin{align}
 \|\nabla_\theta\mathcal L_\gamma(\theta,\eta_\gamma)-\gamma\nabla\Lrag^*(\theta)\|
 \le\epsilon+\frac{L_{\mathrm{RL}}+\gamma L_{\mathrm{RAG}}}{\gamma\mu}\epsilon=O(\epsilon).
 \label{eq:app_bridge_inexact_envelope}
\end{align}
For a nearest $\eta^*\in\arg\min_\eta\Lrag(\theta,\eta)$, approximate penalized optimality and flatness imply
\begin{align}
 \frac{\gamma\mu}{2}\|\eta_\gamma-\eta^*\|^2
 \le C\|\eta_\gamma-\eta^*\|^p+\delta+\frac{\epsilon^2}{2\gamma\mu}.
 \label{eq:app_bridge_inexact_distance}
\end{align}
Applying Young's inequality gives $\|\eta_\gamma-\eta^*\|^2
\le O(\gamma^{-2/(2-p)}+\delta/\gamma+\epsilon^2/\gamma^2)$.
With the stated choices of $\gamma$ and $\delta$, this implies
$\|\eta_\gamma-\eta^*\|^2=O(\epsilon)$ and
$\gamma\|\eta_\gamma-\eta^*\|^2=O(\epsilon^{p/2})$.
Taylor expansion at $\eta_\gamma$, the Lipschitz Hessians, and $\nabla_\eta\Lrag(\theta,\eta^*)=0$ yield
\begin{align*}
 \|\nabla_\theta\Jrl+\gamma\nabla_{\theta\eta}^2\Lrag(\eta_\gamma-\eta^*)\|
 &\le O(\epsilon)+O(\gamma\|\eta_\gamma-\eta^*\|^2),\\
 \|\nabla_\eta\Jrl+\gamma\nabla_{\eta\eta}^2\Lrag(\eta_\gamma-\eta^*)\|
 &\le\epsilon+O(\gamma\|\eta_\gamma-\eta^*\|^2),
\end{align*}
where the first bound uses \eqref{eq:app_bridge_inexact_envelope} and the second uses the assumed lower-gradient bound. Their squares are $O(\epsilon^p)$, while smoothness gives
$\|\nabla_\eta\Lrag(\theta,\eta_\gamma)\|^2\le L_{\mathrm{RAG}}^2\|\eta_\gamma-\eta^*\|^2=O(\epsilon)$.
Since $p>1$, summing the three squared residuals proves the result.
\end{proof}

%% file: appendix_zero_reward.tex
\subsection{RAG provides useful learning signals under zero reward}
\label{app:zero_reward_evidence}

\noindent\textbf{Useful evidence can still lead to zero reward.}
In Table~\ref{tab:case_chedid_zero_reward}, we show an example for BRIDGE at the step where it receives zero reward, but still get useful information through the RAG score.

BRIDGE first identifies Matthieu Chedid as the performer of \textit{Je dis aime}, alongside a
distractor about Thomas Dutronc and his father, Jacques Dutronc. A second
search explicitly identifies Louis Chedid as Matthieu Chedid's father.
Yet \method{} answers incorrectly and receives reward zero although the retrieved related document is present.

\begin{caseStudy}
\caseCaption{tab:case_chedid_zero_reward}{\method{}'s failure on MuSiQue Dataset. }
\caseRule
\textbf{Question:} Who is the father of the performer of Je dis aime?\\
\textbf{Ground Truth:} Louis Chedid
\caseRule
\caseHeader{casered}{\method{} (wrong despite useful evidence)}
\begin{lstlisting}[style=trajectory]
<search> Who is the father of the performer of Je dis aime </search>
<information> Doc 1 (Title: "Je dis aime"): ... the second studio album by French singer-songwriter Matthieu Chedid, in his persona as -M- ...
Doc 2 (Title: "Je dis aime"): ... the second studio album by French singer-songwriter Matthieu Chedid, in his persona as -M- ...
Doc 3 (Title: "Thomas Dutronc"): ... his father ... Jacques Dutronc. ... </information>
<search> Matthieu Chedid father </search>
<information> Doc 1 (Title: "Matthieu Chedid"): ... the son of French singer Louis Chedid ...
Doc 2 (Title: "Louis Chedid"): ... Louis Chedid ... the father of Matthieu Chedid (better known as -M-). ...
Doc 3 (Title: "Louis Chedid"): ... Louis Chedid ... the father of Matthieu Chedid (better known as -M-). ... </information>
<answer> Jacques Dutronc </answer>
\end{lstlisting}
\textbf{RL outcome reward:} 0. 
\caseRule
\end{caseStudy}

\noindent\textbf{RAG likelihood can distinguish documents when GRPO cannot.}
At intermediate training step 100, when a retriever update is performed, 85.26\%
audited retrieval turns have zero GRPO advantage. 
For those events, retrieval probabilities $\rho_\eta\left(d \mid h_{i, t}\right)$
alone give a mean maximum of only $0.1093$ over ten
candidates, which means for a failure event, the retrieval scores are almost equally informless. However, after combining with the correct answer likelihood $p_\theta\left(y \mid h_{i, t}, d\right)$, the most favored document gets, on average, $88\%$ of the weight in terms of the answer likelihood-based credit weight $\bar{c}_{i, t, d}$ in \eqref{eq:app_retrieval_credit}, which creates a learning signal for the retrieval model to improve. 

\noindent\textbf{Why the RAG and RL loss complement each other.}
When all rollouts in a group fail, correctness-based GRPO advantages
vanish, but RAG likelihood can still distinguish documents through
\mbox{$p_\theta(y\mid h,d)$}; see \eqref{eq:app_retrieval_score_gradient}. The combined
objectives thus provide document-level learning signals even when successful
rollouts are rare, provided the current model's likelihood scores meaningfully
distinguish useful evidence. This explains the
larger gains over VT-Search with the weaker 3B model than with 7B
(Table~\ref{tab:knowledge_qa_results}): likelihood can remain informative when
correctness-based feedback is sparse for weak reasoning model.

%% file: reference/LLM.bib
@inproceedings{hendrycksmeasuring,
  title = {Measuring Massive Multitask Language Understanding},
  author = {Hendrycks, Dan and Burns, Collin and Basart, Steven and Zou, Andy and Mazeika, Mantas and Song, Dawn and Steinhardt, Jacob},
  booktitle = {Proceedings of the International Conference on Learning Representations},
  year = {2021},
  address = {virtual}
}

@inproceedings{hulora,
  title = {{LoRA}: Low-Rank Adaptation of Large Language Models},
  author = {Hu, Edward J and Shen, Yelong and Wallis, Phillip and Allen-Zhu, Zeyuan and Li, Yuanzhi and Wang, Shean and Wang, Lu and Chen, Weizhu},
  booktitle = {Proceedings of the International Conference on Learning Representations},
  year = {2022},
  address = {virtual}
}

@article{schulman2017proximal,
  title = {Proximal policy optimization algorithms},
  author = {Schulman, John and Wolski, Filip and Dhariwal, Prafulla and Radford, Alec and Klimov, Oleg},
  journal = {arXiv preprint arXiv:1707.06347},
  year = {2017}
}

@inproceedings{shen2024seal,
  title = {{SEAL}: Safety-enhanced Aligned {LLM} Fine-tuning via Bilevel Data Selection},
  author = {Shen, Han and Chen, Pin-Yu and Das, Payel and Chen, Tianyi},
  booktitle = {Proceedings of the International Conference on Learning Representations},
  address={Singapore,Singapore},
  year = {2025}
}

@article{shao2024deepseekmath,
  title={Deepseekmath: Pushing the limits of mathematical reasoning in open language models},
  author={Shao, Zhihong and Wang, Peiyi and Zhu, Qihao and Xu, Runxin and Song, Junxiao and Bi, Xiao and Zhang, Haowei and Zhang, Mingchuan and Li, YK and Wu, Yang and others},
  journal={arXiv preprint arXiv:2402.03300},
  year={2024}
}


%% file: reference/agent.bib
@inproceedings{ahn2024mathematical,
  title = {Large Language Models for Mathematical Reasoning: Progresses and Challenges},
  author = {Ahn, Janice and Verma, Rishu and Lou, Renze and Liu, Di and Zhang, Rui and Yin, Wenpeng},
  booktitle = {Proceedings of the Conference of the European Chapter of the Association for Computational Linguistics: Student Research Workshop},
  address = {St. Julian’s, Malta},
  year = {2024}
}

@inproceedings{chen2026improving,
  title = {Improving retrieval-augmented generation through multi-agent reinforcement learning},
  author = {Chen, Yiqun and Yan, Lingyong and Sun, Weiwei and Ma, Xinyu and Zhang, Yi and Wang, Shuaiqiang and Yin, Dawei and Yang, Yiming and Mao, Jiaxin},
  booktitle = {Advances in Neural Information Processing Systems},
  year = {2025},
  address = {San Diego, CA}
}

@article{chung2024scaling,
  title = {Scaling Instruction-Finetuned Language Models},
  author = {Chung, Hyung Won and Hou, Le and Longpre, Shayne and Zoph, Barret and Tay, Yi and Fedus, William and Li, Yunxuan and Wang, Xuezhi and Dehghani, Mostafa and Brahma, Siddhartha and others},
  journal = {Journal of Machine Learning Research},
  volume = {25},
  number = {70},
  pages = {1--53},
  year = {2024}
}

@inproceedings{gao2025smartrag,
  title = {{SmartRAG}: Jointly Learn {RAG}-Related Tasks From the Environment Feedback},
  author = {Gao, Jingsheng and Li, Linxu and Ji, Ke and Li, Weiyuan and Lian, Yixin and Fu, Yuzhuo and Dai, Bin},
  booktitle = {Proceedings of the International Conference on Learning Representations},
  year = {2025},
  address = {Singapore, Singapore}
}

@article{guo2025deepseekr1,
  title = {{DeepSeek-R1} incentivizes reasoning in {LLMs} through reinforcement learning},
  author = {Guo, Daya and Yang, Dejian and Zhang, Haowei and Song, Junxiao and Wang, Peiyi and Zhu, Qihao and Xu, Runxin and Zhang, Ruoyu and Ma, Shirong and Bi, Xiao and others},
  journal = {Nature},
  volume = {645},
  pages = {633--638},
  year = {2025}
}

@inproceedings{ho2020constructing,
  title = {Constructing A Multi-hop {QA} Dataset for Comprehensive Evaluation of Reasoning Steps},
  author = {Ho, Xanh and Duong Nguyen, Anh-Khoa and Sugawara, Saku and Aizawa, Akiko},
  booktitle = {Proceedings of the International Conference on Computational Linguistics},
  address={virtual},
  year = {2020}
}

@inproceedings{huang2025ragrl,
  title = {Tackling Distractor Documents in Multi-Hop {QA} with Reinforcement and Curriculum Learning},
  author = {Huang, Jerry and Madala, Siddarth and Sidhu, Risham and Niu, Cheng and Peng, Hao and Hockenmaier, Julia and Zhang, Tong},
  booktitle = {Findings of the Association for Computational Linguistics: EACL},
  year = {2026},
  address = {Rabat, Morocco}
}

@inproceedings{jiang2025s3,
  title = {{s3}: You Don't Need That Much Data to Train a Search Agent via {RL}},
  author = {Jiang, Pengcheng and Xu, Xueqiang and Lin, Jiacheng and Xiao, Jinfeng and Wang, Zifeng and Sun, Jimeng and Han, Jiawei},
  booktitle = {Proceedings of the Conference on Empirical Methods in Natural Language Processing},
  year = {2025},
  address = {Suzhou, China}
}

@article{jiang2025verltool,
  title = {{VerlTool}: Towards Holistic Agentic Reinforcement Learning with Tool Use},
  author = {Jiang, Dongfu and Lu, Yi and Li, Zhuofeng and Lyu, Zhiheng and Nie, Ping and Wang, Haozhe and Su, Alex and Chen, Hui and Zou, Kai and Du, Chao and Pang, Tianyu and Chen, Wenhu},
  journal = {Transactions on Machine Learning Research},
  year = {2026}
}

@inproceedings{jin2019pubmedqa,
  title = {{PubMedQA}: A Dataset for Biomedical Research Question Answering},
  author = {Jin, Qiao and Dhingra, Bhuwan and Liu, Zhengping and Cohen, William W. and Lu, Xinghua},
  booktitle = {Proceedings of the Conference on Empirical Methods in Natural Language Processing and the International Joint Conference on Natural Language Processing},
  address = {Hong Kong, China},
  year = {2019}
}

@article{jin2021disease,
  title = {What Disease Does This Patient Have? {A} Large-Scale Open Domain Question Answering Dataset from Medical Exams},
  author = {Jin, Di and Pan, Eileen and Oufattole, Nassim and Weng, Wei-Hung and Fang, Hanyi and Szolovits, Peter},
  journal = {Applied Sciences},
  volume = {11},
  number = {14},
  pages = {6421},
  year = {2021}
}

@article{jin2023medcpt,
  title = {{MedCPT}: Contrastive Pre-trained Transformers with large-scale {PubMed} search logs for zero-shot biomedical information retrieval},
  author = {Jin, Qiao and Kim, Won and Chen, Qingyu and Comeau, Donald C and Yeganova, Lana and Wilbur, W John and Lu, Zhiyong},
  journal = {Bioinformatics},
  volume = {39},
  number = {11},
  pages = {btad651},
  year = {2023}
}

@inproceedings{jin2025searchr1,
  title = {{Search-R1}: Training {LLMs} to Reason and Leverage Search Engines with Reinforcement Learning},
  author = {Jin, Bowen and Zeng, Hansi and Yue, Zhenrui and Yoon, Jinsung and Ar{\i}k, Sercan {\"O}. and Wang, Dong and Zamani, Hamed and Han, Jiawei},
  booktitle = {Proceedings of the Conference on Language Modeling},
  year = {2025},
  address = {Montreal, Canada}
}

@inproceedings{joshi2017triviaqa,
  title = {{TriviaQA}: A Large Scale Distantly Supervised Challenge Dataset for Reading Comprehension},
  author = {Joshi, Mandar and Choi, Eunsol and Weld, Daniel S. and Zettlemoyer, Luke},
  booktitle = {Proceedings of the Annual Meeting of the Association for Computational Linguistics},
  address = {Vancouver, Canada},
  year = {2017}
}

@inproceedings{karpukhin2020dense,
  title = {Dense passage retrieval for open-domain question answering},
  author = {Karpukhin, Vladimir and Oguz, Barlas and Min, Sewon and Lewis, Patrick and Wu, Ledell and Edunov, Sergey and Chen, Danqi and Yih, Wen-tau},
  booktitle = {Proceedings of the Conference on Empirical Methods in Natural Language Processing},
  address={virtual},
  year = {2020}
}

@article{krithara2023bioasq,
  title = {{BioASQ-QA}: A manually curated corpus for Biomedical Question Answering},
  author = {Krithara, Anastasia and Nentidis, Anastasios and Bougiatiotis, Konstantinos and Paliouras, Georgios},
  journal = {Scientific Data},
  volume = {10},
  number = {1},
  pages = {170},
  year = {2023}
}

@article{kwiatkowski2019natural,
  title = {Natural Questions: A Benchmark for Question Answering Research},
  author = {Kwiatkowski, Tom and Palomaki, Jennimaria and Redfield, Olivia and Collins, Michael and Parikh, Ankur and Alberti, Chris and Epstein, Danielle and Polosukhin, Illia and Devlin, Jacob and Lee, Kenton and others},
  journal = {Transactions of the Association for Computational Linguistics},
  volume = {7},
  pages = {453--466},
  year = {2019}
}

@inproceedings{lewis2021rag,
  title = {{Retrieval-Augmented Generation for Knowledge-Intensive NLP Tasks}},
  author = {Lewis, Patrick and Perez, Ethan and Piktus, Aleksandra and Petroni, Fabio and Karpukhin, Vladimir and Goyal, Naman and K{\"u}ttler, Heinrich and Lewis, Mike and Yih, Wen-tau and Rockt{\"a}schel, Tim and others},
  booktitle = {Advances in Neural Information Processing Systems},
  year = {2020},
  address = {virtual}
}

@inproceedings{li2025agentflow,
  title = {In-the-Flow Agentic System Optimization for Effective Planning and Tool Use},
  author = {Li, Zhuofeng and Zhang, Haoxiang and Han, Seungju and Liu, Sheng and Xie, Jianwen and Zhang, Yu and Choi, Yejin and Zou, James and Lu, Pan},
  booktitle = {Proceedings of the International Conference on Learning Representations},
  year = {2026},
  address = {Rio de Janeiro, Brazil}
}

@inproceedings{li2025searcho1,
  title = {{Search-o1}: Agentic Search-Enhanced Large Reasoning Models},
  author = {Li, Xiaoxi and Dong, Guanting and Jin, Jiajie and Zhang, Yuyao and Zhou, Yujia and Zhu, Yutao and Zhang, Peitian and Dou, Zhicheng},
  booktitle = {Proceedings of the Conference on Empirical Methods in Natural Language Processing},
  year = {2025},
  address = {Suzhou, China}
}

@article{li2026dart,
  title = {Reasoning and Tool-use Compete in Agentic {RL}: From Quantifying Interference to Disentangled Tuning},
  author = {Li, Yu and Yi, Mingyang and Li, Xiuyu and Fan, Ju and Jiang, Fuxin and Chen, Binbin and Li, Peng and Song, Jie and Zhang, Tieying},
  journal = {arXiv preprint arXiv:2602.00994},
  year = {2026}
}

@inproceedings{lin2024radit,
  title = {{RA-DIT}: Retrieval-Augmented Dual Instruction Tuning},
  author = {Lin, Xi Victoria and Chen, Xilun and Chen, Mingda and Shi, Weijia and Lomeli, Maria and James, Richard and Rodriguez, Pedro and Kahn, Jacob and Szilvasy, Gergely and Lewis, Mike and others},
  booktitle = {Proceedings of the International Conference on Learning Representations},
  year = {2024},
  address = {Vienna, Austria}
}

@inproceedings{liu2026agenticr,
  title = {{Agentic-R}: Learning to Retrieve for Agentic Search},
  author = {Liu, Wenhan and Ma, Xinyu and Zhu, Yutao and Li, Yuchen and Shi, Daiting and Yin, Dawei and Dou, Zhicheng},
  booktitle = {Findings of the Association for Computational Linguistics: ACL},
  address = {San Diego, CA},
  year = {2026},
}

@inproceedings{luo2023sail,
  title = {Search Augmented Instruction Learning},
  author = {Luo, Hongyin and Zhang, Tianhua and Chuang, Yung-Sung and Gong, Yuan and Kim, Yoon and Wu, Xixin and Meng, Helen and Glass, James},
  booktitle = {Findings of the Association for Computational Linguistics: EMNLP},
  year = {2023},
  address = {Singapore, Singapore}
}

@inproceedings{mallen2023trust,
  title = {When Not to Trust Language Models: Investigating Effectiveness of Parametric and Non-Parametric Memories},
  author = {Mallen, Alex and Asai, Akari and Zhong, Victor and Das, Rajarshi and Khashabi, Daniel and Hajishirzi, Hannaneh},
  booktitle = {Proceedings of the Annual Meeting of the Association for Computational Linguistics},
  address = {Toronto, Canada},
  year = {2023}
}

@inproceedings{pal2022medmcqa,
  title = {{MedMCQA}: A Large-scale Multi-Subject Multi-Choice Dataset for Medical domain Question Answering},
  author = {Pal, Ankit and Umapathi, Logesh Kumar and Sankarasubbu, Malaikannan},
  booktitle = {Proceedings of the Conference on Health, Inference, and Learning},
  address={virtual},
  year = {2022},
}

@inproceedings{press2023measuring,
  title = {Measuring and Narrowing the Compositionality Gap in Language Models},
  author = {Press, Ofir and Zhang, Muru and Min, Sewon and Schmidt, Ludwig and Smith, Noah A. and Lewis, Mike},
  booktitle = {Findings of the Association for Computational Linguistics: EMNLP},
  address = {Singapore, Singapore},
  year = {2023}
}

@article{qwen2024technical,
  title = {{Qwen2.5} Technical Report},
  author = {Yang, An and Yang, Baosong and Zhang, Beichen and Hui, Binyuan and Zheng, Bo and Yu, Bowen and Li, Chengyuan and Liu, Dayiheng and Huang, Fei and Wei, Haoran and others},
  journal = {arXiv preprint arXiv:2412.15115},
  year = {2024}
}

@inproceedings{schick2023toolformer,
  title = {{Toolformer}: Language Models Can Teach Themselves to Use Tools},
  author = {Schick, Timo and Dwivedi-Yu, Jane and Dessi, Roberto and Raileanu, Roberta and Lomeli, Maria and Hambro, Eric and Zettlemoyer, Luke and Cancedda, Nicola and Scialom, Thomas},
  booktitle = {Advances in Neural Information Processing Systems},
  year = {2023},
  address = {New Orleans, LA}
}

@inproceedings{shi2023replug,
  title = {{REPLUG}: Retrieval-Augmented Black-Box Language Models},
  author = {Shi, Weijia and Min, Sewon and Yasunaga, Michihiro and Seo, Minjoon and James, Richard and Lewis, Mike and Zettlemoyer, Luke and Yih, Wen-tau},
  booktitle = {Proceedings of the Conference of the North American Chapter of the Association for Computational Linguistics: Human Language Technologies},
  year = {2024},
  address = {Mexico City, Mexico}
}

@article{shi2025directrao,
  title={Direct retrieval-augmented optimization: Synergizing knowledge selection and language models},
  author={Shi, Zhengliang and Yan, Lingyong and Sun, Weiwei and Feng, Yue and Ren, Pengjie and Ma, Xinyu and Wang, Shuaiqiang and Yin, Dawei and de Rijke, Maarten and Ren, Zhaochun},
  journal={ACM Transactions on Information Systems},
  volume={44},
  number={4},
  pages={94:1--94:30},
  year={2026}
}

@article{song2025r1searcher,
  title = {{R1-Searcher}: Incentivizing the Search Capability in {LLMs} via Reinforcement Learning},
  author = {Song, Huatong and Jiang, Jinhao and Min, Yingqian and Chen, Jie and Chen, Zhipeng and Zhao, Wayne Xin and Fang, Lei and Wen, Ji-Rong},
  journal = {arXiv preprint arXiv:2503.05592},
  year = {2025}
}

@inproceedings{sun2025dynamicrag,
  title = {{DynamicRAG}: Leveraging Outputs of Large Language Model as Feedback for Dynamic Reranking in Retrieval-Augmented Generation},
  author = {Sun, Jiashuo and Zhong, Xianrui and Zhou, Sizhe and Han, Jiawei},
  booktitle = {Advances in Neural Information Processing Systems},
  year = {2025},
  address = {San Diego, CA}
}

@article{team2025kimi,
  title = {{Kimi K2}: Open Agentic Intelligence},
  author = {{Kimi Team} and Bai, Yifan and Bao, Yiping and Charles, Y. and Chen, Cheng and Chen, Guanduo and Chen, Haiting and Chen, Huarong and Chen, Jiahao and Chen, Ningxin and others},
  journal = {arXiv preprint arXiv:2507.20534},
  year = {2025}
}

@article{trivedi2022musique,
  title = {{MuSiQue}: Multihop Questions via Single-hop Question Composition},
  author = {Trivedi, Harsh and Balasubramanian, Niranjan and Khot, Tushar and Sabharwal, Ashish},
  journal = {Transactions of the Association for Computational Linguistics},
  volume = {10},
  pages = {539--554},
  year = {2022}
}

@inproceedings{trivedi2023interleaving,
  title = {Interleaving retrieval with chain-of-thought reasoning for knowledge-intensive multi-step questions},
  author = {Trivedi, Harsh and Balasubramanian, Niranjan and Khot, Tushar and Sabharwal, Ashish},
  booktitle = {Proceedings of the Annual Meeting of the Association for Computational Linguistics},
  address = {Toronto, Canada},
  year = {2023}
}

@article{tsatsaronis2015overview,
  title = {An overview of the {BIOASQ} large-scale biomedical semantic indexing and question answering competition},
  author = {Tsatsaronis, George and Balikas, Georgios and Malakasiotis, Prodromos and Partalas, Ioannis and Zschunke, Matthias and Alvers, Michael R. and Weissenborn, Dirk and Krithara, Anastasia and Petridis, Sergios and Polychronopoulos, Dimitris and others},
  journal = {BMC Bioinformatics},
  volume = {16},
  pages = {138},
  year = {2015}
}

@article{wang2022text,
  title = {Text Embeddings by Weakly-Supervised Contrastive Pre-training},
  author = {Wang, Liang and Yang, Nan and Huang, Xiaolong and Jiao, Binxing and Yang, Linjun and Jiang, Daxin and Majumder, Rangan and Wei, Furu},
  journal = {arXiv preprint arXiv:2212.03533},
  year = {2022}
}

@article{xie2026quest,
  title = {{QUEST}: Training Frontier Deep Research Agents with Fully Synthetic Tasks},
  author = {Xie, Jian and Lin, Tianhe and Wang, Zilu and Ning, Yuting and Yao, Yuekun and Xue, Tianci and Zhang, Zhehao and Li, Zhongyang and Zhang, Kai and Wu, Yufan and others},
  journal = {arXiv preprint arXiv:2605.24218},
  year = {2026}
}

@inproceedings{xiong2024benchmarking,
  title = {Benchmarking retrieval-augmented generation for medicine},
  author = {Xiong, Guangzhi and Jin, Qiao and Lu, Zhiyong and Zhang, Aidong},
  booktitle = {Findings of the Association for Computational Linguistics: ACL},
  address = {Bangkok, Thailand},
  year = {2024}
}

@inproceedings{yang2018hotpotqa,
  title = {{HotpotQA}: A Dataset for Diverse, Explainable Multi-hop Question Answering},
  author = {Yang, Zhilin and Qi, Peng and Zhang, Saizheng and Bengio, Yoshua and Cohen, William W. and Salakhutdinov, Ruslan and Manning, Christopher D.},
  booktitle = {Proceedings of the Conference on Empirical Methods in Natural Language Processing},
  address = {Brussels, Belgium},
  year = {2018}
}

@inproceedings{yao2023react,
  title = {{ReAct}: Synergizing Reasoning and Acting in Language Models},
  author = {Yao, Shunyu and Zhao, Jeffrey and Yu, Dian and Du, Nan and Shafran, Izhak and Narasimhan, Karthik and Cao, Yuan},
  booktitle = {Proceedings of the International Conference on Learning Representations},
  year = {2023},
  address = {Kigali, Rwanda}
}

@inproceedings{zhang2024raft,
  title = {{RAFT}: Adapting Language Model to Domain Specific {RAG}},
  author = {Zhang, Tianjun and Patil, Shishir G. and Jain, Naman and Shen, Sheng and Zaharia, Matei and Stoica, Ion and Gonzalez, Joseph E.},
  booktitle = {Proceedings of the Conference on Language Modeling},
  year = {2024},
  address = {Philadelphia, PA}
}

@article{zhou2025automating,
  title={Automating expert-level medical reasoning evaluation of large language models},
  author={Zhou, Shuang and Xie, Wenya and Li, Jiaxi and Zhan, Zaifu and Song, Meijia and Yang, Han and Espinoza, Cheyenna and Welton, Lindsay and Mai, Xinnie and Jin, Yanwei and others},
  journal={NPJ Digital Medicine},
  volume={9},
  number={1},
  pages={34},
  year={2025}
}

@inproceedings{ma2023query,
  title={Query rewriting in retrieval-augmented large language models},
  author={Ma, Xinbei and Gong, Yeyun and He, Pengcheng and Zhao, Hai and Duan, Nan},
  booktitle = {Proceedings of the Conference on Empirical Methods in Natural Language Processing},
  year={2023},
  address={Singapore}
}

@article{ram2023context,
  title={In-context retrieval-augmented language models},
  author={Ram, Ori and Levine, Yoav and Dalmedigos, Itay and Muhlgay, Dor and Shashua, Amnon and Leyton-Brown, Kevin and Shoham, Yoav},
  journal={Transactions of the Association for Computational Linguistics},
  volume={11},
  pages={1316--1331},
  year={2023}
}


%% file: reference/bilevel.bib
@inproceedings{chen2021closing,
  title = {Closing the Gap: Tighter Analysis of Alternating Stochastic Gradient Methods for Bilevel Problems},
  author = {Chen, Tianyi and Sun, Yuejiao and Yin, Wotao},
  booktitle = {Advances in Neural Information Processing Systems},
  year = {2021},
  address = {virtual}
}

@book{dontchev2014implicit,
  title = {Implicit Functions and Solution Mappings: A View from Variational Analysis},
  author = {Dontchev, Asen L and Rockafellar, R Tyrrell},
  edition = {Second},
  year = {2014},
  publisher = {Springer}
}

@inproceedings{ji2021bilevel,
  title = {Bilevel optimization: Convergence analysis and enhanced design},
  author = {Ji, Kaiyi and Yang, Junjie and Liang, Yingbin},
  booktitle = {Proceedings of the International Conference on Machine Learning},
  year = {2021},
  address = {virtual}
}

@inproceedings{jiang2026beyond,
  title = {Beyond value functions: Single-loop bilevel optimization under flatness conditions},
  author = {Jiang, Liuyuan and Xiao, Quan and Chen, Lisha and Chen, Tianyi},
  booktitle = {Advances in Neural Information Processing Systems},
  year = {2025},
  address = {San Diego, CA}
}

@inproceedings{kwon2023fully,
  title = {A fully first-order method for stochastic bilevel optimization},
  author = {Kwon, Jeongyeol and Kwon, Dohyun and Wright, Stephen and Nowak, Robert D},
  booktitle = {Proceedings of the International Conference on Machine Learning},
  year = {2023},
  address = {Honolulu, HI}
}

@inproceedings{kwon2023penalty,
  title = {On Penalty Methods for Nonconvex Bilevel Optimization and First-Order Stochastic Approximation},
  author = {Kwon, Jeongyeol and Kwon, Dohyun and Wright, Stephen and Nowak, Robert},
  booktitle = {Proceedings of the International Conference on Learning Representations},
  year = {2024},
  address = {Vienna, Austria}
}

@inproceedings{li2018visualizing,
  title = {Visualizing the loss landscape of neural nets},
  author = {Li, Hao and Xu, Zheng and Taylor, Gavin and Studer, Christoph and Goldstein, Tom},
  booktitle = {Advances in Neural Information Processing Systems},
  year = {2018},
  address = {Montreal, Canada}
}

@inproceedings{lorraine2020optimizing,
  title = {Optimizing millions of hyperparameters by implicit differentiation},
  author = {Lorraine, Jonathan and Vicol, Paul and Duvenaud, David},
  booktitle = {Proceedings of the International Conference on Artificial Intelligence and Statistics},
  year = {2020},
  address = {virtual}
}

@article{scellier2017equilibrium,
  title = {Equilibrium propagation: Bridging the gap between energy-based models and backpropagation},
  author = {Scellier, Benjamin and Bengio, Yoshua},
  journal = {Frontiers in Computational Neuroscience},
  volume = {11},
  pages = {24},
  year = {2017},
  publisher = {Frontiers Media SA}
}

@article{shen2023penalty,
  title = {On penalty-based bilevel gradient descent method},
  author = {Shen, Han and Xiao, Quan and Chen, Tianyi},
  journal = {Mathematical Programming},
  volume = {214},
  pages = {539--589},
  year = {2025},
  publisher = {Springer}
}

@inproceedings{xiao2025ldc,
  title     = {{LDC-MTL}: Balancing Multi-Task Learning Through Scalable Loss Discrepancy Control},
  author    = {Xiao, Peiyao and Dong, Chaosheng and Zou, Shaofeng and Ji, Kaiyi},
  booktitle = {Proceedings of the European Conference on Computer Vision},
  address={Malm{\"o}, Sweden},
  year      = {2026}
}

@article{zucchet2022beyond,
  title = {Beyond backpropagation: bilevel optimization through implicit differentiation and equilibrium propagation},
  author = {Zucchet, Nicolas and Sacramento, Jo{\~a}o},
  journal = {Neural Computation},
  volume = {34},
  number = {12},
  pages = {2309--2346},
  year = {2022}
}
